\documentclass[10pt,twoside]{article}
\usepackage[margin=1in]{geometry}






\usepackage[utf8]{inputenc} 
\usepackage[T1]{fontenc}    
\usepackage[hidelinks]{hyperref}       
\usepackage{url}            
\usepackage{booktabs}       
\usepackage{amsfonts}       
\usepackage{nicefrac}       
\usepackage{microtype}      
\usepackage{xcolor}         
\usepackage{natbib}
\usepackage{graphicx}
\usepackage{mathrsfs}
\usepackage{amsfonts, amsmath,dsfont,amssymb,amsthm,stmaryrd,bbm}
\usepackage{mathtools}
\usepackage{caption}
\usepackage{subcaption}
\usepackage{enumitem}
\usepackage{wrapfig}
\usepackage{algorithm}
\usepackage{algorithmic}
\usepackage{float}
\usepackage{xspace}

\usepackage[textsize=tiny]{todonotes}

\newcommand{\octal}{\textsc{OCTAL}\xspace}

 \newtheorem{conj}{Conjecture}
 \newtheorem{thm}{Theorem}
 \newtheorem*{thmu}{Theorem}
 \newtheorem*{lemma*}{Lemma}
 \newtheorem{rmk}{Remark}
 
 \newtheorem{lemma}[conj]{Lemma}
 \newtheorem{claim}{Claim}

 \newtheorem{defn}[conj]{Definition}
 \newtheorem{coro}{Corollary}
 \newtheorem*{coro*}{Corollary}

\newcommand{\f}[1]{\boldsymbol{#1}}
\newcommand{\bb}[1]{\mathbb{#1}}
\newcommand{\fl}[1]{\mathbf{#1}}
\newcommand{\ca}[1]{\mathcal{#1}}

\newcommand{\s}[1]{\mathsf{#1}}

\title{Online Low Rank Matrix Completion}

\author{~~~Prateek Jain\footnote{P.Jain is with Google Research at Bangalore, INDIA (email: \texttt{prajain@google.com}).} ~~~Soumyabrata~Pal\footnote{S. Pal is with Google Research at Bangalore, INDIA (email: \texttt{soumyabrata@google.com}).}}

\begin{document}

\maketitle

\begin{abstract}
  We study the problem of {\em online} low-rank matrix completion with $\s{M}$ users, $\s{N}$ items and $\s{T}$ rounds. In each round, the algorithm recommends one item per user, for which it gets a (noisy) reward sampled from a low-rank user-item preference matrix. The goal is to design a method with sub-linear regret (in $\s{T}$) and nearly optimal dependence on $\s{M}$ and $\s{N}$. The problem can be easily mapped to the standard multi-armed bandit problem where each item is an {\em independent} arm, but that leads to poor regret as the correlation between arms and users is not exploited. On the other hand, exploiting the low-rank structure of reward matrix is challenging due to non-convexity of the low-rank manifold. We first demonstrate that the low-rank structure can be exploited  using a simple  explore-then-commit (ETC) approach that ensures a regret of $O(\s{polylog} (\s{M}+\s{N}) \s{T}^{2/3})$. That is, roughly  only $\s{polylog} (\s{M}+\s{N})$ item recommendations are required per user to get a non-trivial solution. We then improve our result for the  rank-$1$ setting which in itself is quite challenging and encapsulates some of the key issues. Here, we propose  \textsc{OCTAL}
(Online Collaborative filTering using iterAtive user cLustering) that guarantees nearly optimal regret of $O(\s{polylog} (\s{M}+\s{N}) \s{T}^{1/2})$. \octal is based on a novel technique of clustering users that allows  iterative elimination of items and leads to a nearly optimal minimax rate. 
\end{abstract}

\section{Introduction}
Collaborative filtering based on low-rank matrix completion/factorization techniques are the cornerstone of most modern recommendation systems \citep{Koren:2008}. Such systems model the underlying user-item affinity matrix as a low-rank matrix, use the acquired user-item recommendation data to estimate the low-rank matrix and subsequently, use the matrix estimate to recommend items for each user. Several existing works study this {\em offline} setting  \citep{candes2009exact, Montanari12, jain2013low, chen2019noisy, abbe2020entrywise}. 
However, typical recommendation systems are naturally {\em online} and interactive -- they recommend items to users and need to adapt quickly based on users' feedback. The goal of such systems is to quickly identify each user's preferred set of items, so it is necessary to identify the best items for each user instead of estimating the entire affinity matrix. Moreover, items/users are routinely added to the system, so it should be able to quickly adapt to new items/users by using only a small amount of recommendation feedback.

In this work, we study this problem of the online recommendation system. In particular, we study the online version of low-rank matrix completion with the goal of identifying top few items for each user using say only logarithmic many exploratory recommendation rounds for each user. 
In each round (out of $\s{T}$ rounds) we predict one item (out of $\s{N}$ items) for each user (out of $\s{M}$ users) and obtain feedback/reward for each of the predictions -- e.g. did the users view the recommended movie. The goal is to design a method that has asymptotically similar reward to a method that can pick {\em best} items for each user. As mentioned earlier, we are specifically interested in the setting where $\s{T}\ll \s{N}$ i.e. the number of recommendation feedback rounds is much smaller than the total number of items. 

Moreover, we assume that the expected reward matrix is low-rank. That is, if $\fl{R}_{ij}^{(t)}$ is the reward obtained in the $t^{\s{th}}$ round for predicting item-$j$ for user-$i$, then $\bb{E}\fl{R}_{ij}^{(t)}=\fl{P}_{ij}$, where $\fl{P}\in \mathbb{R}^{\s{M}\times \s{N}}$ is a low-rank matrix. A similar low rank reward matrix setting has been studied in {\em online} multi-dimensional learning problems  \citep{katariya2017stochastic,kveton2017stochastic,trinh2020solving}. But in these problems, the goal is to find the matrix entry with the highest reward. Instead, our goal is to recommend good items to {\em all} users which is a significantly harder challenge. 
A trivial approach is to ignore the underlying low-rank structure and solve the problem using standard multi-arm bandit methods. That is, model each \textit{(user, item)} pair as an {\em arm}. Naturally, that would imply exploration of almost all the items for {\em each} user, which is also reflected in the regret bound (averaged over users) of $O(\sqrt{\s{NT}})$ (Remark \ref{rmk:begin}). That is, as expected, the regret bound is vacuous when the number of recommendation rounds $\s{T}\ll \s{N}$.

In contrast, most of the existing online learning techniques that  leverage structure amongst arms assume a parametric form for the reward function and require the reward function to be convex in the parameters  \citep{shalev2011online,bubeck2011introduction}. Thus, due to the non-convexity of the manifold of low-rank matrices, such techniques do not apply in our case. While there are  some exciting recent approaches for {\em non-convex online learning} \citep{agarwal2019learning, suggala2020online}, they do not apply to the above mentioned problem. 

\noindent \textbf{Our Techniques and Contributions:} We first present a method based on the explore-then-commit (ETC) approach (Algorithm \ref{algo:p1}). For the first few rounds, the algorithm runs a pure exploration strategy by sampling random items for each user. We use the data obtained by pure exploration to learn a good estimate of the underlying reward matrix $\fl{P}$; this result requires a slight  modification of the standard matrix completion result in \citet{chen2019noisy}. We then run the exploitation rounds based on the current estimate. In particular, for the remaining rounds, the algorithm commits to the arm with the highest estimated reward for each user. 
With the ETC algorithm, we achieve a regret bound of $O(\s{polylog} (\s{M}+\s{N}) \s{T}^{2/3})$ (Thm. \ref{thm:baseline}). This bound is able to get rid of the dependence on $\s{N}$, implying non-trivial guarantees even when $\s{T}\ll \s{N}$. That is, we require only $\s{polylog} (\s{M}+\s{N})$ exploratory recommendations per user. However, the dependence of the algorithm on $\s{T}$ is sub-optimal. 
To address this, we study the special but critical case of rank-one reward matrix. The rank-$1$ setting is itself technically challenging  \citep{katariya2017stochastic} and encapsulates many key issues.
We provide a  novel  algorithm \textsc{OCTAL}
 in Algorithm \ref{algo:phased_elim} and a modified version in Algorithm \ref{algo:phased_elim2} that achieves nearly optimal regret bound of $O(\s{polylog} (\s{M}+\s{N}) \s{T}^{1/2})$ (see Theorems \ref{thm:main1} and \ref{thm:main2}). The key insight is that in rank-one case, we need to cluster users based on their true latent representation to ensure low regret.

Our method OCTAL consists of multiple phases of exponentially increasing number of rounds.  Each phase refines the current estimate of relevant sub-matrices of the reward matrix using standard matrix completion techniques.  
Using the latest estimate, we jointly refine the cluster of users and the estimate of the best items for users in each cluster. We can show that the regret in each phase decreases very quickly and since the count of phases required to get the correct clustering is small, we obtain our desired regret bounds. Finally, we show that our method achieves a regret guarantee that scales as $\widetilde{O}(\s{T}^{1/2})$ with the number of rounds $\s{T}$. We also show that the dependence on $\s{T}$ is optimal (see Theorem \ref{thm:lb}). Below we summarize our main contributions ($\widetilde{O}(\cdot)$ hides logarithmic factors): 
\begin{itemize}[leftmargin=*,noitemsep, nolistsep]
    \item We formulate the online low rank matrix completion problem and define the appropriate notion of regret to study.
     We propose  Algorithm \ref{algo:p1} based on the explore-then-commit approach that suffers a regret of $\widetilde{O}(\s{T}^{2/3})$. (Thm. \ref{thm:baseline}) 
    \item We propose a novel algorithm \textsc{OCTAL}
(Online Collaborative filTering using iterAtive user cLustering) and a modified version (Alg. \ref{algo:phased_elim} and Alg. \ref{algo:phased_elim2} respectively) for the special case of rank-$1$ reward matrices and guarantee a regret of  $\widetilde{O}(\s{T}^{1/2})$. Importantly, our algorithms  provide non-trivial regret guarantees in the practical regime of $\s{M}\gg\s{T}$ and $\s{N}\gg\s{T}$ i.e when the number of users and items is much larger than the number of recommendation rounds. Moreover, OCTAL does not suffer an issue of large cold-start time (possibly large exploration period) as in the ETC algorithm.
\item We conducted detailed empirical study of our proposed algorithms (see Appendix \ref{sec:sims}) on synthetic and multiple real datasets, and demonstrate that our algorithms can achieve significantly lower regret than methods that do not use collaboration between users. Furthermore, we show that for rank-$1$ case, while it is critical to tune the exploration period in ETC (as a function of rounds and sub-optimality gaps) and is difficult in practice \citep{lattimore2020bandit}[Ch. 6], OCTAL still suffers lower regret without such side-information (see Figures~\ref{fig:average_regret_gap1} and \ref{fig:average_regret_exploration}).


\end{itemize}

\noindent \textbf{Technical Challenges:}  
For rank-$1$ case, i.e., $\fl{P}=\fl{uv}^{\s{T}}$, one can cluster users in two bins: ones with $\fl{u}_i\geq 0$ and ones with $\fl{u}_i<0$. Cluster-$2$ users dislike the best items for Cluster-$1$ users and vice-versa. Thus, we require algorithms that can learn this cluster structure and exploit the fact that within the same cluster, relative ranking of all items remain the same. This is a challenging information to exploit, and requires learning the latent structure.  Note that an algorithm that first attempts to cluster all users  and then optimize the regret will suffer the same $\s{T}^{2/3}$ dependence in the worst case as in the ETC algorithm; this is because the difficulty of clustering the users are widely different. Instead, our proposed  \textsc{OCTAL} algorithm (Algorithms \ref{algo:phased_elim} and \ref{algo:phased_elim2}) in each iteration/phase performs two tasks: i) it tries to   eliminate some of the items similar to standard phase elimination method \citep{lattimore2020bandit}[Ch 6, Ex 6.8] for users that are already clustered, ii) it simultaneously tries to grow the set of clustered users. For partial clustering, we first apply low rank matrix completion guarantees over carefully constructed reward sub-matrices each of which correspond to a cluster of users and a set of active items that have high rewards for all users in the same cluster, and then use the partial reward matrix for eliminating some of the items in each cluster.

\subsection{Related Works}

To the best of our knowledge, we provide first rigorous  online matrix completion algorithms. But, there are several closely related results/techniques in the literature which we briefly survey below. 

A very similar setting was considered in \citet{sen2017contextual} where the authors considered a multi-armed bandit problem with $\s{L}$ contexts and $\s{K}$ arms with context dependent reward distributions. The authors assumed that the $\s{L}\times\s{K}$ reward matrix is low rank and can be factorized into non-negative components which allowed them to use recovery guarantees from non-negative matrix factorization. Moreover, the authors only showed ETC algorithms that resulted in $\s{T}^{2/3}$ regret guarantees. Our techniques can be used to improve upon the existing guarantees in \citet{sen2017contextual} in two ways 1) Removing the assumption of the low rank components being non-negative as we use matrix completion with entry-wise error guarantees. 2) The dependence on $\s{T}$ can be improved from $\s{T}^{2/3}$ to $\s{T}^{1/2}$ when the reward matrix $\fl{P}$ is rank-$1$.

Multi-dimensional online decision making problems namely stochastic rank-1 matrix bandits was introduced in \citet{katariya2017stochastic,katariya2017bernoulli,trinh2020solving}. In their settings, at each round $t\in [\s{T}]$, the learning agent can choose  one row and one column and observes a reward corresponding to an entry of a rank-$1$ matrix. Here, the regret is defined in terms of the best \textit{(row ,column)} pair which corresponds to the best arm. This setting was extended to the rank $r$ setting \citep{kveton2017stochastic}, rank $1$ multi-dimensional tensors \citep{hao2020low}, bilinear bandits \citep{jun2019bilinear,huang2021optimal} and generalized linear bandits \citep{lu2021low}. Although these papers provide tight regret guarantees, they cannot be translated to our problem. This is because, we solve a significantly different problem with an underlying rank-$1$ reward matrix $\fl{P}$ where we need to minimize the regret for all users (rows of $\fl{P}$) jointly. Hence, it is essential to find the entries (columns) of $\fl{P}$ with large rewards for each user(row) of $\fl{P}$; contrast this with the multi-dimensional online learning problem where it is sufficient to infer only the entry ((row,column) pair) in the matrix/tensor with the highest reward. Since the rewards for each user have different gaps, the analysis becomes involved for our OCTAL algorithm. Finally, \citep{dadkhahi2018alternating,zhou2020stochastic} also consider our problem setting but they only provide heuristic algorithms without any theoretical guarantees.


Another closely related line of work is the theoretical model for User-based Collaborative Filtering (CF) studied in \citet{Bresler:2014,Bresler:2016,Heckel:2017,Mina2019,huleihel2021learning}. In particular, these papers were the first to motivate and theoretically analyze the collaborative framework with the restriction that the same item cannot be recommended more than once to the same user. 
Here a significantly stricter cluster structure assumption is made over users where users in same cluster have similar preferences. 
Such models are restrictive as they provide theoretical guarantees only on a very relaxed notion of regret (termed \textit{pseudo-regret}). 


In the past decade, several papers have studied the problem of  {\em offline} low rank matrix completion on its own    \citep{mazumder2010spectral,negahban2012restricted,chen2019noisy,Montanari12,abbe2020entrywise,jain2013low,jain2017non} and also in the presence of side information such as social graphs or similarity graphs \citep{xu2013speedup,ahn2018binary,ahn2021fundamental,elmahdy2020matrix,jo2021discrete,zhang2022mc2g}.
Some of these results namely the ones that provide $\|\cdot\|_{\infty}$ norm guarantees on the estimated matrix can be adapted into  Explore-Then-Commit (ETC) style algorithms (see Sec. \ref{sec:warmup}). 
Finally, there is significant amount of related theoretical work for online non-convex learning \citep{suggala2020online, yang2018optimal, huang2020online} and  empirical work for online Collaborative Filtering \citep{huang2020online,lu2013second,zhang2015simple} but they do not study the regret in online matrix completion setting.



\section{Problem Definition}\label{subsec:prob_defn}

\paragraph{Notations:} We write $[m]$ to denote the set $\{1,2,\dots,m\}$. For a vector $\fl{v}\in \bb{R}^m$, $\fl{v}_i$ denotes the $i^{\s{th}}$ element; for any set $\ca{U}\subseteq [m]$, let $\fl{v}_\ca{U}$ denote the vector $\fl{v}$ restricted to the indices in $\ca{U}$. $\fl{A}_i$ denotes the $i^{\s{th}}$ row of $\fl{A}$ and
$\fl{A}_{ij}$  denotes the $(i,j)$-th element of matrix $\fl{A}$. $[n]$ denotes the set $\{1,2,\dots,n\}$. For any set $\ca{U}\subset [m],\ca{V}\subset [n]$,  $\fl{A}_{\ca{U},\ca{V}}$ denotes the matrix $\fl{A}$ restricted to the rows in $\ca{U}$ and columns in $\ca{V}$. Also, let $\|\fl{A}\|_{2\rightarrow \infty}$ be the maximum $\ell_2$ norm of the rows of  $\fl{A}$ and $\|\fl{A}\|_{\infty}$ be the absolute value of the largest entry in $\fl{A}$. We write $\bb{E}X$ to denote the expectation of a random variable $X$.


Consider a system with a set of $\s{M}$ users and $\s{N}$ items. Let $\fl{P}=\fl{U}\fl{V}^{\s{T}}\in \bb{R}^{\s{M}\times \s{N}}$ be the unknown reward matrix of rank $r<\min(\s{M},\s{N})$ where $\fl{U}\in \bb{R}^{\s{M}\times r}$ and $\fl{V}\in \bb{R}^{\s{N}\times r}$ denote the latent embeddings corresponding to users and items respectively. In other words, we can denote $\fl{P}_{ij} \triangleq  \langle \fl{u}_i, \fl{v}_j \rangle$ where $\fl{u}_i, \fl{v}_j \in \bb{R}^r$ denotes the $r$-dimensional embeddings of $i$-th user and the $j$-th item, respectively. Often, we will also use the SVD decomposition of $\fl{P}=\fl{\bar{U}}\Sigma\fl{\bar{V}}$ where $\bar{\fl{U}}\in \bb{R}^{\s{M}\times r},\bar{\fl{V}}\in \bb{R}^{\s{N}\times r}$ are orthonormal matrices i.e. $\fl{\bar{U}}^{\s{T}}\fl{\bar{U}}=\fl{I}$ and $\fl{\bar{V}}^{\s{T}}\fl{\bar{V}}=\fl{I}$  
and $\f{\Sigma}\triangleq \s{diag}(\lambda_1,\lambda_2,\dots,\lambda_r) \in \bb{R}^{r\times r}$ is a diagonal matrix. We will denote the condition number of the matrix $\fl{P}$ by $\kappa\triangleq (\max_i \lambda_i)(\min_i \lambda_i)^{-1}$.


 Consider a system  that recommends one item to every user, in each round $t\in [\s{T}]$. Let, $\fl{R}^{(t)}_{u\rho_u(t)}$ be the reward for recommending item $\rho_u(t)\in [\s{N}]$ for user $u$. Also, let: \vspace*{-3pt}
\begin{align}\label{eq:obs}
    \fl{R}^{(t)}_{u\rho_u(t)} = \fl{P}_{u\rho_u(t)}+\fl{E}^{(t)}_{u\rho_u(t)}\vspace*{-3pt}
\end{align}
where $\fl{E}^{(t)}_{u\rho_u(t)}$ denotes the unbiased additive noise. Each element of  $\{\fl{E}^{(t)}_{u\rho_u(t)}\}_{\substack{u\in [\s{M}], t \in [\s{T}]}}$ is assumed to be i.i.d. zero mean sub-gaussian random variables with variance proxy $\sigma^2$. That is, $\bb{E}[\fl{E}^{(t)}_{u\rho_u(t)}]=0$ and  $\bb{E}[\exp(s\fl{E}^{(t)}_{u\rho_u(t)}) ]\le \exp(\sigma^2s^2/2)$ for all $u\in [\s{M}], t\in [\s{T}]$. The goal is to minimize the expected regret where the expectation is over randomness in rewards and the algorithm: 
\begin{align}
\s{Reg}(\s{T})\triangleq \frac{\s{T}}{\s{M}}\sum_{u \in [\s{M}]}\max_{j \in [\s{N}]} \fl{P}_{uj}- \bb{E}[\sum_{t\in[\s{T}]}\frac{1}{\s{M}}\sum_{u\in[\s{M}]}\mathbf{R}_{u\rho_u(t)}^{(t)}].\label{eqn:regretExpect}
\end{align}

In this problem, the interesting regime is  $(\s{N},\s{M})\gg \s{T}$ as is often the case for most  practical recommendation systems. Here, treating each user separately will lead to vacuous regret bounds as each item needs to be observed at least once by each user to find the best item for each user. However, low-rank structure of the rewards can help share information about items across users.  

\begin{rmk}\label{rmk:begin}
If $\s{T}\gg \s{N}$, then we can treat each user as a separate multi-armed bandit problem. In that case, in our setting, the well-studied Upper Confidence Bound (UCB) algorithm achieves an expected regret of at most $O(\sigma\sqrt{\s{NT}\log \s{T}})$ (Theorem 2.1 in \cite{bubeck2012regret}). 
\end{rmk}

\section{Preliminaries}\label{sec:Preliminaries}
\begin{algorithm}[!t]
\caption{\textsc{Estimate}   \label{algo:estimate}}
\begin{algorithmic}[1]
\REQUIRE Set of users $\ca{U}\subseteq[\s{M}]$, set of items $\ca{V}\subseteq[\s{N}]$, total  rounds $m$, set of indices $\Omega\subseteq \ca{U}\times \ca{V}$, rounds in each iteration $b=\max_{u \in \ca{U}} |v \in \ca{V}\mid (u,v) \in \Omega|$,  regularization parameter $\lambda$. Index of round $t$ is relative to the first round when the algorithm is invoked; hence $t=1,2,\dots,m$.

\FOR{$\ell=1,2,\dots,m/b$} 
\STATE For all $(i,j)\in \Omega$, set $\s{Mask}_{ij}=0$.
\FOR{$\ell'=1,2,\dots,b$}
\FOR{each user $u\in \ca{U}$ in round $t=(\ell-1)b+\ell'$}
\STATE Recommend an item $\rho_u(t)$ in $\{j \in \ca{V}\mid (u,j)\in \Omega, \s{Mask}_{uj}=0\}$ and set $\s{Mask}_{u\rho_u(t)}=1$. 
If not possible then recommend any item $\rho_u(t)$ in $\ca{V}$ s.t. $(u,\rho_u(t))\not\in\Omega$.
Observe $\fl{R}^{(t)}_{u\rho_u(t)}$.
\ENDFOR
\ENDFOR
\ENDFOR

\STATE For each $(u,j)\in \Omega$, compute $\fl{Z}_{uj}$ to be average of $\lfloor m/b \rfloor$ observations corresponding to user $u$ being recommended item $j$ i.e. $\fl{Z}_{uj} = \text{avg}\{\fl{R}^{(t)}_{u\rho_u(t)} \text{ for }t\in [m]\mid \rho_u(t)=j\}$. 
Discard all other observations corresponding to indices not in $\Omega$.

\STATE Without loss of generality, assume $|\ca{U}| \le |\ca{V}|$. For each $i\in \ca{V}$, independently set $\delta_i$ to be a value in the set $[\lceil|\ca{V}|/|\ca{U}|\rceil]$ uniformly at random. Partition indices in $\ca{V}$ into $\ca{V}^{(1)},\ca{V}^{(2)},\dots,\ca{V}^{(k)}$ where $k=\lceil|\ca{V}|/|\ca{U}|\rceil$ and $\ca{V}^{(q)}=\{i\in \ca{V}\mid \delta_i =q\}$ for each $q\in [k]$. Set $\Omega^{(q)}\leftarrow \Omega \cap (\ca{U}\times \ca{V}^{(q)})$ for all $q\in [k]$. \#\textit{If $|\ca{U}| \ge |\ca{V}|$, we partition the indices in $\ca{U}$}. 

\FOR{$q\in [k]$}
\STATE Solve convex program 
\vspace*{-15pt}
\begin{align}\label{eq:convex}
\tiny
    \min_{\fl{Q}^{(q)}\in \bb{R}^{|\ca{U}|\times|\ca{V}^{(q)}|}} \frac{1}{2}\sum_{(i,j)\in \Omega^{(q)}}\Big(\fl{Q}^{(q)}_{i\pi(j)}-\fl{Z}_{ij}\Big)^2+\lambda\|\fl{Q}^{(q)}\|_{\star},
\vspace*{-15pt}    
\end{align}
where $\|\fl{Q}^{(q)}\|_{\star}$ denotes nuclear norm of matrix $\fl{Q}^{(q)}$ and $\pi(j)$ is index of $j$ in set $\ca{V}^{(q)}$. 
\ENDFOR
\STATE Return  $\widetilde{\fl{Q}}\in \bb{R}^{\s{M}\times \s{N}}$ s.t. $\widetilde{\fl{Q}}_{\ca{U},\ca{V}^{(q)}}=\fl{Q}^{(q)}$ for all $q\in [k]$ and for every  $(i,j)\not \in \ca{U}\times \ca{V}$, $\widetilde{\fl{Q}}_{ij}=0$.
\end{algorithmic}
\end{algorithm}
Let us introduce a different observation model from (\ref{eq:obs}).
Consider an unknown rank $r$ matrix $\fl{P} \in \bb{R}^{\s{M}\times \s{N}}$.  For each entry $i\in [\s{M}],j\in [\s{N}]$, we  observe:  
\begin{align}\label{eq:obs_bernoulli}
    &\fl{P}_{ij}+\fl{E}_{ij} \text{ with probability } p,\quad 0 \text{ with probability } 1-p,
\end{align}
where  $\fl{E}_{ij}$ are  independent zero mean sub-gaussian random variables with variance proxy $\sigma^2>0$. 
We now introduce the following result from \cite{chen2019noisy}: 


\begin{lemma}[Theorem 1 in \cite{chen2019noisy}]\label{lem:source}
 Let rank $r=O(1)$ matrix $\fl{P}\in \bb{R}^{d \times d}$ with SVD decomposition $\fl{P}=\fl{\bar{U}}\f{\Sigma}\fl{\bar{V}}^{\s{T}}$ satisfy  $\|\fl{\bar{U}}\|_{2,\infty}\le \sqrt{\mu r/d}, \|\fl{\bar{V}}\|_{2,\infty}\le \sqrt{\mu r/d}$ and condition number $\kappa = O(1)$.
 Let $1\ge p \ge C \mu^2 d^{-1} \log^3 d$ for some sufficiently large constant $C>0$, $\sigma = O\Big(\sqrt{\frac{pd}{\mu^3\log d}}\|\fl{P}\|_{\infty}\Big)$. Suppose we observe noisy entries of $\fl{P}$ according to observation model in (\ref{eq:obs_bernoulli}). Then, with probability exceeding $ 1-O(d^{-3})$, 
 we can compute a 
  matrix $\widehat{\fl{P}}$ by using Algorithm \ref{algo:estimate_offline_1} (Appendix \ref{app:preliminaries}) with parameters ($\ca{U}=[\s{M}],\ca{V}=[\s{N}],\sigma^2,r,p$) s.t., 
  \begin{align}\label{eq:guarantee2}
  \small
    \|\widehat{\fl{P}}-\fl{P}\|_{\infty} \le O\Big(\frac{\sigma}{\min_i\lambda_{i}}\cdot \sqrt{\frac{\mu d\log d}{p}} \|\fl{P}\|_{\infty}\Big).
\vspace*{-30pt}    
  \end{align}
\end{lemma}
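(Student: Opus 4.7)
The plan is to follow the leave-one-out analysis pipeline that is standard for entrywise guarantees in noisy matrix completion. The estimator $\widehat{\fl{P}}$ is taken to be the minimizer of the nuclear-norm-regularized least squares program
\begin{align*}
\widehat{\fl{P}} \in \arg\min_{\fl{Q}\in \bb{R}^{d\times d}} \tfrac{1}{2}\sum_{(i,j)\in \Omega}\bigl(\fl{Q}_{ij}-\fl{Y}_{ij}\bigr)^2 + \lambda \|\fl{Q}\|_{\star},
\end{align*}
with $\fl{Y}_{ij}$ the (noisy) observed entry, $\Omega$ the random observation set with density $p$, and $\lambda$ chosen on the order of $\sigma\sqrt{pd}$. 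The global strategy is to (i) establish a coarse Frobenius/operator-norm bound on $\widehat{\fl{P}} - \fl{P}$ via standard restricted strong convexity / dual certificate arguments for Bernoulli sampling, (ii) establish that $\widehat{\fl{P}}$ itself is approximately incoherent by leveraging the structure of the first-order optimality conditions, and then (iii) upgrade the bound from Frobenius to $\|\cdot\|_\infty$ by a leave-one-out construction.

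The first step is fairly routine: under the incoherence hypothesis on $\fl{P}$ and the sample complexity $p\gtrsim \mu^2 d^{-1}\log^3 d$, one obtains with high probability a bound of the form $\|\widehat{\fl{P}}-\fl{P}\|_{\m{F}} \lesssim \frac{\sigma}{\min_i \lambda_i}\sqrt{d/p}\cdot \|\fl{P}\|_{\infty}$ (up to log factors) using Bernstein-type concentration for $\sigma$-sub-Gaussian noise and the matrix restricted isometry / injectivity on the tangent space of $\fl{P}$. This step also yields a spectral-norm bound via the standard duality argument with an approximate dual certificate constructed by the golfing scheme. The Frobenius bound is not by itself enough for the entrywise conclusion, which is where the leave-one-out step enters.

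The heart of the argument, and the main technical obstacle, is the leave-one-out construction. For each row index $m\in[d]$ one introduces an auxiliary estimator $\widehat{\fl{P}}^{(m)}$ obtained by solving the same convex program but with the $m$-th row of observed residuals replaced by their population counterparts (i.e.\ noiseless entries of $\fl{P}$ on that row, with Bernoulli sampling resampled independently). Two facts must be proved: first, $\widehat{\fl{P}}^{(m)}$ is statistically independent of the randomness in the $m$-th row of $\Omega$ and of the noise in that row, so one can apply standard concentration to bound the row-wise error $\|(\widehat{\fl{P}}^{(m)}-\fl{P})_{m,\cdot}\|_\infty$; second, $\widehat{\fl{P}}^{(m)}$ is close to $\widehat{\fl{P}}$ in Frobenius norm, which is shown by strong convexity of the regularized objective on the tangent space plus the fact that the two programs differ only in an $O(d)$-sized perturbation of the data. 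Combining these, the triangle inequality transfers the desired row-wise $\ell_\infty$ bound from $\widehat{\fl{P}}^{(m)}$ to $\widehat{\fl{P}}$, and taking a union bound over $m\in[d]$ yields the stated entrywise guarantee \eqref{eq:guarantee2}.

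The main difficulties lie in (a) maintaining approximate incoherence of $\widehat{\fl{P}}^{(m)}$ throughout the argument so that the subspace-perturbation bounds (Wedin / Davis--Kahan type) stay sharp, and (b) balancing $\lambda$ against $\sigma$ so that neither the bias from regularization nor the variance from noise dominates. Since the condition number $\kappa$ and rank $r$ are $O(1)$, these cancel out of the leading constants, leaving the clean form $\|\widehat{\fl{P}}-\fl{P}\|_\infty \lesssim \tfrac{\sigma}{\min_i\lambda_i}\sqrt{\mu d\log d/p}\cdot \|\fl{P}\|_\infty$. The high-probability bound $1-O(d^{-c})$ is obtained by taking the union of the good events in the Frobenius-bound step, the leave-one-out Frobenius closeness, and the row-wise concentration over all $d$ choices of $m$.
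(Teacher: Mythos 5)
You should first note that the paper does not prove this statement at all: it is imported verbatim as Theorem 1 of \cite{chen2019noisy}, so the only meaningful comparison is with the proof in that reference. Your sketch captures the right general shape of how entrywise guarantees are obtained (coarse Frobenius/spectral control, a leave-one-out construction to decouple row $m$ from the randomness in row $m$, then a union bound), but it diverges from the actual argument at the one point that carries all the difficulty. Chen et al.\ do \emph{not} run the leave-one-out analysis directly on the nuclear-norm-regularized convex program. Their central structural result is that the convex minimizer is, with high probability, essentially identical to a rank-$r$ factored point produced by (regularized) gradient descent on the nonconvex Burer--Monteiro objective, and the entire leave-one-out machinery — the auxiliary estimators, the inductive preservation of incoherence, the Wedin-type subspace perturbation bounds — is carried out on the \emph{nonconvex iterates}, where one has an explicit update rule to perturb. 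The entrywise bound for $\widehat{\fl{P}}$ is then inherited through this convex--nonconvex equivalence.

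The step you describe as "establish that $\widehat{\fl{P}}$ itself is approximately incoherent by leveraging the structure of the first-order optimality conditions" and the claim that $\widehat{\fl{P}}^{(m)}$ is close to $\widehat{\fl{P}}$ "by strong convexity of the regularized objective on the tangent space" are exactly where a direct attack on the convex program breaks down: the objective is only strongly convex restricted to a tangent space that is itself data-dependent and only approximately known, the nuclear-norm subdifferential does not interact cleanly with a single-row perturbation of the data, and there is no a priori reason the minimizer of the perturbed program stays incoherent. These are not routine gaps one can wave through; resolving them is the entire contribution of the cited paper, and the resolution is the detour through the nonconvex formulation. So as a blind reconstruction your proposal identifies the correct ingredients but asserts, rather than supplies, the key mechanism that makes the leave-one-out step legitimate for the convex estimator.
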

Note there are several difficulties in using Lemma \ref{lem:source}  directly in our setting which are discussed below:
\begin{rmk}[Matrix Completion for Rectangular Matrices]\label{rmk:undesirable}
Lemma \ref{lem:source} is described for square matrices and a trivial approach to use Lemma \ref{lem:source} for rectangular matrices with $\s{M}$ rows and $\s{N}$ columns (say $\s{N}\ge \s{M}$) by appending $\s{N}-\s{M}$ zero rows leads to an undesirable $(\s{N}/\s{M})^{1/2}$ factor  (Lemma \ref{thm:randomsample2}) in the error bound (the $(\s{N}/\s{M})^{1/2}$ factor does not arise if we care about spectral/Frobenius norm instead of $\s{L}_{\infty}$ norm). One way to resolve the issue is to partition the columns into $\s{N}/\s{M}$ groups by assigning each column into one of the groups uniformly at random. Thus, we create $\s{N}/\s{M}$ matrices which are almost square and apply Lemma \ref{lem:source} to recover an estimate that is close in $\s{L}_{\infty}$ norm. Thus we can recover an estimate of the entire matrix which is close in $\s{L}_{\infty}$ norm up to the desired accuracy without suffering the undesirable $(\s{N}/\s{M})^{1/2}$ factor (Lemma \ref{thm:randomsample3} and Steps 10-12 in Algorithm \ref{algo:estimate}). 
\end{rmk}

\begin{rmk}[Observation Models]\label{rmk:observation}
The observation model in  \eqref{eq:obs_bernoulli} is significantly different from  \eqref{eq:obs}. In the former, a noisy version of each element of $\fl{P}$ is observed independently with probability $p$ while in the latter, in each round $t\in [\s{T}]$, for each user $u\in [\s{M}]$, we observe noisy version of a chosen element $\rho_u(t)$. Our approach to resolve this discrepancy theoretically is to first sample a set $\Omega$ of indices according to  \eqref{eq:obs_bernoulli} and subsequently use  \eqref{eq:obs} to observe the indices in $\Omega$ (see Steps 3-6 in Algorithm \ref{algo:estimate} and Corollary \ref{coro:obs3}). Of course, this implies obtaining observations corresponding to indices in a super-set of $\Omega$ (see Step 5 in Algorithm \ref{algo:estimate}) and only using the observations in $\Omega$ for obtaining an estimate of the underlying matrix. In practice, this is not necessary and we can use all the observed indices to obtain the estimate in Step 12 of Algorithm \ref{algo:estimate}.  
\end{rmk}

\begin{rmk}[Repetition and Median Tricks]\label{rmk:median}
The smallest error that is possible to achieve by using Lemma \ref{lem:source} is by substituting $p=1$ and thereby obtaining $\|\widehat{\fl{P}}-\fl{P}\|_{\infty} \le O\Big(\sigma(\min_i\lambda_{i})^{-1}\cdot \sqrt{\mu d\log d} \|\fl{P}\|_{\infty}\Big)$ and moreover, the probability of failure is polynomially small in the dimension $d$; however, this is insufficient when $d$ is not large enough. Two simple tricks allow us to resolve this issue: 1) First we can obtain repeated observations from the same entry of the reward matrix and take its average; $s$ repetitions can bring down the noise variance to $\sigma^2/s$ 2) Second, we can use the median trick where we obtain several independent estimates of the reward matrix and compute the element-wise median to boost the success probability (see proof of Lemma \ref{lem:min_acc}).
\end{rmk}
 
We address all these issues (see Appendix \ref{app:preliminaries} for detailed proofs) and arrive at the following lemma: 

\begin{lemma}\label{lem:min_acc}
Let rank $r=O(1)$ reward matrix $\fl{P}\in \bb{R}^{\s{M} \times \s{N}}$ with SVD decomposition $\fl{P}=\fl{\bar{U}}\f{\Sigma}\fl{\bar{V}}^{\s{T}}$ satisfy  $\|\fl{\bar{U}}\|_{2,\infty}\le \sqrt{\mu r/\s{M}}, \|\fl{\bar{V}}\|_{2,\infty}\le \sqrt{\mu r/\s{N}}$ and condition number $\kappa = O(1)$. Let $d_1=\max(\s{M},\s{N})$, $d_2=\min(\s{M},\s{N})$  such that $d_2=\Omega(\mu r \log (rd_2))$ and  
 $1 \ge p \ge C\mu^2d_2^{-1} \log^3 d_2$ for sufficiently large constant $C>0$. 
 Suppose we observe noisy entries of $\fl{P}$ according to observation model in (\ref{eq:obs}).
For any positive integer $s>0$ satisfying $\frac{\sigma}{\sqrt{s}}=O\Big(\sqrt{\frac{pd_2}{\mu^3\log d_2}}\|\fl{P}\|_{\infty}\Big)$, there exists an algorithm $\ca{A}$ with parameters $s,p,\sigma$ that uses $m=O\Big(s\log (\s{MN}\delta^{-1})(\s{N}p+\sqrt{\s{N}p\log \s{M}\delta^{-1}})\Big)$ rounds to compute a matrix $\widehat{\fl{P}}$ such that with probability exceeding $ 1-O(\delta\log (\s{MN}\delta^{-1}))$  
\begin{align}\label{eq:final}
   \| \fl{P} - \widehat{\fl{P}}\|_{\infty} \le O\Big(\frac{\sigma r }{\sqrt{sd_2}}\sqrt{\frac{\mu^3\log d_2}{p}}\Big).
\end{align}

\end{lemma}

\begin{rmk}
Alg. $\ca{A}$ repeats the following process $O(\log (\s{MN}\delta^{-1}))$ times: 1) sample subset of indices $\Omega\subseteq [\s{M}]\times[\s{N}]$ such that every $(i,j)\in [\s{M}]\times[\s{N}]$ is inserted into $\Omega$ independently with probability $p$. 2) By setting $b=\max_{i\in [\s{M}]}\left|j \in [\s{N}] \mid (i,j)\in \Omega \right|$, Algorithm $\ca{A}$ invokes Alg. \ref{algo:estimate} with total rounds $bs$, number of rounds in each iteration $b$, set $\Omega$, set of users $[\s{M}]$, items $[\s{N}]$ and regularization parameter $\lambda=C_{\lambda}\sigma\sqrt{\min(\s{M},\s{N})p}$ for a suitable constant $C_{\lambda}>0$ in order to compute an estimate of $\fl{P}$. The final estimate $\widehat{\fl{P}}$ is computed by taking an entry-wise median of each individual estimate obtained as output from several invocations of Alg. \ref{algo:estimate}. Alternatively, Alg. $\ca{A}$ is detailed in Alg. \ref{algo:estimate_2} in Appendix \ref{app:preliminaries}.
\end{rmk}

Note that the total number of noisy observations made from the matrix $\fl{P}$ is $m\cdot \s{M} \ge \s{MN}\cdot p \cdot s$. Therefore, informally speaking, the average number of observations per index is $p\cdot s$ which results in an error of $\widetilde{O}(\sigma/\sqrt{sp})$ ignoring other terms (contrast with error $\widetilde{O}(\sigma/\sqrt{p})$ in \eqref{eq:guarantee2}.)

\begin{rmk}[Setting parameters $s,p$ in lemma \ref{lem:min_acc}]\label{rmk:set}
Lemma \ref{lem:min_acc} has three input parameters namely $s\in \bb{Z}, 0 \le p \le 1$ and $0\le \delta \le 1$. For any set of input parameters $(\eta,\nu)$, our goal is to set $s,p,\delta$ as functions of known $\sigma,r,\mu,d_2$ such that we can recover $\| \fl{P} - \widehat{\fl{P}}\|_{\infty} \le \eta $ with probability $1-\nu$ 
for which the conditions on $\sigma$ and $p$ are satisfied. From (\ref{eq:final}), we must have $\sqrt{sp} = \frac{c\sigma r }{\sqrt{d_2}}\frac{\sqrt{\mu^3\log d_2}}{\eta}$ for some appropriate constant $c>0$. If $r=O(1)$ and $\eta \le \|\fl{P}\|_{\infty}$, then an appropriate choice of $c$ also satisfies the condition $\frac{\sigma}{\sqrt{s}}=O\Big(\sqrt{\frac{pd_2}{\mu^3\log d_2}}\|\fl{P}\|_{\infty}\Big)$.
More precisely, we are going to set $p=C\mu^2 d_2^{-1}\log^3 d_2$ and $s=\Big\lceil \Big(\frac{c\sigma r \sqrt{\mu}}{\eta\log d_2}\Big)^2 \Big\rceil$ in order to obtain the desired guarantee. 


\end{rmk}

\section{Explore-Then-Commit (ETC) Algorithm}\label{sec:warmup}
In this section, we present an Explore-Then-Commit (ETC)  based algorithm for online low-rank matrix completion. The algorithm has two disjoint phases of exploration  and  exploitation. We will first jointly explore the set of items for all users for a certain number of rounds and compute an estimate $\widehat{\fl{P}}$ of the reward matrix $\fl{P}$. Subsequently, we commit to the estimated best item found for each user and sample the reward of the best item for the remaining rounds in the exploitation phase for that user. 
Note that the exploration phase involves using a matrix completion estimator in order to estimate the entire reward matrix $\fl{P}$ from few observed entries. Our regret guarantees in this framework is derived by carefully balancing exploration phase length and the matrix estimation error (detailed proof provided in Appendix \ref{app:warm_up}).




\begin{algorithm}[!t]
\caption{\textsc{ETC Algorithm}   \label{algo:p1}}
\begin{algorithmic}[1]
\REQUIRE users $\s{M}$, items $\s{N}$, rounds $\s{T}$, noise $\sigma^2$, rank $r$ of $\fl{P}$, upper bound on magnitude of expected rewards $||\fl{P}||_{\infty}$, no. of estimates $f=O(\log (\s{MNT}))$.

\STATE Set $d_2=\min(\s{M},\s{N})$ and $v=(\s{N \|\fl{P}\|_{\infty}})^{-2/3}\Big(\frac{\s{T}\sigma r}{\sqrt{d_2}}\sqrt{\mu^3 \log d_2}\Big)^{2/3}$. Set $p=C\mu^2d_2^{-1}\log^3 d_2$, $s=\lceil vp^{-1} \rceil$ and $\lambda=C_{\lambda}\sigma\sqrt{d_2p}$ for some constants $C,C_{\lambda}>0$.

\FOR{$k=1,2,\dots,f$}
\STATE For each tuple of indices $(i,j)\in [\s{M}]\times [\s{N}]$, independently set $\delta_{ij}=1$ with probability $p$ and $\delta_{ij}=0$ with probability $1-p$.

\STATE  Denote $\Omega=\{(i,j)\in [\s{M}]\times [\s{N}] \mid \delta_{ij}=1\}$ and
 $b=\max_{i \in [\s{M}]}\mid |j \in [\s{N}]\mid (i,j) \in \Omega|$ to be the maximum number of index tuples in a particular row. Set total number of rounds to be $bs$.

\STATE Compute the $k^{\s{th}}$ estimate  $\widehat{\fl{P}}^{(k)}=\textsc{Estimate}([\s{M}],[\s{N}],bs,\Omega,b,\lambda)$. \#
\textit{(Algorithm \ref{algo:estimate}  is used to recommend items to every user for $bs$ rounds.} 
\ENDFOR

\STATE Compute final estimate estimate $\widehat{\fl{P}}$ by taking the entry-wise median of $\widehat{\fl{P}}^{(1)},\widehat{\fl{P}}^{(2)},\dots,\widehat{\fl{P}}^{(f)}$.


\FOR{ each of remaining rounds}
\STATE Recommend $\s{argmax}_{j \in [\s{N}]}\widehat{\fl{P}}_{ij}$ for each user $i\in [\s{M}]$. \# \textit{Number of remaining rounds is $\s{T}-bsf$.}
\ENDFOR
\end{algorithmic}
\end{algorithm}

\begin{thm}\label{thm:baseline}
Consider the rank-$r$ online matrix completion problem with $\s{M}$ users, $\s{N}$ items, $\s{T}$ recommendation rounds. Set $d_2=\min(\s{M},\s{N})$. Let $\fl{R}^{(t)}_{u\rho_u(t)}$ be the reward in each round, defined as in \eqref{eq:obs}. Suppose $d_2=\Omega(\mu r\log(rd_2))$. Let $\fl{P}\in \bb{R}^{\s{M}\times \s{N}}$ be the expected reward matrix that satisfies the conditions stated in Lemma \ref{lem:min_acc} , and let $\sigma^2$ be the noise variance in rewards. Then, Algorithm \ref{algo:p1}, applied to the online rank-$r$ matrix completion problem guarantees the following regret: 
\begin{align}\label{eq:ub_etan2}
   \s{Reg}(\s{T}) = O\Big(\Big(\s{T}^{\frac23}(\sigma^2r^2 \|\fl{P}\|_{\infty})^{\frac13}\Big(\frac{\mu^3 \s{N} \log d_2}{d_2}\Big)^{1/3}+\frac{\s{N}\mu^2\|\fl{P}\|_{\infty}}{d_2}\Big) \log^5(\s{MNT}) +\frac{\|\fl{P}\|_{\infty}}{\s{T}^{2}}\Big).
\end{align}
\end{thm}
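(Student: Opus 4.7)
The plan is to decompose the total regret into two parts corresponding to the two phases of Algorithm \ref{algo:p1}: the exploration regret incurred over the $f$ runs of \textsc{Estimate} (Steps 2--7), and the exploitation regret incurred during the remaining rounds (Steps 8--10), and then balance these two contributions through the choice of $v$, $s$, $p$ in Step 1. First, I would bound the number of exploration rounds per call to \textsc{Estimate}. The sampling set $\Omega$ is obtained by independent $\m{Bernoulli}(p)$ trials, so by a Chernoff bound on the row-degrees, with probability at least $1-(\s{MNT})^{-c}$ we have $b=O(\s{N}p+\sqrt{\s{N}p\log(\s{MNT})})$ and hence $bs=O(\s{N}v+\s{N}p)$ after substituting $s=\lceil v/p\rceil$. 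Summed over the $f=O(\log \s{T})$ independent repetitions, the exploration rounds total $O((\s{N}v+\s{N}p)\log \s{T})$, and each such round contributes at most $2\|\fl{P}\|_\infty$ to the per-round regret in \eqref{eqn:regretExpect}. Plugging in the definitions of $v$ and $p$, the first summand gives the dominant $\s{T}^{2/3}(\sigma^2 r^2\|\fl{P}\|_\infty)^{1/3}(\mu^3 \s{N}\log d_2/d_2)^{1/3}$ term and the second gives the $(\s{N}\mu^2/d_2)\log^3 d_2$ additive term, up to the $\log^2(\s{MNT})$ factor.

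Second, I would analyze the quality of the estimator $\widehat{\fl{P}}$ output by Step 7. Each $\widehat{\fl{P}}^{(k)}$ is the output of the algorithm $\ca{A}$ of Lemma \ref{lem:min_acc} with the parameter choices in Step 1, so by Remark \ref{rmk:set} each entry of $\widehat{\fl{P}}^{(k)}$ is within $\eta:=O\bigl(\sigma r(sd_2)^{-1/2}\sqrt{\mu^3\log d_2/p}\bigr)$ of the corresponding entry of $\fl{P}$ with constant probability (say $\ge 3/4$) after absorbing the $\log(\s{MN}/\delta)$ factor by choice of $\delta$. The entry-wise median of $f=\Theta(\log \s{T})$ independent such estimates then satisfies $\|\widehat{\fl{P}}-\fl{P}\|_\infty\le \eta$ with probability at least $1-\s{MN}\cdot\s{T}^{-c'}$ by a standard Hoeffding bound on the number of "good" estimates per entry, followed by a union bound over the $\s{MN}$ entries. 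On this good event, for every user $u$ the committed arm $\widehat{\jmath}_u=\arg\max_j\widehat{\fl{P}}_{uj}$ satisfies $\fl{P}_{u\widehat{\jmath}_u}\ge \max_j \fl{P}_{uj}-2\eta$, so each of the at most $\s{T}$ exploitation rounds contributes at most $2\eta$ to the per-round regret.

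Third, I would put these bounds together. On the good event, exploitation regret is at most $2\s{T}\eta$, and by the choice of $v$ this matches (up to constants) the $\s{T}^{2/3}(\sigma^2r^2\|\fl{P}\|_\infty)^{1/3}(\mu^3\s{N}\log d_2/d_2)^{1/3}$ bound obtained from the exploration stage; this is exactly the balance point justifying the choice of $v$ in Step 1. On the complementary bad event, which occurs with probability $O(\s{MN}\,\s{T}^{-c'})$, the regret is trivially bounded by $\s{T}\|\fl{P}\|_\infty$; choosing $c'$ large enough yields the additive $\|\fl{P}\|_\infty/\s{T}^2$ term. Combining exploration regret, conditional exploitation regret, and failure contribution gives \eqref{eq:ub_etan2}.

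The main obstacle I anticipate is not in the balancing, which is routine once the accuracy $\eta$ is available, but in carefully discharging the hypotheses of Lemma \ref{lem:min_acc} inside each call to \textsc{Estimate}. Specifically, one has to verify that with the chosen $p=C\mu^2 d_2^{-1}\log^3 d_2$ and $s=\lceil v/p\rceil$, the noise condition $\sigma/\sqrt{s}=O(\sqrt{pd_2/(\mu^3\log d_2)}\|\fl{P}\|_\infty)$ holds (via the argument in Remark \ref{rmk:set}), that the rectangular-to-square partitioning in Steps 10--12 of Algorithm \ref{algo:estimate} preserves the incoherence assumption (Remark \ref{rmk:undesirable}, Lemma \ref{thm:randomsample3}), and that the Bernoulli sampling model \eqref{eq:obs_bernoulli} used by Lemma \ref{lem:source} can be simulated by the round-by-round observation model \eqref{eq:obs} without inflating the round count (Remark \ref{rmk:observation}). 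These are the ingredients absorbed into the extra $\log^2(\s{MNT})$ polylog factor in \eqref{eq:ub_etan2}.
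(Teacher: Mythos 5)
Your proposal is correct and follows essentially the same route as the paper's proof: the same three-way decomposition of the regret into exploration rounds (each costing $2\|\fl{P}\|_{\infty}$), exploitation rounds on the good event (each costing $2\eta$ via the triangle-inequality argument for the committed arm), and the failure event (controlled by taking $\delta=\s{T}^{-O(1)}$ in Lemma \ref{lem:min_acc}), followed by balancing $\s{N}sp\|\fl{P}\|_{\infty}$ against $\s{T}\eta$ to fix $v=sp$. The only cosmetic difference is that you absorb the paper's separately-treated edge case $sp\le C\mu^2 d_2^{-1}\log^3 d_2$ into the estimate $bs=O(\s{N}v+\s{N}p)$ coming from $s=\lceil v/p\rceil$, which yields the same additive $(\s{N}\mu^2/d_2)\log^3 d_2$ term.
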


\begin{rmk}[Non-trivial regret bounds]
Theorem \ref{thm:baseline} provides non-trivial regret guarantees in the key regime when $\s{N} \gg \s{T}$ and $\s{M}>\s{N}$ where the regret scales only logarithmically on $\s{M},\s{N}$. This is intuitively satisfying since in each round we are obtaining $\s{M}$ observations, so more users translate to more information which in-turn allows better understanding of the underlying reward matrix. 
However, the dependence of regret on $T$ (namely $\s{T}^{2/3}$) is sub-optimal. In the subsequent section, we provide a novel  algorithm to obtain regret guarantees with $\s{T}^{1/2}$ for rank-$1$ $\fl{P}$. 

\end{rmk}

\begin{rmk}[Gap dependent bounds]
Define the minimum gap to be $\Delta=\min_{u\in [\s{M}]} \left|\fl{P}_{u\pi_u(1)}-\fl{P}_{u\pi_u(2)}\right|$ where $\pi_u(1),\pi_u(2)$ corresponds to the items with the highest and second highest reward for user $u$ respectively. If the quantity $\Delta$ is known then it is possible to design ETC algorithms where length of the exploration phase is tuned accordingly in order to obtain regret bounds that scale logarithmically with the number of rounds $\s{T}$.
\end{rmk}


\section{OCTAL Algorithm}\label{sec:main1}

In this section we present our algorithm OCTAL (Algorithm \ref{algo:phased_elim}) for online matrix completion where the reward matrix $\fl{P}$ is rank $1$. The set of users is described by a latent vector $\fl{u}\in \bb{R}^{\s{M}}$ and the set of items is described by a latent vector $\fl{v}\in \bb{R}^{\s{N}}$. Thus  $\fl{P}=\fl{u}\fl{v}^{\s{T}}$ with SVD decomposition  $\fl{P}=\bar{\lambda}\fl{\bar{u}}\fl{\bar{v}}^{\s{T}}$.




\begin{algorithm*}[t]
\caption{\small \textsc{OCTAL
(Online Collaborative filTering using iterAtive user cLustering)}   \label{algo:phased_elim}}
\begin{algorithmic}[1]
\REQUIRE Number of users $\s{M}$, items $\s{N}$, rounds $\s{T}$, noise $\sigma^2$, bound on the entry-wise magnitude of expected rewards $||\fl{P}||_{\infty}$, incoherence $\mu$.
\STATE  Set $\ca{M}^{(1,1)}=\ca{M}^{(1,2)}=\phi$ and $\ca{B}^{(1)}=[\s{M}]$. Set $\ca{N}^{(1,1)}=\ca{N}^{(1,2)}=\phi$. Set $f=O(\log (\s{MNT}))$ and suitable constants $a,c,C,C',C_{\lambda}>0$.

\FOR{$\ell=1,2,\dots,$}

\STATE Set $\Delta_{\ell}=C'2^{-\ell}\min\Big(\|\fl{P}\|_{\infty},\frac{\sigma\sqrt{\mu}}{\log \s{N}}\Big)$. 

\FOR{$k=1,2,\dots,f$}

\FOR{each pair of non-null sets $(\ca{B}^{(\ell)},\s{N})$, $(\ca{M}^{(\ell,1)},\ca{N}^{(\ell,1)}),(\ca{M}^{(\ell,2)},\ca{N}^{(\ell,2)})\subseteq [\s{M}]\times [\s{N}]$}

\STATE Denote $(\ca{T}^{(1)},\ca{T}^{(2)})$ to be the considered pair of sets and $i\in \{0,1,2\}$ to be its index.

\STATE Set $d_{2,i}=\min(|\ca{T}^{(1)}|,|\ca{T}^{(2)}|)$. Set $p_{\ell,i}=C\mu^2 d_{2,i}^{-1}\log^3 d_{2,i}$ and $s_{\ell,i}=\Big\lceil \Big(\frac{c\sigma  \sqrt{\mu}}{\Delta_{\ell}\log d_{2,i}}\Big)^2 \Big\rceil$.

\STATE For each tuple of indices $(u,v)\in \ca{T}^{(1)}\times \ca{T}^{(2)}$, independently set $\delta_{uv}=1$ with probability $p_{\ell,i}$ and $\delta_{uv}=0$ with probability $1-p_{\ell,i}$.

\STATE  Denote $\Omega^{(i)}=\{(u,v)\in \ca{T}^{(1)}\times \ca{T}^{(2)} \mid \delta_{uv}=1\}$ and
 $b_{\ell,i}=\max_{u \in \ca{U}} |v \in \ca{V}\mid (u,v) \in \Omega|$. Set total number of rounds to be $m_{\ell,i}=b_{\ell,i}s_{\ell,i}$.

\ENDFOR

\STATE Set $m_{\ell}=\max_{i\in \{0,1,2\}} m_{\ell,i}$.

\STATE Compute $\widetilde{\fl{Q}}^{(\ell,k)}=\textsc{Estimate}(|\ca{B}^{(\ell)}|,[\s{N}],m_{\ell},\Omega^{(0)},b_{\ell,0},\lambda=C_{\lambda}\sigma\sqrt{d_{2,0}p_{\ell}})$. \#
\textit{Algorithm \ref{algo:estimate} is used to recommend items to every user in $\ca{B}^{(\ell)}$ for $m_{\ell}$ rounds.} 

\STATE For $i\in\{1,2\}$, compute $\widetilde{\fl{P}}^{(\ell,i,f)}=\textsc{Estimate}(|\ca{M}^{(\ell,i)}|,|\ca{N}^{(\ell,i)}|,m_{\ell},\Omega^{(i)},b_{\ell,i},\lambda=C_{\lambda}\sigma\sqrt{d_{2,i}p_{\ell}})$. \#
\textit{Algorithm \ref{algo:estimate} recommends items to every user in $\ca{M}^{(\ell,i)}$ for $m_{\ell}$ rounds}.


\ENDFOR 

\STATE Compute $\widetilde{\fl{Q}}^{(\ell)}=$Entrywise Median$(\{\widetilde{\fl{Q}}^{(\ell,k)}\}_{k=1}^{f})$, $\widetilde{\fl{P}}^{(\ell,i)}=$Entrywise Median$(\{\widetilde{\fl{P}}^{(\ell,i,k)}\}_{k=1}^{f})$ for $i\in \{1,2\}$.

\STATE Set $\ca{B}^{(\ell+1)} \equiv \Big\{u \in \ca{B}^{(\ell)}\mid \left|\max_{t\in [\s{N}]}\fl{\widetilde{Q}}^{(\ell)}_{ut}-\min_{t\in [\s{N}]}\fl{\widetilde{Q}}^{(\ell)}_{ut}\right|\le 2a\Delta_{\ell}\Big\}$ 

\STATE Compute $\ca{T}_{u}^{(\ell+1)}=\{j \in [\s{N}]\}\mid \fl{\widetilde{Q}}^{(\ell)}_{uj}+\Delta_{\ell}> \max_{t \in [\s{N}]}\fl{\widetilde{Q}}^{(\ell)}_{ut}\}$ for all $u \in \ca{B}^{(\ell)}\setminus \ca{B}^{(\ell+1)}$.

\STATE For $i\in \{1,2\}$, for all users $u \in \ca{M}^{(\ell,i)}$, compute $\ca{T}_{u}^{(\ell+1)}=$ $\{j \in \ca{N}^{(\ell,i)}\mid \fl{\widetilde{P}}^{(\ell,i)}_{uj}+\Delta_{\ell}> \max_{t \in \ca{N}^{(\ell,i)}}\fl{\widetilde{P}}^{(\ell,i)}_{ut}\}$.


\STATE Set $v$ to be any user in $[\s{M}]\setminus \ca{B}^{(\ell+1)}$. Set $\ca{M}^{(\ell+1,1)}=\{u \in [\s{M}]\setminus \ca{B}^{(\ell+1)} \mid \ca{T}_{u}^{(\ell+1)}\cap \ca{T}_{v}^{(\ell+1)}\neq \phi\}$. Set $\ca{M}^{(\ell+1,2)}=[\s{M}]\setminus (\ca{B}^{(\ell+1)}\cup \ca{M}^{(\ell+1,1)})$.

\STATE Compute $\ca{N}^{(\ell+1,1)}=\bigcap_{u \in \ca{M}^{(\ell+1,1)}}\ca{T}_u^{(\ell+1)}$,  $\ \ \ \ca{N}^{(\ell+1,2)}=\bigcap_{u \in \ca{M}^{(\ell+1,2)}}\ca{T}_u^{(\ell+1)}$.

\STATE For $i\in \{1,2\}$, if $|\ca{M}^{(\ell+1,i)}|\le \frac{\s{M}}{\sqrt{\s{T}}}$, then set $\ca{B}^{(\ell+1)} \leftarrow \ca{B}^{(\ell+1)} \cup \ca{M}^{(\ell+1,i)}$ and  $\ca{M}^{(\ell+1,i)}\leftarrow \phi$.


\ENDFOR

\end{algorithmic}
\end{algorithm*}

{\bf Algorithm  Overview:}\label{subsec:overview} 
Our first key observation is that as $\fl{P}$ is rank-one, we can partition the set of users into two  disjoint clusters $\ca{C}_1,\ca{C}_2$ where $\ca{C}_1 \equiv \{i \in [\s{M}] \mid \fl{u}_i \ge 0\}$ and $\ca{C}_2 \equiv [\s{M}]\setminus \ca{C}_1$. Clearly, for all users $u\in \ca{C}_1$, the item that results in maximum reward is $j_{\max}=\s{argmax}_{t\in [\s{N}]} \fl{v}_t$. On the other hand, for all users $u\in \ca{C}_2$, the item that results in maximum reward is $j_{\min}=\s{argmin}_{t\in [\s{N}]} \fl{v}_t$. Thus, if we can identify $\ca{C}_1,\ca{C}_2$ and estimate items with high reward (identical for users in the same cluster) using few recommendations per user, we can ensure low regret. 

But, initially $\ca{C}_1,\ca{C}_2$ are unknown, so all users are {\em unlabelled} i.e., their cluster is unknown. 
 In each phase (the outer loop indexed by $\ell$),  Algorithm~\ref{algo:phased_elim} tries to label at least a few unlabelled users correctly. This is achieved by progressively refining estimate $\widetilde{\fl{Q}}$ of the reward matrix $\fl{P}$ restricted to the unlabelled users and all items  (Step 12). Subsequently, unlabelled  users for which the difference in maximum and minimum reward (inferred from estimated reward matrix) is large are labelled  (Step 19). At the same time, in Step $13$ users labelled in previous phases are partitioned into two clusters (denoted by $\ca{M}^{(\ell,1)}$ and $\ca{M}^{(\ell,2)}$) and for each of them, the algorithm refines an estimate of two distinct sub-matrices of the reward matrix $\fl{P}$ by recommending items only from a refined set ($\ca{N}^{(\ell,1)}$ and $\ca{N}^{(\ell,2)}$ respectively) containing the best item ($j_{\max}$ or $j_{\min}$). 
 We also identify a small set of \textit{good} items for each labelled user (including users labelled in previous phases), which correspond to large estimated rewards. 
 We partition all these users into two clusters ($\ca{M}^{(\ell+1,1)}$ and $\ca{M}^{(\ell+1,2)}$) such that the set of \textit{good} items for users in different clusters are disjoint. 
 We can prove that such a partitioning is possible; users in same cluster have same sign of user embedding.    
 




 
 We also prove that the set of good items contain the best item for each labelled user ($j_{\max}$ or $j_{\min}$). So, after each phase, for each cluster of users, we compute the intersection of \textit{good} items over all users in the cluster.  This subset of items (\textit{joint good} items) must contain the best item for that cluster and therefore we can discard the other items (Step  20). We can show that all items in the set of \textit{joint good} items ($\ca{N}^{(\ell+1,1)}$ and $\ca{N}^{(\ell+1,2)}$) have rewards which is  close to the reward of the best item.  Therefore the algorithm suffers small regret if for each group of labelled users, the algorithm recommends items from the set of \textit{joint good} items (Step 13) in the next phase.
 We can further show that for the set of unlabelled users, the difference in rewards between the best item and worst item is small and hence the regret for such users is small, irrespective of the recommended item (Step 12). 
 Note that until the number of labelled users is sufficiently large, we do not consider them separately (Step 21). A crucial part of our analysis is to show that for any subset of users and items considered in Step 5, the number of rounds sufficient to recover a good estimate of the expected reward sub-matrix is small irrespective of the number of considered items (if the number of users is sufficiently large).


\begin{rmk}[Practical considerations]
 In general OCTAL (Alg. \ref{algo:phased_elim}) is computationally faster than the ETC Algorithm (Alg. \ref{algo:p1}) with a higher exploration length. This is because OCTAL eliminates large chunks of items in every phase and therefore has to solve easier optimization problems; on the other hand, ETC has to solve a low rank matrix completion problem in $\s{MN}$ variables that becomes slower with the exploration length (datapoints). Moreover, OCTAL algorithm runs in phases with the initial phases being very small; hence the users do not have to wait for a long time to even get personalized recommendations like in ETC. These features make OCTAL much more practical than ETC.
\end{rmk}
 
 To summarize, in Algorithm \ref{algo:phased_elim}, the entire set of rounds $[\s{T}]$ is partitioned into phases of exponentially increasing length. In each phase, for the set of unlabelled users, we do pure exploration and recommend random items from the set of all possible items (Step 12). The set of labelled users are partitioned into two clusters; for each, we follow a semi-exploration strategy where we recommend random items from a set of \textit{joint good} items (Steps 13). We now introduce the following definition:

 
 
\begin{defn}[$(\alpha,\mu)$-Local Incoherence]\label{def:inc}
For $0 \le \alpha \le 1 $, a vector $\fl{v}\in \bb{R}^m$ is $(\alpha,\mu)$-local incoherent if for all sets $\ca{U}\subseteq [m]$ satisfying $|\ca{U}|\ge \alpha m$, we have $\|\fl{v}_{\ca{U}}\|_{\infty} \le \sqrt{\frac{\mu}{|\ca{U}|}}\|\fl{v}_{\ca{U}}\|_2$.  
\end{defn}
 
Local incoherence for a vector $\fl{v}$ implies that any sub-vector of $\fl{v}$ having a significant size must be incoherent as well. 
Note that the local incoherence condition is trivially satisfied if the magnitude of each vector entry is bounded from below. We are now ready to state our main result:



\begin{thm}\label{thm:main1}
Consider the rank-$1$ online matrix completion problem with $\s{T}$ rounds, $\s{M}$ users s.t. $\s{M}\ge \sqrt{\s{T}}$ and $\s{N}$ items. Denote $d_2=\min(\s{M},\s{N})$.
Let $\fl{R}^{(t)}_{u\rho_u(t)}$ be the reward in each round, defined as in \eqref{eq:obs}. Let $\sigma^2$ be the noise variance in rewards and let $\fl{P}\in \bb{R}^{\s{M}\times \s{N}}$ be the expected reward matrix with SVD decomposition $\fl{P}=\lambda\fl{\bar{u}}\fl{\bar{v}}^{\s{T}}$ such that $\fl{\bar{u}}$ is $(\s{T}^{-1/2},\mu)$-locally incoherent, $ \|\fl{\bar{v}}\|_{\infty}\le \sqrt{\mu/\s{N}}$, $d_2=\Omega(\mu \log d_2)$  
  and $|\fl{\bar{v}}_{j_{\min}}|=\Theta(|\fl{\bar{v}}_{j_{\max}}|)$.  Then, by suitably choosing parameters $\{\Delta_{\ell}\}_{\ell}$, positive integers $\{s_{(\ell,0)},s_{(\ell,1)},s_{(\ell,2)}\}_{\ell}$ and $1 \ge \{p_{(\ell,0)},p_{(\ell,1)},p_{(\ell,2)}\}_{\ell} \ge 0$ as described in Algorithm \ref{algo:phased_elim}, we can ensure a regret guarantee of $\s{Reg}(\s{T})=O(\sqrt{\s{T}}\|\fl{P}\|_{\infty}+\s{J}\sqrt{\s{TV}})$ where $\s{J}=O\Big(\log \Big(\frac{1}{\sqrt{\s{VT}^{-1}}}\min\Big(\|\fl{P}\|_{\infty},\frac{\sigma\sqrt{\mu}}{\log \s{N}}\Big)\Big)\Big)$ and $\s{V}=\Big(\max(1,\frac{\s{N}\sqrt{\s{T}}}{\s{M}})\sigma^2\mu^3\log^2(\s{MNT})\Big)$.
\end{thm}

Similar to Algorithm \ref{algo:p1}, Algorithm \ref{algo:phased_elim} allows non-trivial regret guarantees even when $\s{N}\gg \s{T}$  provided the number of users is significantly large as well i.e. $\s{M}=\widetilde{\Omega}(\s{N}\sqrt{\s{T}})$.

\begin{rmk}
Under slightly stronger local incoherence conditions on the vector $\bar{\fl{u}}$, we can analyze a modified version of \textsc{OCTAL} (Alg. \ref{algo:phased_elim2}) without requiring users $\s{M}$ to be large. When $|\ca{C}_1|\approx |\ca{C}|_2$, the 
regret guarantee (Thm. \ref{thm:main2}) scales as $\widetilde{O}(\sqrt{\s{NT}/\s{M}})$. Due to space limitations, details of Algorithm \ref{algo:phased_elim2} and the proof of Theorem \ref{thm:main2} can be found in Appendix \ref{app:repeated2}
\end{rmk}

 Finally, we show that the above dependence on $\s{N,M,T}$ matches the lower bound that we obtain by reduction to the well-known multi-armed bandit problem.



\begin{thm}\label{thm:lb}
Let $\fl{P}\in [0,1]^{\s{M}\times \s{N}}$ be a rank $1$ reward matrix and the noise variance $\sigma^2=1$. In that case, any algorithm for online matrix completion problem will suffer regret of $\Omega(\sqrt{\s{NT}\s{M}^{-1}})$.
\end{thm}




\section{Conclusions}\label{sec:conc}
We studied the problem of online rank-one matrix completion in the setting of repeated item recommendations and blocked item recommendations, which should be applicable for several practical recommendation systems. We analyzed an explore-then-commit (ETC) style method which is able to get the regret averaged over users to be nearly independent of number of items. That is, per user, we require only logarithmic many item recommendations to get non-trivial regret bound. But, the dependence on the number of rounds $\s{T}$ is sub-optimal. We further improved this dependence by proposing OCTAL that carefully combines exploration, exploitation and clustering for different users/items. Our methods iteratively refines estimate of the underlying reward matrix, while also identifying users which can be recommended certain items confidently. Our algorithms and proof techniques are  significantly different than existing bandit learning literature. 
We believe that our work only scratches the surface of an important problem domain with several open problems.  For example, Algorithm~\ref{algo:phased_elim} requires rank-$1$ reward matrix. Generalizing the result to rank-$r$ reward matrices would be interesting. Furthermore, relaxing assumptions on the reward matrix like stochastic noise or additive noise should be relevant for several important settings. 
Finally, collaborative filtering can feed users related items, hence might exacerbate their biases. Our method might actually help mitigate the bias due to explicit exploration, but further investigation into such challenges is important.

\bibliographystyle{abbrvnat}

\newpage
\appendix

\section{Experiments}\label{sec:sims}

\begin{figure*}[!t]
  \begin{subfigure}[t]{0.47\textwidth}
    \centering 
    \includegraphics[scale = 0.34]{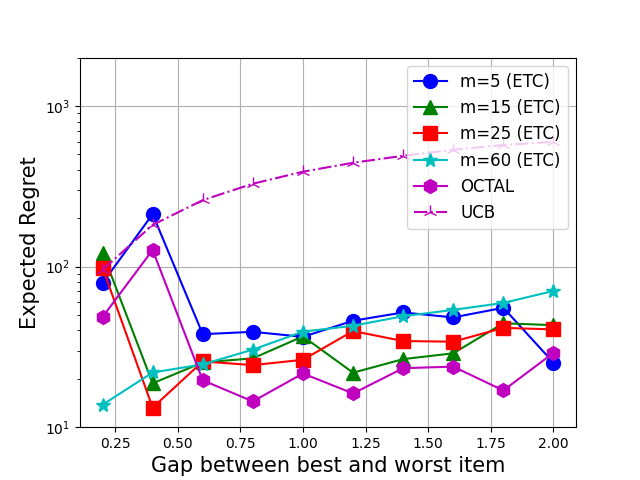}\vspace*{-5pt}
    \caption{Regret comparison computed by Alg. \ref{algo:p1} for exploration periods ($5,15,45,50$), Alg. \ref{algo:phased_elim} and UCB when $\s{T}=1000$. The average regret is plotted with gap in reward between best and worst item for all users.}
 ~\label{fig:average_regret_gap1}
  \end{subfigure}
  \hfill
\begin{subfigure}[t]{0.47 \textwidth}
\centering
   \includegraphics[scale = 0.34]{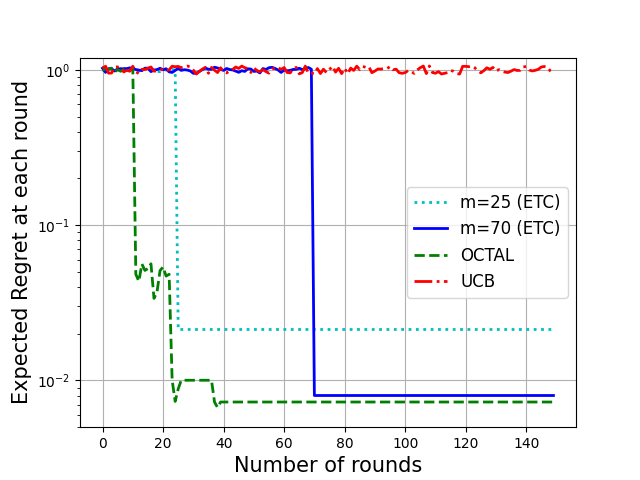}\vspace*{-5pt}
   \caption{\small Comparison of the regret incurred in each round ($\s{M}^{-1}\sum_{u \in [\s{M}]}\f{\mu}_u^{\star}- \bb{E}\frac{1}{\s{M}}\sum_{u\in[\s{M}]}\mathbf{R}_{u\rho_u(t)}^{(t)}$) by Algorithm \ref{algo:p1} (for exploration period $m=25,70$), Algorithm \ref{algo:phased_elim} and the UCB algorithm when $\s{T}=1000$.}
       ~\label{fig:time_comparison}
 \end{subfigure}%
 \caption{Comparison of regret incurred by OCTAL algorithm (Algorithm~\ref{algo:phased_elim}, and ETC algorithm  \ref{algo:p1} against UCB, when expected reward matrix is rank-$1$. See Section~\ref{app:simulation} for data generation process. Clearly, both OCTAL and ETC have significantly lower regret than baseline UCB method which does not exploit reward matrix's low-rank structure.}
\end{figure*}

\subsection{Synthetic Datasets}\label{app:simulation}

 We set the total number of users $\s{M}=100$, the total number of items $\s{N}=150$. For the total number of rounds, we look at two settings namely 1) $\s{T}=1000$ ($\s{T} \gg \s{N}$) and 2) $\s{T}=100$ ($\s{T} \ll \s{N}$). The former one allows us to demonstrate the tension between exploration and exploitation clearly as the number of rounds is large and moreover, it also allows us to compare with the Upper Confidence Bound (UCB) algorithm (see Remark \ref{rmk:begin}) that does not use the low rank structure of the algorithm. In the latter setting, we demonstrate empirically that our regret bounds are non-trivial even when the number of rounds is significantly smaller than the number of items. 
We design the unknown reward matrix $\fl{P}=\fl{uv}^{\s{T}}$ by designing $\fl{u} \in \bb{R}^{\s{M}},\fl{v}\in \bb{R}^{\s{N}}$ corresponding to the embeddings of the users and items respectively in the following manner: the entries of $\fl{u}$ are randomly sampled from the set $\{1,-1\}$. All the entries in the vector $\fl{v}$ are uniformly sampled from the interval $[\s{Gap}/2,-\mathsf{Gap}/2]$ where $\s{Gap}$ is a parameter unknown to the implemented algorithms. With such a construction, we ensure almost surely that the difference between the reward of the best item and second best item also increases with $\s{Gap}$.
The algorithms are allowed to make observations from a noisy version of $\fl{P}$ (say $\fl{P}_{\s{noisy}}$) where $\fl{P}_{\s{noisy}} \leftarrow \fl{P}+\fl{E}$ where every entry of $\fl{E}$ is sampled i.i.d according to $\ca{N}(0,0.1)$. \\


We vary $\s{Gap}$ and run Algorithm \ref{algo:p1} with different exploration periods $m$.
For each configuration of $\mathsf{Gap},m$ we run Algorithm \ref{algo:p1} $10$ times and store the regret (see the relevant definition in equation \ref{eq:obs}) averaged across the $10$ runs. 
In Figure \ref{fig:average_regret_exploration}, we compare the regret of Algorithm \ref{algo:p1} with different values of $\s{Gap}$ as we change the exploration periods for $\s{T}=1000$. 
Clearly, with a small exploration period, the algorithm commits to the wrong item more often and with a large exploration period, the cost of exploration is too high. The existence of a sweet spot can be noticed from the U-curves in Figure \ref{fig:average_regret_exploration}. 

\begin{figure*}[!htbp]
  \begin{subfigure}[t]{0.49\textwidth}
    \centering 
    \includegraphics[scale = 0.4]{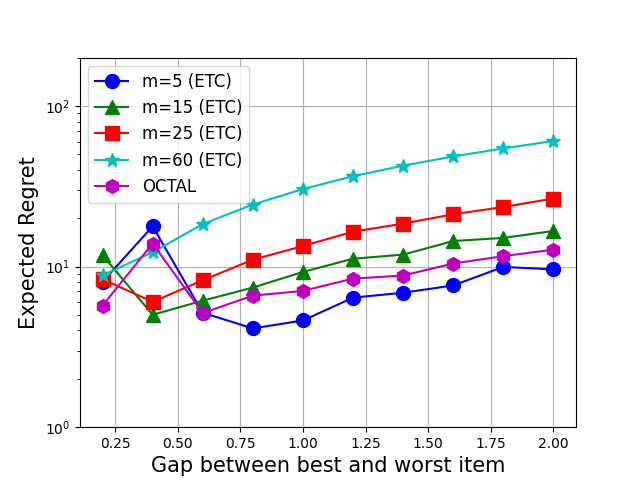}
    \caption{Comparison of the regret computed by Algorithm \ref{algo:p1} for exploration periods $5,15,25,60$ ($5,15$ are the optimal exploration periods for Algorithm \ref{algo:p1}) and Algorithm \ref{algo:phased_elim} when $\s{T}=100$. The average regret is plotted with gap in reward between best and worst item for all users.}
          ~\label{fig:average_regret_gap2}
  \end{subfigure}
  \hfill
  \begin{subfigure}[t]{0.49\textwidth}
  \centering
     \includegraphics[scale =  0.4]{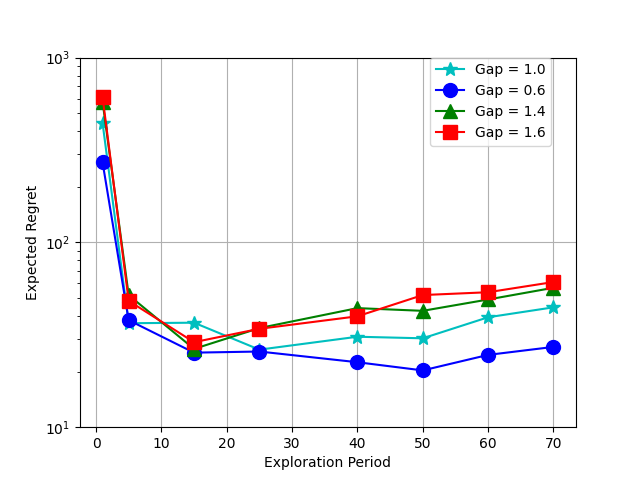}
     \caption{\small Comparison of the regret computed by Algorithm \ref{algo:p1} for $4$ different gaps in reward between best and worst item when $\s{T}=1000$. The average regret is plotted with different exploration periods for four different values of $\s{Gap}$ resulting in U-curves.}
          ~\label{fig:average_regret_exploration}
  \end{subfigure}\hfill 


 \caption{\small Simulation results on Regret averaged over a few runs (See Defintion \ref{eqn:regretExpect}) of Algorithms \ref{algo:p1} and \ref{algo:phased_elim}}
\end{figure*}

Similarly, we run Algorithm \ref{algo:phased_elim} for the same values of $\s{Gap}$, each for $10$ times and store the average regret. We also run the Upper Confidence Bound ($(1,1)$-UCB) algorithm \cite{bubeck2012regret} for each user separately and compute the regret (averaged over 10 simulations).
In Figure \ref{fig:average_regret_gap1}, we compare the regrets incurred by Algorithm \ref{algo:p1}, Algorithm \ref{algo:phased_elim} and the UCB algorithm when $\s{T}=1000$. Similarly, in Figure \ref{fig:average_regret_gap2}, we compare the regrets incurred by Algorithm \ref{algo:p1} and Algorithm \ref{algo:phased_elim} when $\s{T}=100$. Note that when $\s{T}=1000$, the tension between exploration and exploitation is clearer as the number of exploitation rounds is larger for Algorithm \ref{algo:p1}; Algorithm \ref{algo:phased_elim} performs better than ETC algorithms for all the different values of exploration periods used for many of the different values of $\s{Gap}$. However, even for $\s{T}=100$, the performance of Algorithm \ref{algo:phased_elim} is quite close to that of Algorithm \ref{algo:p1} with the best performing exploration period parameter.

For Algorithm \ref{algo:phased_elim}, in phase indexed by $\ell$, we use the number of rounds to be $m_{\ell}=10+2^{\ell}$.
We also make Step 11 in Algorithm \ref{algo:phased_elim} more robust by taking those items in $\ca{N}^{(\ell+1,1)},\ca{N}^{(\ell+1,2)}$ that are present in at least $2/3$ of the set $\ca{T}_u^{(\ell+1)}$ for users $u$ in the sets $\ca{M}^{(\ell+1,1)},\ca{M}^{(\ell+1,2)}$ respectively. If, due to some failure event, $\ca{N}^{(\ell+1,1)}=\phi$ (or $\ca{N}^{(\ell+1,2)}=\phi$) then we set $\ca{N}^{(\ell+1,1)}$ ($\ca{N}^{(\ell+1,2)}$) to be the item that is present is maximum number of sets  $\ca{T}_u^{(\ell+1)}$ for users $u$ in the sets $\ca{M}^{(\ell+1,1)}\; (\ca{M}^{(\ell+1,2)})$.


In our final experiment in the repeated setting with $\s{T}=1000$, we plot the regret $\s{M}^{-1}\sum_{u \in [\s{M}]}\max_{j \in [\s{N}]}\fl{P}_{uj}- \frac{1}{\s{M}}\sum_{u\in[\s{M}]}\mathbf{P}_{u\rho_u(t)}$ at time $t$ (up to $t<150$) as $t$ is increased for Algorithms \ref{algo:p1},  \ref{algo:phased_elim} and the UCB Algorithm (the reported values are averaged over $10$ simulations). 
For Algorithm \ref{algo:p1}, the regret is high at all times during the exploration period and then it experiences a sharp drop in the exploitation period. On the other hand, for Algorithm \ref{algo:phased_elim}, the regret decreases in each phase and so we observe a more gradual decrease in the regret; thus Algorithm \ref{algo:phased_elim} is an anytime algorithm. Also note that the regret of the UCB algorithm does not decrease during the first $150$ rounds.

\subsection{Real Datasets}

\paragraph{MovieLens:}

Next, we demonstrate  experimental results on the MovieLens dataset on the lines of \cite{kveton2017stochastic}.  The MovieLens 1M dataset comprises of 1 million ratings provided by 6K users to 4K movies. We cluster the users into disjoint groups ($241$) where each cluster represents a unique combination of gender, age group, and occupation in the MovieLens dataset.  Moreover, we  use a random subset of movies ($300$) for which the average rating is between 2.5 and 3.5 ( in order to avoid movies which are too good or too bad). For each pair of (user group, movie), we take the average of all the ratings provided by users in that group for that movie. Around $33\%$ of the $241\times 300$ ratings matrix could be filled in this manner. In order to complete the matrix, we optimized a convex program for low rank matrix completion (see eq. (6) and \citep{chen2019noisy}) by minimizing the MSE with a nuclear norm regularizer.

We randomly take $\mathsf{M}=128$ user groups, $\mathsf{N}=128$ movies and take the total rounds $\mathsf{T}=100$. Note that the number of rounds is less than the number of items (movies) and therefore, the UCB algorithm implemented separately for each user group can only explore and incur a regret of $179.61$. For the ETC Algorithm (Alg. \ref{algo:p1}), as a baseline (for a fair comparison with OCTAL), we consider the rank $1$-approximation of the estimated reward matrix $\widehat{\fl{P}}$ (ETC Rank-$1$). From Figure \ref{fig:movielens1}, note that the OCTAL Algorithm (Alg. \ref{algo:phased_elim} with the same minor mofications as in the synthetic datasets)  outperforms ETC with the rank-$1$ approximation by a significant margin.
In fact, in Figure \ref{fig:movilens3}, we can see that the OCTAL Algorithm has smaller cumulative regret in every round as compared to ETC with rank-$1$ approximation for different exploration periods. All results that are reported are an average of $10$ independent runs.

Although the OCTAL algorithm crucially uses the rank $1$ structure, it has reasonable performance even when the reward matrix has a larger rank but can still be approximated well with a rank $1$ matrix; in our experiment, it turns out that the highest singular value of the ratings matrix is $\approx 10$ times larger than the second largest singular value.  In figure \ref{fig:movilens2}, we also compare the round-wise incurred regret at different rounds $t \in [100]$. 

\begin{figure*}[!htbp]
  \begin{subfigure}[t]{0.33\textwidth}
    \centering 
    \includegraphics[scale = 0.3]{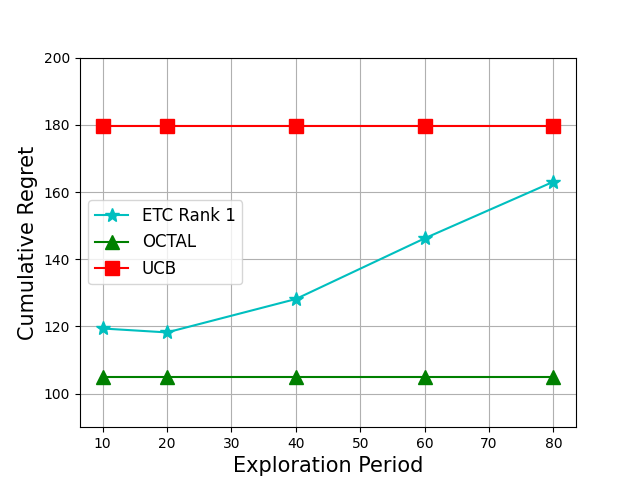}\vspace*{-5pt}
 \caption{Comparison of regret at $\s{T}=100$ for UCB, OCTAL and ETC (with rank-$1$ approxmiation) for different exploration periods}~\label{fig:movielens1}
  \end{subfigure}
  \hfill
\begin{subfigure}[t]{0.32 \textwidth}
\centering
   \includegraphics[scale = 0.3]{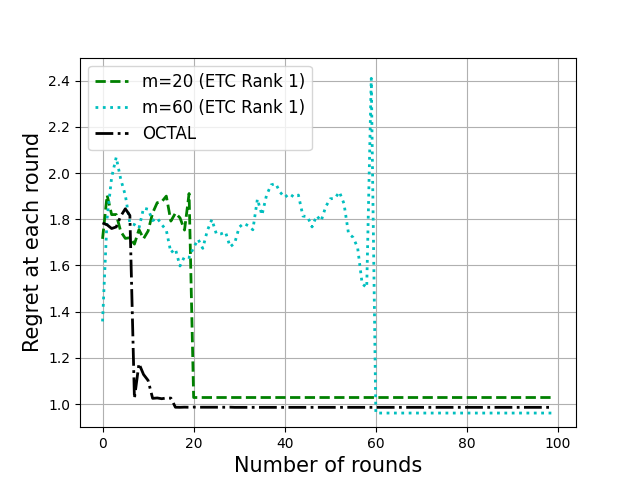}\vspace*{-5pt}
       \caption{Comparison of regret at every round for OCTAL and ETC with rank-$1$  approximation and exploration period $m=20,40$}~\label{fig:movilens2}
 \end{subfigure}%
 \hfill
\begin{subfigure}[t]{0.32 \textwidth}
\centering
   \includegraphics[scale = 0.3]{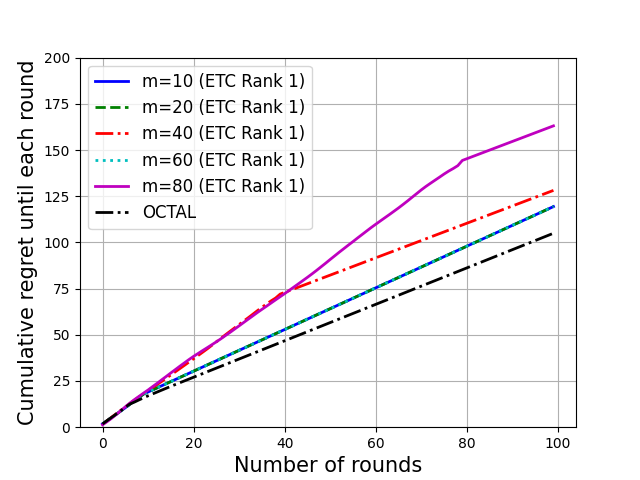}\vspace*{-5pt}
       \caption{Comparison of cumulative regret until every round for OCTAL and ETC with rank-$1$ approximation and different exploration periods}~\label{fig:movilens3}
 \end{subfigure}%
 \caption{Comparison of regret incurred by OCTAL algorithm (Algorithm~\ref{algo:phased_elim}, ETC algorithm  \ref{algo:p1} with rank-$1$ approximation of the estimated matrix against UCB for the MovieLens dataset. In Figure \ref{fig:movielens1}, we show that OCTAL and both versions of ETC have significantly lower regret than baseline UCB method. In Figure \ref{fig:movilens2}, we show that the regret of OCTAL decreases in every phase unlike ETC algorithms. In Figure \ref{fig:movilens3}, we show that the cumulative regret of OCTAL is lower than that of ETC with rank $1$ approximation at every round $t\le \s{T}$.}
\end{figure*}

Notice from Figure \ref{fig:movilens2} that the regret of the OCTAL algorithm decreases continuously making it anytime which is extremely useful in practical applications. On the other hand, the regret of the ETC algorithm only decreases after the exploration period is over which is undesirable. In fact, for a large number of initial rounds, the OCTAL algorithm outperforms the ETC algorithms.

\paragraph{Jester Dataset:} Next, we consider the Jester dataset \citep{goldberg2001eigentaste} which consists data from $24983$ users who have provided ratings for $100$ jokes that are between $-10.0$ to $+10.0$. We select $\s{M}=100$ users who have rated all the $\s{N}=100$ jokes and use the corresponding $100\times 100$ ratings matrix as the underlying reward matrix; we select the number of recommendation rounds $\s{T}=100$ However, this reward matrix is not well-approximated by a low rank matrix unlike the MovieLens dataset since there exists a long tail of singular values with large magnitude. Despite this, we demonstrate the efficacy of our algorithms. First, we run the UCB algorithm for each of the $100$ users separately for $100$ rounds and thus incur an average regret of $734.66$.

\begin{figure*}[!htbp]
  \begin{subfigure}[t]{0.33\textwidth}
    \centering 
    \includegraphics[scale = 0.3]{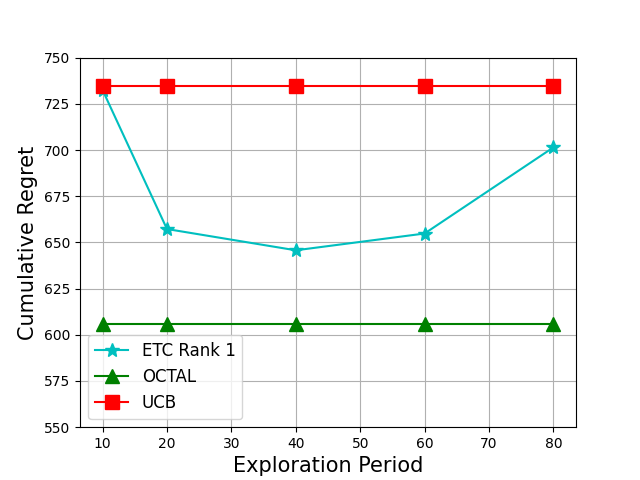}\vspace*{-5pt}
 \caption{Comparison of regret at $\s{T}=100$ for UCB, OCTAL and ETC (with rank-$1$ approxmiation) for different exploration periods}~\label{fig:jester1}
  \end{subfigure}
  \hfill
\begin{subfigure}[t]{0.32 \textwidth}
\centering
   \includegraphics[scale = 0.3]{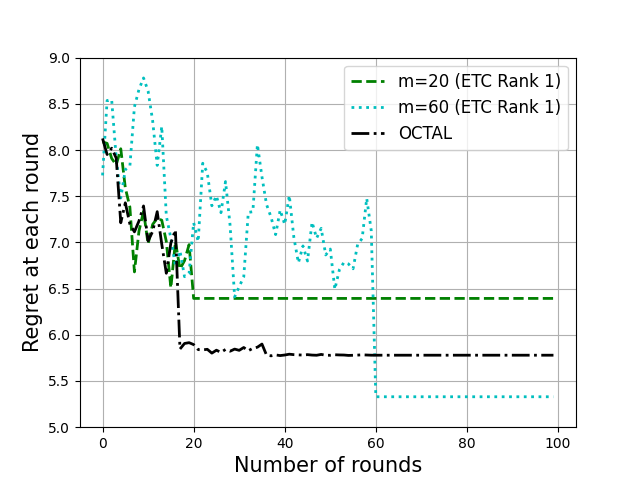}\vspace*{-5pt}
       \caption{Comparison of regret at every round for OCTAL and ETC with rank-$1$  approximation and exploration period $m=20,40$}~\label{fig:jester2}
 \end{subfigure}%
 \hfill
\begin{subfigure}[t]{0.32 \textwidth}
\centering
   \includegraphics[scale = 0.3]{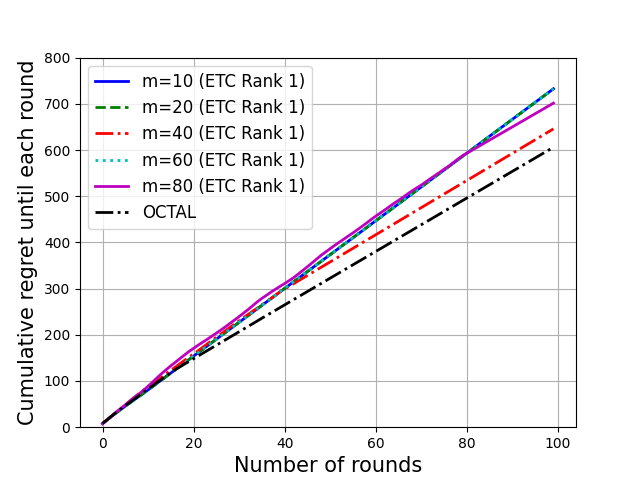}\vspace*{-5pt}
       \caption{Comparison of cumulative regret until every round for OCTAL and ETC with rank-$1$ approximation and different exploration periods}~\label{fig:jester3}
 \end{subfigure}%
 \caption{Comparison of regret incurred by OCTAL algorithm (Algorithm~\ref{algo:phased_elim}, ETC algorithm  \ref{algo:p1} with rank-$1$ approximation of the estimated matrix against UCB for the Jester dataset. In Figure \ref{fig:jester1}, we show that OCTAL and both versions of ETC have significantly lower regret than baseline UCB method. In Figure \ref{fig:jester2}, we show that the regret of OCTAL decreases in every phase unlike ETC algorithms. In Figure \ref{fig:jester3}, we show that the cumulative regret of OCTAL is lower than that of ETC with rank $1$ approximation at every round $t\le \s{T}$.}
\end{figure*}

As in the Movielens setting, we run Algorithm \ref{algo:p1} namely the ETC Algorithm (with rank-$1$ approximation) for different exploration periods and the OCTAL algorithm (Algorithm \ref{algo:phased_elim} with the same minor mofications as in the synthetic datasets) in this setting.  Note from Figure \ref{fig:jester1} that both the ETC and OCTAL algorithm have improved performances as compared to the UCB algorithm. Again, from Figure \ref{fig:jester3}, the cumulative regret of OCTAL is lower than that of ETC with rank-$1$ approximation for different exploration periods at every round. In Figure \ref{fig:jester2}, we show the regret at every round incurred by OCTAL and ETC with rank-$1$ approximation; the regret of OCTAL decreases in every phase which is useful in practical recommendation systems.
Recall that the ETC algorithm with rank-$1$ approximation has a poor performance since it suffers from the $\s{T}^{2/3}$ dependence in the worst case. Again, we must point out that it is critical to tune the exploration period in ETC (as a function of rounds and sub-optimality gaps - unknown in practice and therefore difficult)  \citep{lattimore2020bandit}[Ch. 6], OCTAL still suffers lower regret without such side-information.

\paragraph{Book-Crossing Dataset:} Next we consider the Book-Crossing Dataset \citep{ziegler2005improving} which consists of ratings from $278$K users (with demographic information) for $271$K books. As in the Movielens setting, we identify $\s{M}=70$ disjoint clusters of users as a unique combination of country and age bin (0-25,25-50,50-75,75-100) that have rated at least $100$ books. Next we take $\s{N}=150$ books that have the most ratings. For each user cluster and each chosen book, we take the average rating of all users in the cluster for that book. We select the number of rounds $\s{T}=100$ and as before, we run the UCB algorithm separately for each user, ETC algorithm (Alg. \ref{algo:p1}) with the rank-$1$ approximation of the estimated reward matrix, OCTAL Algorthm (Alg. \ref{algo:phased_elim} with the same minor mofications as in the synthetic datasets). The results are demonstrated in Figures \ref{fig:bx1},\ref{fig:bx2} and \ref{fig:bx3}. As before the UCB algorithm has the worst performance; OCTAL has a superior performance to ETC with rank-$1$ approximation for different exploration periods (and at all rounds). The conclusions are very similar to that obtained in the Jester dataset. 

\begin{figure*}[!htbp]
  \begin{subfigure}[t]{0.33\textwidth}
    \centering 
    \includegraphics[scale = 0.3]{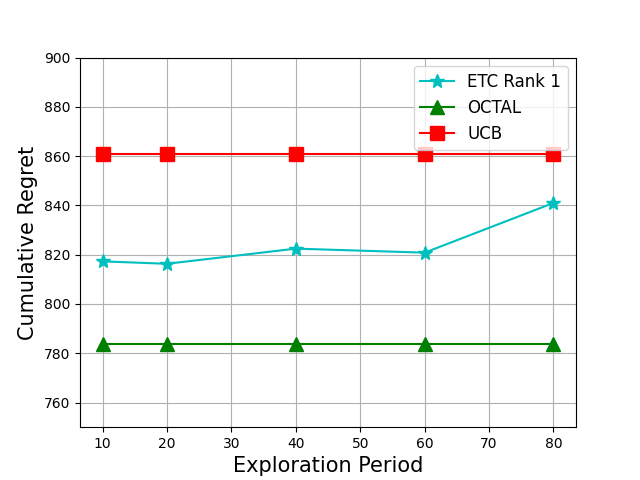}\vspace*{-5pt}
 \caption{Comparison of regret at $\s{T}=100$ for UCB, OCTAL and ETC (with rank-$1$ approxmiation) for different exploration periods}~\label{fig:bx1}
  \end{subfigure}
  \hfill
\begin{subfigure}[t]{0.32 \textwidth}
\centering
   \includegraphics[scale = 0.3]{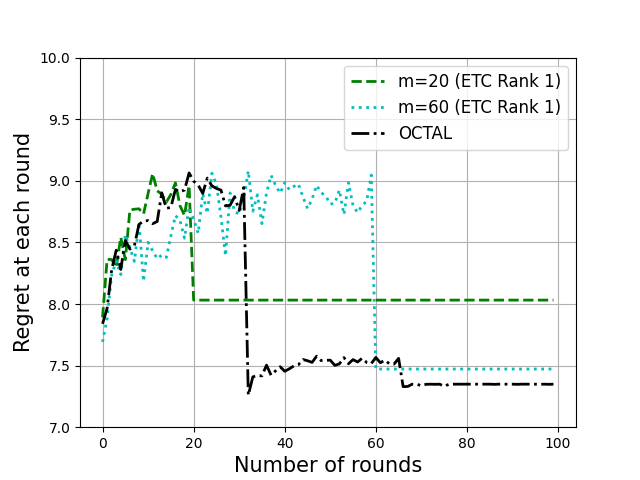}\vspace*{-5pt}
       \caption{Comparison of regret at every round for OCTAL and ETC with rank-$1$  approximation and exploration period $m=20,40$}~\label{fig:bx2}
 \end{subfigure}%
 \hfill
\begin{subfigure}[t]{0.32 \textwidth}
\centering
   \includegraphics[scale = 0.3]{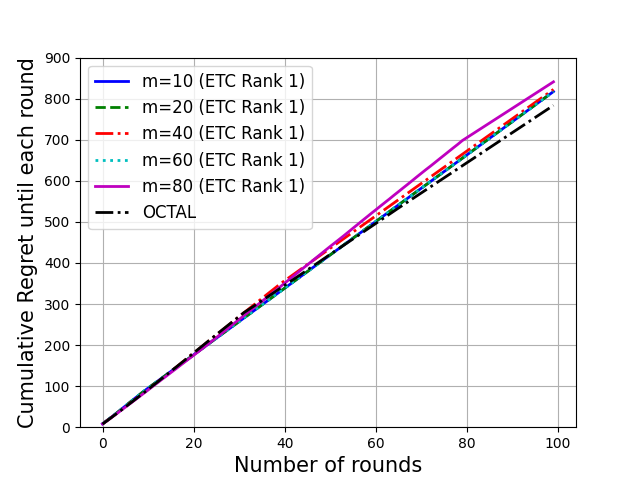}\vspace*{-5pt}
       \caption{Comparison of cumulative regret until every round for OCTAL and ETC with rank-$1$ approximation and different exploration periods}~\label{fig:bx3}
 \end{subfigure}%
 \caption{Comparison of regret incurred by OCTAL algorithm (Algorithm~\ref{algo:phased_elim}, ETC algorithm  \ref{algo:p1} with rank-$1$ approximation of the estimated matrix against UCB for the Book Crosssing dataset. In Figure \ref{fig:bx1}, we show that OCTAL and both versions of ETC have significantly lower regret than baseline UCB method. In Figure \ref{fig:bx2}, we show that the regret of OCTAL decreases in every phase unlike ETC algorithms. In Figure \ref{fig:bx3}, we show that the cumulative regret of OCTAL is lower than that of ETC with rank $1$ approximation at every round $t\le \s{T}$.}
\end{figure*}

\vspace*{-5pt}


\section{Missing Proofs in Section \ref{sec:Preliminaries}}\label{app:preliminaries}

\begin{algorithm}[!t]
\caption{\textsc{Estimate Low Rank Matrix (Sub-matrix) Offline \citep{chen2019noisy}}   \label{algo:estimate_offline_1}}
\begin{algorithmic}[1]
\REQUIRE rows $\ca{U}\subseteq [\s{M}]$, columns $\ca{V}\subseteq[\s{N}]$, noise $\sigma^2$, rank $r$ of $\fl{P}$, bernoulli sampling probability $0\le p \le 1$. 

\STATE Set $d_2=\min(\left|\ca{U}\right|,\left|\ca{V}\right|)$ and $\lambda=C_{\lambda}\sigma\sqrt{d_2p}$ for some constant $C_{\lambda}>0$. 

\STATE For each tuple of indices $(i,j)\in \ca{U}\times \ca{V}$, independently set $\delta_{ij}=1$ with probability $p$ and $\delta_{ij}=0$ with probability $1-p$.

\STATE  Denote $\Omega=\{(i,j)\in \ca{U}\times \ca{V} \mid \delta_{ij}=1\}$. Observe $\fl{Z}_{ij}$ (noisy version of $\fl{P}_{ij}$) for all $(i,j)\in \Omega$

\STATE Solve convex program \vspace*{-15pt}
\begin{align}\label{eq:convex_offline_1}
    \min_{\fl{\widehat{P}}\in \bb{R}^{|\ca{U}|\times|\ca{V}|}} \frac{1}{2}\sum_{(i,j)\in \Omega}\Big(\fl{\widehat{P}}_{ij}-\fl{Z}_{ij}\Big)^2+\lambda\|\widehat{\fl{P}}\|_{\star}
\end{align}
where $\|\widehat{\fl{P}}\|_{\star}$ denotes nuclear norm of matrix $\widehat{\fl{P}}$
\STATE Return  $\widehat{\fl{P}}$.
\end{algorithmic}
\end{algorithm}

We start with the following corollary:

\begin{lemma}[Theorem 1 in \cite{chen2019noisy}]\label{lem:source2}
 Let $\fl{P}=\fl{\bar{U}}\f{\Sigma}\fl{\bar{V}}^{\s{T}}\in \bb{R}^{d\times d}$ such that $\fl{\bar{U}}\in \bb{R}^{d\times r},\fl{\bar{V}}\in \bb{R}^{d \times r}$ and $\f{\Sigma} \triangleq \s{diag}(\lambda_1,\lambda_2,\dots,\lambda_r) \in \bb{R}^{r \times r}$ with $\fl{\bar{U}}^{\s{T}}\fl{\bar{U}}=\fl{\bar{V}}^{\s{T}}\fl{\bar{V}}=\fl{I}$ and $\|\fl{\bar{U}}\|_{2,\infty}\le \sqrt{\mu r/d}, \|\fl{\bar{V}}\|_{2,\infty}\le \sqrt{\mu r/d}$.  
 Let $1\ge p \ge C \mu^2 d^{-1} \log^3 d$ for some sufficiently large constant $C>0$, $\sigma = O\Big(\sqrt{\frac{pd}{\mu^3\log d}}\|\fl{P}\|_{\infty}\Big)$, rank $r=O(1)$ and condition number $\kappa \triangleq \frac{\max_i \lambda_i}{\min_i \lambda_i} = O(1) $. Then, with probability exceeding $ 1-O(d^{-3})$, we can compute a 
  matrix $\widehat{\fl{P}}\in \bb{R}^{\s{M}\times \s{N}}$ by using Algorithm \ref{algo:estimate_offline_1} with parameters ($\ca{U}=[\s{M}],\ca{V}=[\s{N}],\sigma^2,r,p$) s.t.,  
  \begin{align}\label{eq:guarantee_source}
    \|\widehat{\fl{P}}-\fl{P}\|_{\infty} \le O\Big(\frac{\sigma}{\min_i\lambda_{i}}\cdot \sqrt{\frac{\mu d\log d}{p}} \|\fl{P}\|_{\infty}\Big).
  \end{align}
\end{lemma}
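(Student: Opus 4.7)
The plan is to observe that this lemma is essentially a direct restatement of Lemma \ref{lem:source} (which is itself Theorem 1 of \cite{chen2019noisy}), specialized to a specific failure-probability constant. The hypotheses match exactly: a square $d \times d$ reward matrix of rank $r = O(1)$ with condition number $\kappa = O(1)$; the standard $\mu$-incoherence bounds $\|\bar{\fl{U}}\|_{2,\infty}, \|\bar{\fl{V}}\|_{2,\infty} \le \sqrt{\mu r / d}$; sampling rate $p \ge C \mu^2 d^{-1} \log^3 d$; and noise scale $\sigma = O\!\bigl(\sqrt{pd/(\mu^3 \log d)}\,\|\fl{P}\|_{\infty}\bigr)$. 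Under these conditions Lemma \ref{lem:source} produces an estimator satisfying \eqref{eq:guarantee_source} with probability $1 - O(d^{-c})$ for any constant $c > 0$. Choosing $c = 3$ yields exactly the stated guarantee, so the proof is a one-line invocation.

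If one instead wished to reprove this from scratch, the natural strategy would be to analyze the nuclear-norm-regularized least squares estimator on the Bernoulli-sampled entries, as in the convex relaxation approach of \cite{chen2019noisy}. The key steps would be: (i) show that the nuclear-norm minimizer is close in Frobenius norm to $\fl{P}$ by a standard restricted-strong-convexity plus dual-certificate argument (invoking matrix Bernstein \cite{tropp2012user} on the noise and sampling randomness); (ii) deploy a leave-one-out analysis that decouples the estimator's dependence on each row and column, allowing incoherence to be propagated to the estimate $\widehat{\fl{P}}$; and (iii) combine these to convert the Frobenius-norm bound into the entrywise $\ell_\infty$ bound \eqref{eq:guarantee_source} via singular-subspace perturbation inequalities.

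The main obstacle in such a from-scratch proof would be step (iii): obtaining a tight $\|\cdot\|_\infty$ guarantee rather than a Frobenius-norm guarantee. A naive conversion loses a factor of $\sqrt{d}$, which would make the bound vacuous for our downstream use in Lemma \ref{lem:min_acc}. The leave-one-out construction — in which one argues about auxiliary estimators built from the sample set with a single row or column deleted, and bounds their proximity to the true estimator — is precisely what avoids this loss, and it is this ingredient that makes the $\ell_\infty$ control in \cite{chen2019noisy} work. Since that analysis is already carried out in the cited reference, we simply invoke it with $c = 3$ to conclude.
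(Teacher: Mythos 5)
Your proposal is correct and matches the paper, which offers no separate proof of this lemma: it is stated verbatim as Theorem 1 of \cite{chen2019noisy} specialized to the square case, and Remark \ref{rmk:failure_prob} confirms that the exponent $3$ in the failure probability is just one admissible choice of the arbitrary constant $c$ from Lemma \ref{lem:source}. The from-scratch sketch (leave-one-out analysis for the entrywise bound) is accurate but unnecessary, since the paper simply invokes the cited result.
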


Next, we extend Lemma \ref{lem:source2} to rectangular matrices in the following result:

\begin{lemma}\label{thm:randomsample2}
 Let $\fl{P}=\fl{\bar{U}}\f{\Sigma}\fl{\bar{V}}^{\s{T}}\in \bb{R}^{\s{M} \times \s{N}}$ such that $\fl{\bar{U}}\in \bb{R}^{\s{M}\times r},\fl{\bar{V}}\in \bb{R}^{\s{N} \times r}$ and $\f{\Sigma} \triangleq \s{diag}(\lambda_1,\lambda_2,\dots,\lambda_r) \in \bb{R}^{r \times r}$ with $\fl{\bar{U}}^{\s{T}}\fl{\bar{U}}=\fl{\bar{V}}^{\s{T}}\fl{\bar{V}}=\fl{I}$ and $\|\fl{\bar{U}}\|_{2,\infty}\le \sqrt{\mu r/\s{M}}, \|\fl{\bar{V}}\|_{2,\infty}\le \sqrt{\mu r/\s{N}}$. Let $d_1=\max(\s{M},\s{N})$ and $d_2=\min(\s{M},\s{N})$.   
 Let $1 \ge p \ge C \mu^2 d_1d_2^{-2} \log^3 d_1$ for some sufficiently large constant $C>0$, $\sigma = O\Big(\sqrt{\frac{pd_2^{3}}{d_1^{2}\mu^3 \log^3 d_1}}\|\fl{P}\|_{\infty}\Big)$, rank $r=O(1)$ and condition number $\kappa \triangleq \frac{\max_i \sigma_i}{\min_i \sigma_i} = O(1)$. Then, with probability exceeding $ 1-O(d_1^{-3})$, we can compute a 
  matrix $\widehat{\fl{P}}\in \bb{R}^{\s{M}\times \s{N}}$ by using Algorithm \ref{algo:estimate_offline_1} with parameters ($\ca{U}=[\s{M}],\ca{V}=[\s{N}],\sigma^2,r,p$) s.t.,  
  \begin{align}\label{eq:guarantee}
  \|\widehat{\fl{P}}-\fl{P}\|_{\infty} =  O\Big(\frac{\sigma r}{\sqrt{d_2}}\Big(\frac{d_1}{d_2}\Big)^{1/2}\sqrt{\frac{\mu^3 \log d_1}{p}}\Big)
  \end{align}

\end{lemma}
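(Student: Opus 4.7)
\textbf{Proof plan for Lemma \ref{thm:randomsample2}.} The idea is to reduce the rectangular case to the square case covered by Lemma \ref{lem:source2} via zero-padding, and then carefully track how the incoherence parameter deteriorates under this padding. Without loss of generality assume $\s{N} \ge \s{M}$, so $d_1 = \s{N}$ and $d_2 = \s{M}$. Define the square matrix $\widetilde{\fl{P}} \in \bb{R}^{d_1 \times d_1}$ by appending $d_1 - d_2$ zero rows to $\fl{P}$. Then $\widetilde{\fl{P}}$ has rank $r$, the same nonzero singular values $\lambda_1,\dots,\lambda_r$ (hence the same condition number $\kappa = O(1)$), and its left/right singular vector matrices $\widetilde{\fl{U}},\widetilde{\fl{V}}$ satisfy $\widetilde{\fl{V}} = \fl{\bar V}$ while $\widetilde{\fl{U}}$ is $\fl{\bar U}$ padded with zero rows. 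In particular
\[
\|\widetilde{\fl{U}}\|_{2,\infty} = \|\fl{\bar U}\|_{2,\infty} \le \sqrt{\mu r / d_2} = \sqrt{(\mu d_1/d_2)\, r/d_1}, \qquad \|\widetilde{\fl{V}}\|_{2,\infty} \le \sqrt{\mu r/d_1},
\]
so $\widetilde{\fl{P}}$ is $\mu'$-incoherent for $\mu' \triangleq \mu\, d_1/d_2$.

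Next I would verify that under the assumed hypotheses on $p$ and $\sigma$, the premises of Lemma \ref{lem:source2} are met for $\widetilde{\fl{P}}$ with parameters $(\mu', d_1)$. The sampling condition becomes $p \ge C (\mu')^2 d_1^{-1} \log^3 d_1 = C\mu^2 d_1 d_2^{-2}\log^3 d_1$, which is exactly the hypothesis of the lemma; similarly the noise condition $\sigma = O(\sqrt{p d_1/((\mu')^3 \log d_1)}\,\|\widetilde{\fl{P}}\|_\infty)$ unpacks to $\sigma = O(\sqrt{p d_2^3/(d_1^2 \mu^3 \log d_1)}\,\|\fl{P}\|_\infty)$, again matching. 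Sampling each entry of $\widetilde{\fl{P}}$ independently with probability $p$ per \eqref{eq:obs_bernoulli} (the padded rows are observed exactly, with zero noise, which is a best-case instance of the model) and applying Lemma \ref{lem:source2} yields, with probability $1 - O(d_1^{-3})$, an estimate $\widehat{\fl{P}}$ with
\[
\|\widehat{\fl{P}} - \widetilde{\fl{P}}\|_\infty \;\le\; O\Bigl(\tfrac{\sigma}{\min_i \lambda_i}\sqrt{\tfrac{\mu' d_1 \log d_1}{p}}\,\|\fl{P}\|_\infty\Bigr).
\]
Restricting to the first $d_2$ rows gives the same bound for $\|\widehat{\fl{P}}-\fl{P}\|_\infty$.

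The final step is to simplify this expression into the form claimed in \eqref{eq:guarantee}. Here I would use the standard $\ell_\infty$ bound derived from incoherence and bounded $\kappa$:
\[
\|\fl{P}\|_\infty \;\le\; \|\fl{\bar U}\|_{2,\infty}\cdot \max_i \lambda_i \cdot \|\fl{\bar V}\|_{2,\infty} \;\le\; \frac{\mu r \, \kappa}{\sqrt{d_1 d_2}}\cdot \min_i \lambda_i \;=\; O\!\left(\frac{\mu r}{\sqrt{d_1 d_2}}\right)\cdot \min_i\lambda_i.
\]
Substituting $\mu' = \mu d_1/d_2$ and this bound for $\|\fl{P}\|_\infty/\min_i\lambda_i$ into the estimate and collecting terms gives
\[
\|\widehat{\fl{P}}-\fl{P}\|_\infty \;\le\; O\!\left(\sigma \cdot \frac{d_1}{\sqrt{d_2}}\sqrt{\tfrac{\mu \log d_1}{p}} \cdot \frac{\mu r}{\sqrt{d_1 d_2}}\right) \;=\; O\!\left(\frac{\sigma r}{\sqrt{d_2}}\Bigl(\tfrac{d_1}{d_2}\Bigr)^{1/2}\sqrt{\tfrac{\mu^3 \log d_1}{p}}\right),
\]
which is exactly \eqref{eq:guarantee}.

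The only genuinely delicate step is tracking the correct exponents of $\mu$, $d_1/d_2$, and $r$ through the padding reduction; the bookkeeping must match simultaneously the sampling condition, the noise condition, and the error bound. I do not foresee any conceptual obstacle beyond this. One might worry that observing padded zero rows violates the i.i.d.\ sampling of \eqref{eq:obs_bernoulli}, but since the noise on those rows is degenerate (hence still sub-Gaussian with variance proxy at most $\sigma^2$), and the incoherence of the padded matrix is exactly what Lemma \ref{lem:source2} needs, this causes no issue in the analysis.
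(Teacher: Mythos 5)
Your proposal is correct and follows essentially the same route as the paper: zero-pad $\fl{P}$ to a square $d_1\times d_1$ matrix (the paper pads columns under the convention $\s{M}\ge\s{N}$, you pad rows under $\s{N}\ge\s{M}$ — an immaterial symmetry), observe that the padded matrix is $\mu d_1/d_2$-incoherent with unchanged singular values, apply the square-matrix guarantee of Lemma~\ref{lem:source2}, and convert $\|\fl{P}\|_\infty/\min_i\lambda_i = O(\mu r/\sqrt{d_1 d_2})$ to reach \eqref{eq:guarantee}. The bookkeeping of $\mu'$, the sampling and noise conditions, and the final algebra all match the paper's proof.
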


\begin{proof}[Proof of Lemma \ref{thm:randomsample2}]

Without loss of generality, let us assume that the matrix $\fl{P}$ is tall i.e. $\s{M} \ge \s{N}$. Now, 
let us construct the matrix  
\begin{align*}
    \fl{Q} =  
    \begin{bmatrix}
    \fl{P} & \fl{0}_{\s{M}\times \s{M-N}} 
    \end{bmatrix}
    =  \bar{\fl{U}} \f{\Sigma} [\bar{\fl{V}}^{\s{T}} \; \f{0}_{\s{N-M}}^{\s{T}}]
\end{align*}
where $\fl{Q}\in \bb{R}^{\s{M}\times \s{M}}$. 
Clearly, the decomposition $\fl{Q}=\bar{\fl{U}} \f{\Sigma} [\bar{\fl{V}}^{\s{T}} \; \f{0}_{\s{N-M}}^{\s{T}}]$ also coincides with the SVD of $\fl{Q}$ since both matrices $\bar{\fl{U}}$ and $[\bar{\fl{V}}^{\s{T}} \; \f{0}_{\s{N-M}}^{\s{T}}]^{\s{T}}$ are orthonormal matrices while $\Sigma$ remains unchanged. In case when $\s{N}> \s{M}$, we can construct $\fl{Q}$ similarly by vertically stacking $\fl{P}$ with a zero matrix of dimensions $(\s{N-M})\times \s{M}$. Hence, generally speaking, let us denote $d_1=\max(\s{M},\s{N})$ and $d_2=\min(\s{M},\s{N})$.

The matrix $\fl{Q}$ is $\bar{\mu}$-incoherent where $\bar{\mu}r(d_1)^{-1}=\mu r d_2^{-1}$ implying that $\bar{\mu}=\mu d_1/d_2$. Moreover, we also have $\|\fl{Q}\|_{\infty}=\|\fl{P}\|_{\infty}$ implying that $\max_{ij}\left|\fl{P}_{ij}\right|=\max_{ij}\left|\fl{Q}_{ij}\right|$.
Therefore, by invoking Lemma \ref{lem:source},
the sample size must obey 
\begin{align*}
    &p \ge \frac{C\mu^2 d_1}{d_2^2}\log^3 (d_1) \quad \text{and} \quad \sigma = O\Big(\sqrt{\frac{pd_2^{3}}{d_1^{2} \mu^3\log^3 d_1}}\|\fl{P}\|_{\infty}\Big) 
\end{align*}
Then with probability at least $O(d_1^{-3})$, we can recover a  matrix $\widehat{\fl{Q}}$ such that 
\begin{align*}
    \|\widehat{\fl{Q}}-\fl{Q}\|_{\infty} \le O\Big(\frac{\sigma}{\min_i\lambda_i}\sqrt{\frac{\mu \frac{d_1}{d_2} d_1\log d_1}{p}}\|\fl{P}\|_{\infty}\Big).
\end{align*}
Using the fact that $\|\fl{P}\|_{\infty} \le \max_i \lambda_i  \|\bar{\fl{U}}\|_{2,\infty}\|\bar{\fl{V}}\|_{2,\infty} = \max_i \lambda_i  \mu r/\sqrt{d_1 d_2} \implies \|\fl{P}\|_{\infty}/\min_i \lambda_i = \kappa \cdot \mu r/\sqrt{d_1 d_2} = O\Big(\mu r/\sqrt{d_1 d_2}\Big) $ (using the fact that $\kappa = O(1)$ , we obtain a matrix $\widehat{\fl{P}}$ such that 
\begin{align*}
    \|\widehat{\fl{P}}-\fl{P}\|_{\infty} \le O\Big(\frac{\sigma \mu  r}{\sqrt{d_1d_2}}\Big(\frac{d_1}{d_2}\Big)^{1/2}\sqrt{\frac{\mu d_1 \log d_1}{p}}\Big) = O\Big(\frac{\sigma r}{\sqrt{d_2}}\Big(\frac{d_1}{d_2}\Big)^{1/2}\sqrt{\frac{\mu^3 \log d_1}{p}}\Big).
\end{align*}

\end{proof}

\begin{algorithm}[!t]
\caption{\textsc{Estimate Low Rank Matrix (Sub-matrix) Offline - Rectangular Matrices}   \label{algo:estimate_offline_2}}
\begin{algorithmic}[1]
\REQUIRE rows $\ca{U}\subseteq [\s{M}]$, columns $\ca{V}\subseteq[\s{N}]$, noise $\sigma^2$, rank $r$ of $\fl{P}$, bernoulli sampling probability $0\le p \le 1$. 

\STATE Set $d_2=\min(\left|\ca{U}\right|,\left|\ca{V}\right|)$ and $\lambda=C_{\lambda}\sigma\sqrt{d_2p}$ for some constant $C_{\lambda}>0$. 

\STATE For each tuple of indices $(i,j)\in \ca{U}\times \ca{V}$, independently set $\delta_{ij}=1$ with probability $p$ and $\delta_{ij}=0$ with probability $1-p$.

\STATE  Denote $\Omega=\{(i,j)\in \ca{U}\times \ca{V} \mid \delta_{ij}=1\}$. Observe $\fl{Z}_{ij}$ (noisy version of $\fl{P}_{ij}$) for all $(i,j)\in \Omega$

\STATE Without loss of generality, assume $|\ca{U}| \le |\ca{V}|$. For each $i\in \ca{V}$, independently set $\zeta_i$ to be a value in the set $[\lceil|\ca{V}|/|\ca{U}|\rceil]$ uniformly at random. Partition indices in $\ca{V}$ into $\ca{V}^{(1)},\ca{V}^{(2)},\dots,\ca{V}^{(k)}$ where $k=\lceil|\ca{V}|/|\ca{U}|\rceil$ and $\ca{V}^{(q)}=\{i\in \ca{V}\mid \zeta_i =q\}$ for each $q\in [k]$. Set $\Omega^{(q)}\leftarrow \Omega \cap (\ca{U}\times \ca{V}^{(q)})$ for all $q\in [k]$. \#\textit{If $|\ca{U}| \ge |\ca{V}|$, we partition the indices in $\ca{U}$}. 

\FOR{$q\in [k]$}
\STATE Solve convex program \vspace*{-15pt}
\begin{align}\label{eq:convex_offline_2}
    \min_{\widehat{\fl{P}}^{(q)}\in \bb{R}^{|\ca{U}|\times|\ca{V}^{(q)}|}} \frac{1}{2}\sum_{(i,j)\in \Omega^{(q)}}\Big(\widehat{\fl{P}}^{(q)}_{i\pi(j)}-\fl{Z}_{ij}\Big)^2+\lambda\|\widehat{\fl{P}}^{(q)}\|_{\star},
\end{align}
where $\|\widehat{\fl{P}}^{(q)}\|_{\star}$ denotes nuclear norm of matrix $\widehat{\fl{P}}^{(q)}$ and $\pi(j)$ is index of $j$ in set $\ca{V}^{(q)}$. 
\ENDFOR
\STATE Return  $\widehat{\fl{P}}\in \bb{R}^{\s{M}\times \s{N}}$ s.t. $\widehat{\fl{P}}_{\ca{U},\ca{V}^{(q)}}=\widehat{\fl{P}}^{(q)}$ for all $q\in [k]$ and for every  $(i,j)\not \in \ca{U}\times \ca{V}$, $\widehat{\fl{P}}_{ij}=0$.
\end{algorithmic}
\end{algorithm}

Here we recall the discussion in Remark \ref{rmk:undesirable}.
From Lemma \ref{thm:randomsample2}, we saw that in the guarantee provided in equation \ref{eq:guarantee}, the entry-wise error guarantee has an undesirable $(d_1/d_2)^{1/2}$ factor on the right hand side.  This is because appending a zero matrix to make the rectangular matrix square  leads to an increase in the incoherence factor by $(d_1/d_2)^{1/2}$ since there are many zero entries. This is what leads to the undesirable factor; but we can get around this by splitting the rectangular matrix randomly into approximately square sub-matrices and completing each of them individually. The improved algorithm for rectangular matrices is described in Algorithm \ref{algo:estimate_offline_2} - note that in Step 4, we split the rectangular matrix into approximately square matrices and in Steps 5-7, we complete each of the approximately square sub-matrix individually. 
Finally, in Step 8, we join the estimated sub-matrices to get an estimate of the entire matrix and return it. In the main paper, the corresponding steps are done in Lines 10-14 in Algorithm \ref{algo:estimate}.
We provide a formal proof in the following lemma:

\begin{lemma}\label{thm:randomsample3}
 Let the matrix $\fl{P}\in \bb{R}^{\s{M}\times \s{N}}$ satisfy the conditions as stated in Lemma \ref{thm:randomsample2}. Let $d_1=\max(\s{M},\s{N})$, $d_2 = \min(\s{M},\s{N})$ such that $d_2=\Omega(\mu r \log (rd_2))$, $1\ge p \ge C \mu^2 d_2^{-1} \log^3 d_2$ for some sufficiently large constant $C>0$ and $\sigma = O\Big(\sqrt{\frac{pd_2}{\mu^3\log d_2}}\|\fl{P}\|_{\infty}\Big)$. 
 We can use Algorithm \ref{algo:estimate_offline_2} with parameters ($\ca{U}=[\s{M}],\ca{V}=[\s{N}],\sigma^2,r,p$) to compute a matrix $\widehat{\fl{P}}\in \bb{R}^{\s{M}\times \s{N}}$ such that
 for any pair of indices $(i,j)\in [\s{M}]\times[\s{N}]$,
 \begin{align}\label{eq:rectangular}
  |\widehat{\fl{P}}_{ij}-\fl{P}_{ij}|\le O\Big(\frac{\sigma r }{\sqrt{d_2}}\sqrt{\frac{\mu^3\log d_2}{p}}\Big)
 \end{align}
with probability exceeding $1-O(d_2^{-3})$. Again, with probability exceeding $ 1-O(d_1d_2^{-4})$, the output
  matrix $\widehat{\fl{P}}$ also satisfies  
$
   \| \fl{P} - \widehat{\fl{P}}\|_{\infty} \le O\Big(\frac{\sigma r }{\sqrt{d_2}}\sqrt{\frac{\mu^3\log d_2}{p}}\Big).
$
\end{lemma}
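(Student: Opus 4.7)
The plan is to reduce the rectangular matrix completion problem to the square case by applying the column-partitioning trick alluded to in Remark \ref{rmk:undesirable} and Steps 10--12 of Algorithm~\ref{algo:estimate}. Without loss of generality assume $\s{N}\ge \s{M}$ so that $d_2=\s{M}$, $d_1=\s{N}$. We partition $[\s{N}]$ uniformly at random into $k=\lceil \s{N}/\s{M}\rceil$ groups $\ca{V}^{(1)},\dots,\ca{V}^{(k)}$ and set $\fl{P}^{(q)}=\fl{P}_{:,\ca{V}^{(q)}}$. Each submatrix has approximately $\s{M}$ columns, so it is essentially square, and by the Chernoff bound $|\ca{V}^{(q)}|=\Theta(\s{M})$ for every $q$ with probability $1-O(d_2^{-c})$ for any constant $c$.

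The key technical step is verifying that each $\fl{P}^{(q)}$ inherits the hypotheses of Lemma~\ref{lem:source2} (square SVD version), with incoherence $O(\mu)$ and condition number $O(1)$. Write $\fl{P}^{(q)}=\bar{\fl{U}}\f{\Sigma}\bar{\fl{V}}_{\ca{V}^{(q)},:}^{\s{T}}$ and let $\bar{\fl{V}}_{\ca{V}^{(q)},:}=\fl{Q}^{(q)}\fl{R}^{(q)}$ be a QR decomposition. Since each row of $\bar{\fl{V}}$ is included in $\ca{V}^{(q)}$ independently with probability $\approx \s{M}/\s{N}$ and has squared norm at most $\mu r/\s{N}$, a standard matrix Bernstein bound (e.g.\ \cite{tropp2012user}) gives
\[
\Bigl\|\bar{\fl{V}}_{\ca{V}^{(q)},:}^{\s{T}}\bar{\fl{V}}_{\ca{V}^{(q)},:}-\tfrac{|\ca{V}^{(q)}|}{\s{N}}\fl{I}_r\Bigr\|\le \tfrac{1}{2}\cdot\tfrac{|\ca{V}^{(q)}|}{\s{N}}
\]
with probability $1-O(d_2^{-c})$. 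Consequently $\fl{R}^{(q)}$ is a near-isometry scaled by $\sqrt{|\ca{V}^{(q)}|/\s{N}}$, so the singular values of $\fl{P}^{(q)}$ are $(1\pm o(1))\sqrt{|\ca{V}^{(q)}|/\s{N}}\,\lambda_i$ and the condition number is preserved. The left singular vectors of $\fl{P}^{(q)}$ are $\bar{\fl{U}}$ times an orthogonal rotation, so $\|\cdot\|_{2,\infty}\le\sqrt{\mu r/\s{M}}=\sqrt{\mu r/d_2}$; the right singular vectors equal $\fl{Q}^{(q)}$ up to a rotation, and $\|\fl{Q}^{(q)}\|_{2,\infty}\le \|\bar{\fl{V}}_{\ca{V}^{(q)},:}\|_{2,\infty}\|(\fl{R}^{(q)})^{-1}\|\le \sqrt{\mu r/\s{N}}\cdot O(\sqrt{\s{N}/|\ca{V}^{(q)}|})=O(\sqrt{\mu r/|\ca{V}^{(q)}|})$. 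Thus $\fl{P}^{(q)}$ has incoherence parameter $O(\mu)$ on a space of dimension $\Theta(d_2)$.

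Having established the hypotheses of Lemma~\ref{lem:source} (equivalently Lemma~\ref{lem:source2}) on each sub-matrix with effective dimension $d=\Theta(d_2)$ and incoherence $O(\mu)$, we verify that the lemma's sampling condition $p\ge C\mu^2 d_2^{-1}\log^3 d_2$ and noise condition $\sigma=O(\sqrt{pd_2/(\mu^3\log d_2)}\|\fl{P}\|_{\infty})$ in the statement of Lemma~\ref{thm:randomsample3} are exactly what is required. Applying Lemma~\ref{lem:source} with the arbitrary constant in its failure probability set to $c=3$ (which is the role of the free constant in ``$O(d^{-c})$''), we obtain an estimate $\widehat{\fl{P}}^{(q)}$ with
\[
\|\widehat{\fl{P}}^{(q)}-\fl{P}^{(q)}\|_{\infty}\le O\Bigl(\tfrac{\sigma r}{\sqrt{d_2}}\sqrt{\tfrac{\mu^3\log d_2}{p}}\Bigr)
\]
with probability at least $1-O(d_2^{-3})$ per group. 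For a fixed index $(i,j)$, the column $j$ lies in exactly one group $\ca{V}^{(q)}$, so the entrywise bound \eqref{eq:rectangular} follows with failure probability $O(d_2^{-3})$. For the full-matrix bound, we union bound over the $k=\lceil d_1/d_2\rceil$ groups, yielding total failure probability $O(d_1 d_2^{-4})$, as claimed.

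The main obstacle is Paragraph~2: one must confirm that a uniformly random column partition preserves both the incoherence of the right singular subspace and the condition number of the underlying factor. This is exactly where matrix concentration is indispensable, since the submatrix's SVD is a genuine perturbation of the parent's SVD via the QR factor $\fl{R}^{(q)}$; everything else (applying Lemma~\ref{lem:source} per group, a union bound) is bookkeeping.
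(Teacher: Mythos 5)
Your proposal is correct and follows essentially the same route as the paper's proof: a uniformly random partition of the long dimension into $\lceil d_1/d_2\rceil$ nearly-square blocks, a Chernoff bound on block sizes, a matrix Bernstein bound showing the Gram matrix of the subsampled singular-vector factor is well-conditioned (the paper orthogonalizes via $(\fl{V}_{\s{sub}}^{\s{T}}\fl{V}_{\s{sub}})^{-1/2}$ where you use a QR factor, which is equivalent), and then an application of the square-matrix completion guarantee to each block followed by a union bound. The only cosmetic difference is that the paper treats the tall case and partitions rows, while you treat the wide case and partition columns.
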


\begin{proof}
Let us assume that the matrix $\fl{P}$ is tall i.e. $\s{M} \ge \s{N}$. Now, let us partition the set of rows into $\frac{\s{M}}{\s{N}}$ groups by assigning each group uniformly at random to each row. Notice that the expected number of rows in each group is $\s{N}$ and by using Chernoff bound, the number of rows in each group lies in the interval $[\frac{\s{N}}{2},\frac{3\s{N}}{2}]$ with probability at least $1-2\exp(-\s{N}/12)$. When $\s{M} \le \s{N}$, we partition the set of columns in a similar manner into $\s{N}/\s{M}$ groups so the number of columns in each group lies in the interval $[\frac{\s{M}}{2},\frac{3\s{M}}{2}]$ with probability at least $1-2\exp(-\s{M}/12)$.
 Hence, generally speaking, let us denote $d_1=\max(\s{M},\s{N})$ and $d_2=\min(\s{M},\s{N})$; we constructed $d_1/d_2$ sub-matrices of $\fl{P}$ denoted by $\fl{P}^{(1)},\fl{P}^{(2)},\dots,\fl{P}^{(d_1/d_2)}$.
 With probability at least $1-2\exp(-d_2/12)$, the matrices $\fl{P}^{(1)},\fl{P}^{(2)},\dots,\fl{P}^{(d_1/d_2)}$ are approximately square i.e. the ratio of their dimensions is a constant and lie in the interval $[1/2,3/2]$.
 
 Let us analyze the guarantees on estimating  $\fl{P}^{(1)}$. The analysis for other matrices follow along similar lines. Note that $\fl{P}^{(1)}=\fl{U}\f{\Sigma}\fl{V}_{\s{sub}}^{\s{T}}$ where $\fl{V}_{\s{sub}}$ denotes the $\s{M}'\times r$ matrix where the rows in $\fl{V}_{\s{sub}}$ corresponds to the $\s{M}'$ rows in $\fl{V}$ assigned to $\fl{P}^{(1)}$. 
 
 First we will bound from below the minimum eigenvalue of the matrix $\fl{V}_{\s{sub}}^{\s{T}}\fl{V}_{\s{sub}}$. Note that every row of $\fl{V}$ is independently sampled  with probability $p \triangleq d_2/d_1$ for the matrix $\fl{P}^{(1)}$. Hence, we have 
 \begin{align*}
     \frac{1}{p}\fl{V}_{\s{sub}}^{\s{T}}\fl{V}_{\s{sub}} = \frac{1}{p}\sum_{i\in [d_1]} \delta_i\fl{V}_i\fl{V}_i^{\s{T}} = \sum_{i\in [d_1]} \fl{W}^{(i)}
 \end{align*}
 where $\delta_i$ denotes the indicator random variable which is true when $\fl{V}_i$ (the $i^{\s{th}}$ row of $\fl{V}$) is chosen for $\fl{P}^{(1)}$ and $\fl{W}^{(i)}=\frac{1}{p} \delta_i\fl{V}_i\fl{V}_i^{\s{T}}$. Notice that the random matrices $\fl{W}^{(i)}$ are independent with $\bb{E}\fl{W}^{(i)}=\fl{V}_i\fl{V}_i^{\s{T}}$. Hence we define $\fl{Z}^{(i)}=\fl{W}^{(i)}-\bb{E}\fl{W}^{(i)}$ satisfying $\bb{E}\fl{Z}^{(i)}=0$. Moreover, for all $i\in [d_1]$, we have $\|\fl{Z}^{(i)}\|_2\le \Big(1+\frac{1}{p}\Big)\|\fl{V}_i\fl{V}_i^{\s{T}}\|_2 \le \Big(1+\frac{1}{p}\Big) \max_i \|\fl{V}_i\|_2^2 \le \frac{2\mu r}{p d_1}$. Next, we can show the following:
 \begin{align*}
     &\|\sum_{i\in [d_1]}\fl{Z}^{(i)}(\fl{Z}^{(i)})^{\s{T}}\|_2 \le \|\Big(\frac{1}{p}-1\Big)\sum_{i\in [d_1]}(\fl{V}_i\fl{V}_i^{\s{T}})(\fl{V}_i\fl{V}_i^{\s{T}})^{\s{T}}\|_2 \le \|\Big(\frac{1}{p}-1\Big)\|\fl{V}_i\|^2\sum_{i\in [d_1]}(\fl{V}_i\fl{V}_i^{\s{T}})\|_2 \\
     &\le \frac{\mu r}{pd_1} \lambda_{\max}(\fl{V}^{\s{T}}\fl{V}).
 \end{align*}
 Similarly, we will also have 
 \begin{align*}
     \|\sum_{i\in [d_1]}(\fl{Z}^{(i)})^{\s{T}}\fl{Z}^{(i)}\|_2 \le \frac{\mu r}{pd_1} \lambda_{\max}(\fl{V}^{\s{T}}\fl{V}) 
     \le \frac{\mu r}{p d_1}
 \end{align*}
 where we used that $\fl{V}^{\s{T}}\fl{V}$ is orthogonal.
 Therefore, by using Bernstein's inequality for matrices (Theorem 1.6 in \cite{tropp2012user}), we have with probability at least $1-\delta$,
 \begin{align*}
     \|\frac{1}{p}\fl{V}_{\s{sub}}^{\s{T}}\fl{V}_{\s{sub}}-\fl{V}^{\s{T}}\fl{V}\| \le \frac{2\mu r}{3pd_1}\log \frac{2r}{\delta}+\sqrt{\frac{\mu r}{pd_1} \log \frac{2r}{\delta}}.
 \end{align*}
 Hence, by using Weyl's inequality, we will have with probability $1-\delta$
 \begin{align*}
     \lambda_{\min}(\fl{V}_{\s{sub}}^{\s{T}}\fl{V}_{\s{sub}}) \ge p-\frac{2\mu r}{3d_1}\log \frac{2r}{\delta}-\sqrt{\frac{p\mu r}{d_1} \log \frac{2r}{\delta}}.
 \end{align*}
 Hence, by substituting $\delta=d_2^{-3}$ we have that with probability at least $1-d_2^{-3}$, if $d_2=\Omega(\mu r \log (rd_2))$, then  $\lambda_{\min}(\fl{V}_{\s{sub}}^{\s{T}}\fl{V}_{\s{sub}}) \ge p/2$ (since $p=d_1/d_2$) implying that $\fl{V}_{\s{sub}}^{\s{T}}\fl{V}_{\s{sub}}$ is invertible. 
 Also, under the same condition, note that we can show similarly that $\lambda_{\max}(\fl{V}_{\s{sub}}^{\s{T}}\fl{V}_{\s{sub}}) \le 3p/2$ implying that the condition number of each sub-matrix also stays $O(1)$ with probability at least $1-d_2^{-3}$. Now, note that $\fl{V}_{\s{sub}}$ is not orthogonal and therefore, we have
 \begin{align*}
     \fl{P}^{(1)}= \fl{U}\f{\Sigma}(\fl{V}_{\s{sub}}^{\s{T}}\fl{V}_{\s{sub}})^{1/2}(\fl{V}_{\s{sub}}^{\s{T}}\fl{V}_{\s{sub}})^{-1/2}\fl{V}_{\s{sub}}^{\s{T}} = \fl{U}\widehat{\fl{U}}\widehat{\f{\Sigma}}\widehat{\fl{V}}(\fl{V}_{\s{sub}}^{\s{T}}\fl{V}_{\s{sub}})^{-1/2}\fl{V}_{\s{sub}}^{\s{T}}
 \end{align*}
where $\widehat{\fl{U}}\widehat{\Sigma}\widehat{\fl{V}}$ is the SVD of the matrix $\f{\Sigma}(\fl{V}_{\s{sub}}^{\s{T}}\fl{V}_{\s{sub}})^{1/2}$.
Since $\widehat{\fl{U}}$ is orthogonal, $\widehat{\fl{U}}\fl{U}$ is orthogonal as well. Similarly, $(\widehat{\fl{V}}(\fl{V}_{\s{sub}}^{\s{T}}\fl{V}_{\s{sub}})^{-1/2}\fl{V}_{\s{sub}}^{\s{T}})^{\s{T}}$ is orthogonal as well whereas $\widehat{\f{\Sigma}}$ is diagonal. Hence $\fl{U}\widehat{\fl{U}}\widehat{\f{\Sigma}}\widehat{\fl{V}}(\fl{V}_{\s{sub}}^{\s{T}}\fl{V}_{\s{sub}})^{-1/2}\fl{V}_{\s{sub}}^{\s{T}}$ indeed corresponds to the SVD of $\fl{P}^{(1)}$ and we only need to argue about the incoherence of $\widehat{\fl{U}}\fl{U}$ and $\widehat{\fl{V}}(\fl{V}_{\s{sub}}^{\s{T}}\fl{V}_{\s{sub}})^{-1/2}\fl{V}_{\s{sub}}^{\s{T}}$. Notice that $\max_i \|(\fl{U}\widehat{\fl{U}})^{\s{T}}\fl{e}_i\|\le \|\fl{U}^{\s{T}}\fl{e}_i\|\le \sqrt{\frac{\mu r}{d_2}}$. On the other hand, 
\begin{align*}
    &\max_i \|\widehat{\fl{V}}(\fl{V}_{\s{sub}}^{\s{T}}\fl{V}_{\s{sub}})^{-1/2}\fl{V}_{\s{sub}}^{\s{T}}\fl{e}_i\| \le \max_i \|(\fl{V}_{\s{sub}}^{\s{T}}\fl{V}_{\s{sub}})^{-1/2}\fl{V}_{\s{sub}}^{\s{T}}\fl{e}_i\| \\
    &\le \frac{\|\fl{V}_{\s{sub}}\|_{2,\infty}}{\sqrt{\lambda_{\min}(\fl{V}_{\s{sub}}^{\s{T}}\fl{V}_{\s{sub}})}} \le \frac{\|\fl{V}\|_{2,\infty}}{\sqrt{p/2}} \le \sqrt{\frac{2\mu r}{d_2}}.
\end{align*}

Hence, with probability $1-2\exp(-d_2/12)-O(d_2^{-3})$, we can recover an estimate $\widehat{\fl{P}}^{(1)}$ and apply Lemma \ref{thm:randomsample2} to conclude that (recall that the rank of $\fl{P}^{(1)}$ is $O(1)$, we have proved that the condition number of $\fl{P}^{(1)}$ is $O(1)$ w.p. at least $1-O(d_2^{-3})$, the incoherence factor of $\fl{P}^{(1)}$ has increased by a factor of at most $2$ w.p. at least $1-O(d_2^{-3})$ and the ratio of the dimensions of $\fl{P}^{(1)}$ is in the interval $[1/2,3/2]$ w.p. at least $1-\exp(-d_2/12)$ implying that the conditions of Lemma \ref{thm:randomsample2} are satisfied for $\fl{P}^{(1)}$ w.h.p)
\begin{align*}
   \| \fl{P}^{(1)} - \widehat{\fl{P}}^{(1)}\|_{\infty} \le O\Big(\frac{\sigma r}{\sqrt{d_2}}\sqrt{\frac{\mu^3\log d_2}{p}}\Big).
\end{align*}
as long as the conditions stated in the Lemma are satisfied.
This implies that for any pair of indices $(i,j)\in [\s{M}]\times[\s{N}]$, we must have
 \begin{align}\label{eq:rectangular_2}
  |\widehat{\fl{P}}_{ij}-\fl{P}_{ij}|\le O\Big(\frac{\sigma r }{\sqrt{d_2}}\sqrt{\frac{\mu^3\log d_2}{p}}\Big)
 \end{align}
with probability exceeding $1-O(d_2^{-3})$

Finally, by taking a union bound over all the $d_1/d_2$ partitions, we can compute estimates of all the sub-matrices $\fl{P}^{(1)},\fl{P}^{(2)},\dots,\fl{P}^{(d_1/d_2)}$ that have similar guarantees as above with probability at least $1-2d_1d_2^{-1}\exp(-d_2/12)-O(d_1d_2^{-4})$. Hence, by combining all the estimates, we can obtain a final estimate $\widehat{\fl{P}}$ of the matrix $\fl{P}$ that satisfies
\begin{align*}
   \| \fl{P} - \widehat{\fl{P}}\|_{\infty} \le O\Big(\frac{\sigma r}{\sqrt{d_2}}\sqrt{\frac{\mu^3\log d_2}{p}}\Big).
\end{align*}
\end{proof}

 Notice that the theoretical guarantees presented in Lemmas \ref{lem:source2}, \ref{thm:randomsample2} and \ref{thm:randomsample3} hold for the offline low rank matrix completion problem when each (noisy) entry of the low rank matrix $\fl{P}\in \bb{R}^{\s{M}\times \s{N}}$ is observed independently with some probability $p$ (Bernoulli sampling model with probability $p$ - see eq. \ref{eq:obs_bernoulli}). To summarize, if the observed set of entries is $\Omega \subseteq [\s{M}]\times [\s{N}]$ (each index $(i,j)\in [\s{M}]\times [\s{N}]$ is present in $\Omega$ with probability $p$), then we can solve several convex optimization problems (see Steps 5-7 in Algorithm \ref{algo:estimate_offline_2}) restricted to observations in $\Omega$ to compute an estimate $\widehat{\fl{P}}$ of the low rank matrix $\fl{P}$.
 
However, in our problem setting (Section \ref{subsec:prob_defn}), recall that in each round $t\in [\s{T}]$, for each user $u\in [\s{M}]$, some item $\rho_u(t)$ is recommended. Our goal is to translate the theoretical guarantees under Bernoulli sampling model with probability $p$ (for some pre-determined $p$) to our setting. A simple approach (as mentioned in Remark \ref{rmk:observation}) is the following (described in Algorithm \ref{algo:estimate_1}): we first sample a set of indices $\Omega\subseteq [\s{M}]\times [\s{N}]$  such that each index $(i,j)\in [\s{M}]\times [\s{N}]$ is present in $\Omega$ with probability $p$ (Step 2 in Algorithm \ref{algo:estimate_1}). Subsequently, we aim to obtain a single observation (noisy) corresponding to each pair of indices in $\Omega$; in each round, for each user $i\in [\s{M}]$, we recommend an item $j\in [\s{N}]$ such that $(i,j)\in \Omega$ and has not been observed yet . If such an item is unavailable for the user $i$, we recommend any arbitrary item $j$ such that $(i,j)\not\in\Omega$ (Step 5 in Algorithm \ref{algo:estimate_1}) but ignore the observation while reconstruction of the reward matrix; note that this is unnecessary in practice. Once we have obtained observations corresponding to all pairs of indices in $\Omega$ (we discard all observations corresponding to entries not in $\Omega$), we can solve the convex optimization problems restricted to the observed entries in $\Omega$ (Steps 8-10 in Algorithm \ref{algo:estimate_1}) to compute an estimate $\widehat{\fl{P}}$ of the reward matrix $\fl{P}$ and apply the theoretical guarantees in Lemma \ref{thm:randomsample3} directly. The total number of rounds needed will be $\max_{i\in [\s{M}]}\left|(i,j)\in \Omega \mid j \in [\s{N}] \right|$ i.e. the maximum number of indices in a particular row present in $\Omega$ (which we will show to be bounded from above with high probability in the following corollary). This is because we wait until we have one noisy observation for every pair of indices in $\Omega$.

In Algorithm \ref{algo:estimate} in the main paper, the aforementioned procedure is incorporated in Steps 10-14.

\begin{algorithm}[!t]
\caption{\textsc{Estimate Reward Matrix (Sub-matrix) Online}   \label{algo:estimate_1}}
\begin{algorithmic}[1]
\REQUIRE users $\ca{U}\subseteq [\s{M}]$, items $\ca{V}\subseteq[\s{N}]$, noise $\sigma^2$, rank $r$ of $\fl{P}$, bernoulli sampling probability $0\le p \le 1$. Index of round $t$ is relative to the first round when the algorithm is invoked; hence $t=1,2,\dots$.

\STATE Set $d_2=\min(\left|\ca{U}\right|,\left|\ca{V}\right|)$ and $\lambda=C_{\lambda}\sigma\sqrt{d_2p}$ for some constant $C_{\lambda}>0$. 

\STATE For each tuple of indices $(i,j)\in [\s{M}]\times [\s{N}]$, independently set $\delta_{ij}=1$ with probability $p$ and $\delta_{ij}=0$ with probability $1-p$.

\STATE  Denote $\Omega=\{(i,j)\in \ca{U}\times \ca{V} \mid \delta_{ij}=1\}$ and
 $m=\max_{i \in [\s{M}]}\mid |j \in \ca{V}\mid (i,j) \in \Omega|$ to be the maximum number of index tuples in a particular row. For all $(i,j)\in \Omega$, set $\s{Mask}_{ij}=0$.

\FOR{rounds $t=1,2,\dots,m$}
 \STATE For each user $u\in \ca{U}$, recommend an item $\rho_u(t)$ in $\{j \in \ca{V}\mid (u,j)\in \Omega, \s{Mask}_{uj}=0\}$ and set $\s{Mask}_{u\rho_u(t)}=1$. 
If not possible then recommend any item $\rho_u(t)$ in $\ca{V}$ s.t. $(u,\rho_u(t))\not\in\Omega$.
Observe $\fl{R}^{(t)}_{u\rho_u(t)}$.
\ENDFOR

\STATE Without loss of generality, assume $|\ca{U}| \le |\ca{V}|$. For each $i\in \ca{V}$, independently set $\zeta_i$ to be a value in the set $[\lceil|\ca{V}|/|\ca{U}|\rceil]$ uniformly at random. Partition indices in $\ca{V}$ into $\ca{V}^{(1)},\ca{V}^{(2)},\dots,\ca{V}^{(k)}$ where $k=\lceil|\ca{V}|/|\ca{U}|\rceil$ and $\ca{V}^{(q)}=\{i\in \ca{V}\mid \zeta_i =q\}$ for each $q\in [k]$. Set $\Omega^{(q)}\leftarrow \Omega \cap (\ca{U}\times \ca{V}^{(q)})$ for all $q\in [k]$. \#\textit{If $|\ca{U}| \ge |\ca{V}|$, we partition the indices in $\ca{U}$}. 

\FOR{$q\in [k]$}
\STATE Solve convex program \vspace*{-15pt}
\begin{align}\label{eq:convex_3}
    \min_{\widehat{\fl{P}}^{(q)}\in \bb{R}^{|\ca{U}|\times|\ca{V}^{(q)}|}} \frac{1}{2}\sum_{(i,j)\in \Omega^{(q)}}\Big(\widehat{\fl{P}}^{(q)}_{i\pi(j)}-\fl{Z}_{ij}\Big)^2+\lambda\|\widehat{\fl{P}}^{(q)}\|_{\star},
\end{align}
where $\|\widehat{\fl{P}}^{(q)}\|_{\star}$ denotes nuclear norm of matrix $\widehat{\fl{P}}^{(q)}$ and $\pi(j)$ is index of $j$ in set $\ca{V}^{(q)}$. 
\ENDFOR
\STATE Return  $\widehat{\fl{P}}\in \bb{R}^{\s{M}\times \s{N}}$ s.t. $\widehat{\fl{P}}_{\ca{U},\ca{V}^{(q)}}=\widehat{\fl{P}}^{(q)}$ for all $q\in [k]$ and for every  $(i,j)\not \in \ca{U}\times \ca{V}$, $\widehat{\fl{P}}_{ij}=0$.

\end{algorithmic}
\end{algorithm}

\begin{coro}\label{coro:obs3}
Consider a set $\Omega\subseteq [\s{M}]\times[\s{N}]$ of indices such that every index $(i,j)\in [\s{M}]\times[\s{N}]$
is present in $\Omega$ independently with probability $p$.
Consider algorithm \ref{algo:estimate_1} with parameters ($\ca{U}=[\s{M}],\ca{V}=[\s{N}],\sigma^2,r,p$) that recommends items to users according to Step $5$ from the set $\Omega$. Suppose the rank $r$ reward matrix $\fl{P}$ and parameters $p,\sigma$ satisfies the conditions stated in Lemma \ref{thm:randomsample3}.
In that case, using $m=O\Big(\s{N}p+\sqrt{\s{N}p\log \s{M}\delta^{-1}}\Big)$ rounds, Algorithm \ref{algo:estimate_1} is able to compute a matrix $\widehat{\fl{P}}$ such that for any $(i,j)\in [\s{M}]\times[\s{N}]$, we have with probability exceeding $ 1-\delta-O(d_2^{-3})$
 \begin{align}
  |\widehat{\fl{P}}_{ij}-\fl{P}_{ij}|\le O\Big(\frac{\sigma r }{\sqrt{d_2}}\sqrt{\frac{\mu^3\log d_2}{p}}\Big).
 \end{align}
\end{coro}

\begin{proof}[Proof of Corollary \ref{coro:obs3}]

 Recall that $d_1=\max(\s{M},\s{N})$ and $d_2=\min(\s{M},\s{N})$.
Suppose, with some parameter $p=\Omega(\mu^2d_2^{-1}\log ^3 d_2)$, we sample a set $\Omega\in [\s{M}]\times [\s{N}]$ of indices (Step 2 in Algorithm \ref{algo:estimate_1}). 
Let us define the event $\ca{F}_1$ which is true when the maximum number of indices observed in some row $m\triangleq \max_{i\in [\s{M}]}\left|(i,j)\in \Omega \mid j \in [\s{N}] \right|$ satisfies $m=\Omega\Big(\s{N}p+\sqrt{\s{N}p\log \s{M}\delta^{-1}}\Big)$. Recall that our goal is to obtain a single noisy reward observation for every (user,item) pair whose corresponding indices is present in $\Omega$. In Step 5 of Algorithm \ref{algo:estimate_1}, in each round, one item is recommended to every user -clearly the total number of rounds needed to achieve our goal is $m$.

We will bound the probability of the event $\ca{F}_1$ from above by using Chernoff bound. Let us denote the number of items observed for user $i\in [\s{M}]$ to be $Y_i$ i.e. $Y_i= \left|(i,j)\in \Omega \mid j \in [\s{N}]\right|$. Algorithm $\ca{A}$ can then obtain the noisy entries of $\fl{P}$ corresponding to the set $\Omega$ by doing the following: in each round, for each user $i\in [\s{M}]$, if there is an unobserved tuple of indices $(i,j)\in \Omega$, then $\ca{A}$ will recommend $j$ to user $i$ and obtain an noisy observation $\fl{P}_{ij}+\fl{E}_{ij}$; on the other hand, if there no unobserved entry, then $\ca{A}$ will simply recommend a random item $j$ such that $(i,j) \not \in \Omega$ (Step 5 in Algorithm \ref{algo:estimate_1}). 
Subequently, Algorithm \ref{algo:estimate_1} discards the observations corresponding to indices not in $\Omega$ and only utilizes the observations in $\Omega$ to compute an estimate of the reward matrix $\fl{P}$.
Notice that each of the random variables $Y_1,Y_2,\dots,Y_{\s{M}} \sim \s{Binomial}(\s{N},p)$ and are independent. By using Chernoff bound, we have that for each $i\in [\s{M}]$,
\begin{align*}
    &\Pr\Big(\cup_{i\in [\s{M}]}\left|Y_i-\s{N}p\right|\ge \s{N}p\epsilon\Big) \le 2\s{M}\exp\Big(-\frac{\epsilon^2\s{N}p}{3}\Big) \\
    &\implies Y_i \le \s{N}p+O\Big(\sqrt{\s{N}p\log \s{M}\delta^{-1}}\Big) \text{ for all }i\in[\s{M}] 
\end{align*}
with probability $1-\delta$ implying that $\Pr(\ca{F}_1)\le \delta$. Let $\ca{F}_2$ be the event when the recovered matrix $\widehat{\fl{P}}$ does not satisfy the guarantee on $\|\widehat{\fl{P}}-\fl{P}\|_{\infty}$ as stated in Lemma \ref{thm:randomsample3} equation \ref{eq:rectangular} given a set of observed indices $\Omega$ sampled according to equation \ref{eq:obs_bernoulli}. From Lemma \ref{thm:randomsample3}, we know that $\Pr(\ca{F}_2)=O(d_2^{-3})$ where $d_1=\max(\s{M},\s{N})$. Hence we can conclude $\Pr(\ca{F}_1 \cup \ca{F}_2) \le  \Pr(\ca{F}_1)+\Pr(\ca{F}_2) = \delta+O(d_2^{-3})$. Note that conditioned on the event that $\ca{F}_1$ does not hold true, we need only $m=O\Big(\s{N}p+\sqrt{\s{N}p\log \s{M}\delta^{-1}}\Big)$ rounds to obtain a set of noisy observations corresponding to $\Omega' \supseteq \Omega$. 

Hence, to conclude, with only $m=O\Big(\s{N}p+\sqrt{\s{N}p\log \s{M}\delta^{-1}}\Big)$ rounds, Alg. \ref{algo:estimate_1} can obtain a single noisy reward observation corresponding to each set of indices in $\Omega$ (where every tuple of indices is present with probability $p$); note that observations for indices outside $\Omega$ are discarded completely. Subsequently, the noisy reward observations in $\Omega$ are used to compute an estimate $\widehat{\fl{P}}$ of the reward matrix $\fl{P}$ with the (Steps 8-11 in Algorithm \ref{algo:estimate_1}) theoretical guarantees presented in Lemma \ref{thm:randomsample3} holding true.
This completes the proof of the corollary. 
\end{proof}

\begin{rmk}\label{rmk:failure_prob}[Remark 3 in \cite{chen2019noisy}]
Note the failure probability in Corollary \ref{coro:obs3}, Lemmas \ref{thm:randomsample3}, \ref{thm:randomsample2} and \ref{lem:source2}  is $O(d_2^{-3})$. However the constant $3$ can be replaced by any arbitrary constant $c$ for example $c=100$ without any change in the guarantees on $\|\widehat{\fl{P}}-\fl{P}\|_{\infty}$. Hence, the guarantees presented in Lemma \ref{thm:randomsample3} and Corollary \ref{coro:obs3} hold with probability at least $1-O(d_2^{-c})$ for any arbitrary constant $c$.
\end{rmk}

As mentioned in Remark \ref{rmk:median}, the drawbacks of Lemma \ref{thm:randomsample3} and Corollary \ref{coro:obs3} are that the smallest error that is possible to achieve by using Lemma \ref{lem:source} is by substituting $p=1$ and thereby obtaining $\|\widehat{\fl{P}}-\fl{P}\|_{\infty} \le \zeta=O\Big(\sigma(\min_i\lambda_{i})^{-1}\cdot \sqrt{\mu d\log d} \|\fl{P}\|_{\infty}\Big)$ and moreover, the probability of failure is polynomially small in $d_2$ (see Remark \ref{rmk:failure_prob}). The former issue is substantial when we need an estimate $\widehat{\fl{P}}$ of the low rank matrix $\fl{P}$ with a smaller entry-wise estimation error than $\zeta$. The second issue is substantial when the shorter dimension $d_2$ is small. We provide an improved algorithm (Algorithm \ref{algo:estimate_2}) with technical modifications to fix the two aforementioned issues. Below, we describe our main ideas at a high level first:  

\noindent \textbf{Repeated recommendations for small estimation error:} Recall that our strategy for computing an estimate $\widehat{\fl{P}}$ with theoretical guarantees was the following two-step procedure 1) obtain a subset of indices $\Omega\subseteq [\s{M}]\times [\s{N}]$ such that each index $(i,j)$ is present with probability $p>0$ 2) obtain noisy reward observations corresponding to all indices ((user,item) pairs) in $\Omega$ - in each round, for each user $i\in [\s{M}]$ we recommended an item $j\in [\s{N}]$ such that $(i,j)\in \Omega$ if such a $j$ exists otherwise we recommend any item $j\in [\s{N}]$ such that $(i,j)\not\in \Omega$.

In our problem, an algorithm has the flexibility of recommending an item more than once to the same user. Suppose we have sampled a subset of indices $\Omega\subseteq [\s{M}]\times [\s{N}]$ as described above and is fixed.
We modify the second step in the following way: for some fixed $s>0$, our modified aim is to obtain $s$ noisy observations corresponding to all entries ((user,item) pairs) in $\Omega$. To do so, we repeat the process of recommending items to users corresponding to indices in $\Omega$ $s$ times (See Step 5 in Algorithm \ref{algo:estimate_2}). Clearly, if the total number of rounds required to obtain noisy reward observations corresponding to  all pairs of indices in $\Omega$ once is $m$ (bounded from above w.h.p - see Corollary \ref{coro:obs3}), then the total number of rounds required for obtaining $s$ observations corresponding to each entry in $\Omega$ is $ms$.
As before, all reward observations outside the indices in $\Omega$ are discarded. For each entry $(i,j)\in \Omega$, we take $\fl{Z}_{ij}$ to be the average of the $s$ reward observations corresponding to the recommendation of item $j$ to user $u$- clearly, the variance proxy of the averaged observation is $\sigma^2/s$. Thus, we can again solve similar convex optimization problems as before (see Steps 13-15 in Algorithm \ref{algo:estimate_2}) 
to obtain the same theoretical guarantee as in Corollary \ref{coro:obs3} but with noise variance $\sigma^2/s$. This intuition is formalized in Lemma \ref{lem:min_acc} below. In the main paper, the process of repeatedly recommending the items in a fixed subset of indices $\Omega$ is described in the For Loop in Steps 1-8 in Algorithm \ref{algo:estimate} - note that $\Omega$ is fixed in For Loop in Step 1 and in beginning of every iteration, we restart the recommendation procedure. 

\noindent \textbf{Independent estimates and entry-wise median for small error probability:}

To increase the probability of success, we can compute $f$ independent estimates of $\fl{P}$ (Line 2 and Line 16 in Algorithm \ref{algo:estimate_2}) namely $\widehat{\fl{P}}^{(1)},\widehat{\fl{P}}^{(2)},\dots,\widehat{\fl{P}}^{(f)}$  and compute their entry-wise median i.e. $\widehat{\fl{P}}_{ij}=\s{median}(\widehat{\fl{P}}^{(1)}_{ij},\dots,\widehat{\fl{P}}^{(f)}_{ij})$ for all $(i,j)\in [\s{M}]\times [\s{N}]$.
From our previous argument, the number of rounds for obtaining a single estimate $\widehat{\fl{P}}$ of the matrix $\fl{P}$ is $O(ms)$ (where the noise variance is reduced to $\sigma^2/s$) with high probability. Hence the total number of rounds for obtaining $f$ estimates will be $\widetilde{O}(msf)$ with high probability. If we set $f=O(\log(\s{MN}\delta^{-1}))$ for some fixed $\delta>0$, then by taking a union bound over all $\s{MN}$ entries, we can show that our guarantees hold with probability at least $1-\delta$ (thus we remove the dependence of the failure probability on $d_2)$. Again, this intuition is formalized in Lemma \ref{lem:min_acc} below. In Algorithm \ref{algo:p1}, note that the $f$ independent estimates are computed in Lines 2-6 and the entry-wise median is computed in Line 7. In Algorithm \ref{algo:phased_elim}, the $f$ independent estimates for each sub-matrix are computed in Lines 4-10 and the entry-wise median is computed in Line 15. 

\begin{algorithm}[!t]
\caption{\textsc{Estimate Reward Matrix (Sub-matrix) Online - Improved Algorithm}   \label{algo:estimate_2}}
\begin{algorithmic}[1]
\REQUIRE users $\ca{U}\subseteq [\s{M}]$, items $\ca{V}\subseteq[\s{N}]$, noise $\sigma^2$, rank $r$ of $\fl{P}$, bernoulli sampling probability $0\le p \le 1$, repetitions $s$ and number of independent estimates $f$. Index of round $t$ is relative to the first round when the algorithm is invoked; hence $t=1,2,\dots$.

\STATE Set $d_2=\min(\left|\ca{U}\right|,\left|\ca{V}\right|)$ and $\lambda=C_{\lambda}\sigma\sqrt{d_2p}$ for some constant $C_{\lambda}>0$. 

\FOR{$z=1,2,\dots,f$} 

\STATE For each tuple of indices $(i,j)\in [\s{M}]\times [\s{N}]$, independently set $\delta_{ij}=1$ with probability $p$ and $\delta_{ij}=0$ with probability $1-p$.

\STATE  Denote $\Omega=\{(i,j)\in \ca{U}\times \ca{V} \mid \delta_{ij}=1\}$ and
 $m_{z}=\max_{i \in \ca{U}}\mid |j \in \ca{V}\mid (i,j) \in \Omega|$ to be the maximum number of index tuples in a particular row. 

\FOR{$\ell=1,2,\dots,s$}

\STATE For all $(i,j)\in \Omega$, set $\s{Mask}_{ij}=0$.

\FOR{$\ell'=1,2,\dots,m_z$}
 \STATE For each user $u\in \ca{U}$ in round $t=\sum_{z'=1}^{z-1}sm_{z'}+(\ell-1)s+\ell'$, recommend an item $\rho_u(t)$ in $\{j \in \ca{V}\mid (u,j)\in \Omega, \s{Mask}_{uj}=0\}$ and set $\s{Mask}_{u\rho_u(t)}=1$. 
If not possible then recommend any item $\rho_u(t)$ in $\ca{V}$ s.t. $(u,\rho_u(t))\not\in\Omega$.
Observe $\fl{R}^{(t)}_{u\rho_u(t)}$.
\ENDFOR

\ENDFOR

\STATE For each tuple $(u,j)\in \Omega$, compute $\fl{Z}_{uj}$ to be average of  $\{\fl{R}^{(t)}_{u\rho_u(t)} \text{ for }t\in [\sum_{z'=1}^{z-1}sm_{z'},\sum_{z'=1}^{z}sm_{z'}]\mid \rho_u(t)=j\}$. \# \textit{$\fl{Z}_{uj}$ is the average of $s$ independent observations at each index $(u,j)\in \Omega$ for a fixed $z$}

\STATE Without loss of generality, assume $|\ca{U}| \le |\ca{V}|$. For each $i\in \ca{V}$, independently set $\zeta_i$ to be a value in the set $[\lceil|\ca{V}|/|\ca{U}|\rceil]$ uniformly at random. Partition indices in $\ca{V}$ into $\ca{V}^{(1)},\ca{V}^{(2)},\dots,\ca{V}^{(k)}$ where $k=\lceil|\ca{V}|/|\ca{U}|\rceil$ and $\ca{V}^{(q)}=\{i\in \ca{V}\mid \zeta_i =q\}$ for each $q\in [k]$. Set $\Omega^{(q)}\leftarrow \Omega \cap (\ca{U}\times \ca{V}^{(q)})$ for all $q\in [k]$. \#\textit{If $|\ca{U}| \ge |\ca{V}|$, we partition the indices in $\ca{U}$}. 

\FOR{$q\in [k]$}
\STATE Solve convex program \vspace*{-15pt}
\begin{align}\label{eq:convex_3}
    \min_{\widehat{\fl{P}}^{(q)}\in \bb{R}^{|\ca{U}|\times|\ca{V}^{(q)}|}} \frac{1}{2}\sum_{(i,j)\in \Omega^{(q)}}\Big(\widehat{\fl{P}}^{(q)}_{i\pi(j)}-\fl{Z}_{ij}\Big)^2+\lambda\|\widehat{\fl{P}}^{(q)}\|_{\star},
\end{align}
where $\|\widehat{\fl{P}}^{(q)}\|_{\star}$ denotes nuclear norm of matrix $\widehat{\fl{P}}^{(q)}$ and $\pi(j)$ is index of $j$ in set $\ca{V}^{(q)}$. 
\ENDFOR
\STATE Compute  $\widehat{\fl{P}}^{(z)}\in \bb{R}^{\s{M}\times \s{N}}$ s.t. $\widehat{\fl{P}}^{(z)}_{\ca{U},\ca{V}^{(q)}}=\fl{Q}^{(q)}$ for all $q\in [k]$ and for every  $(i,j)\not \in \ca{U}\times \ca{V}$, $\widehat{\fl{P}}^{(z)}_{ij}=0$. \#\textit{We are computing independent estimates $\widehat{\fl{P}}^{(z)}$}
\ENDFOR
\STATE Obtain estimate $\widehat{\fl{P}}$ by taking the entry-wise median of $\widehat{\fl{P}}^{(1)},\widehat{\fl{P}}^{(2)},\dots,\widehat{\fl{P}}^{(f)}$. Return $\widehat{\fl{P}}$.
\end{algorithmic}
\end{algorithm}

\begin{lemma}[Restatement of Lemma \ref{lem:min_acc}]
Let rank $r=O(1)$ reward matrix $\fl{P}\in \bb{R}^{\s{M} \times \s{N}}$ with SVD decomposition $\fl{P}=\fl{\bar{U}}\f{\Sigma}\fl{\bar{V}}^{\s{T}}$ satisfy  $\|\fl{\bar{U}}\|_{2,\infty}\le \sqrt{\mu r/\s{M}}, \|\fl{\bar{V}}\|_{2,\infty}\le \sqrt{\mu r/\s{N}}$ and condition number $\kappa = O(1)$. Let $d_1=\max(\s{M},\s{N})$ and $d_2=\min(\s{M},\s{N})$ such that  
 $1 \ge p \ge C\mu^2d_2^{-1} \log^3 d_2$ for sufficiently large constant $C>0$. 
 Suppose we observe noisy entries of $\fl{P}$ according to observation model in (\ref{eq:obs}).
For any positive integer $s>0$ satisfying $\frac{\sigma}{\sqrt{s}}=O\Big(\sqrt{\frac{pd_2}{\mu^3\log d_2}}\|\fl{P}\|_{\infty}\Big)$, there exists an algorithm $\ca{A}$ (See Algorithm \ref{algo:estimate_2} with input parameters $\ca{U}=[\s{M}],\ca{V}=\s{N},\sigma^2,r,p,s,f$) with parameters $s,p,\sigma$ that uses $m=O\Big(s\log (\s{MN}\delta^{-1})(\s{N}p+\sqrt{\s{N}p\log \s{M}\delta^{-1}})\Big)$ rounds to recommend items for users and compute a matrix $\widehat{\fl{P}}$ such that with probability exceeding $ 1-O(\delta\log (\s{MN}\delta^{-1}))$, we have  
\begin{align}\label{eq:final}
   \| \fl{P} - \widehat{\fl{P}}\|_{\infty} \le O\Big(\frac{\sigma r }{\sqrt{sd_2}}\sqrt{\frac{\mu^3\log d_2}{p}}\Big).
\end{align}
\end{lemma}

\begin{proof}
\begin{claim}
Fix any index $(i,j)\in [\s{M}]\times [\s{N}]$. Then, if the conditions in the Lemma statement are satisfied, for any $z\in [f]$ in Line 2 of Alg. \ref{algo:estimate_2}, the computed estimate $\widehat{\fl{P}}^{(z)}$ in Line 16 in Alg. \ref{algo:estimate_2} satisfies the following: 1)  with probability exceeding $9/10$, we have 
 \begin{align*}
  |\widehat{\fl{P}}^{(z)}_{ij}-\fl{P}_{ij}|\le O\Big(\frac{\sigma r }{\sqrt{sd_2}}\sqrt{\frac{\mu^3\log d_2}{p}}\Big)
 \end{align*}
 2) the total number of rounds required to compute $\widehat{\fl{P}}^{(z)}$ is given by $sm_z = O\Big(s(\s{N}p+\sqrt{\s{N}p\log(\s{M}\delta^{-1})})\Big)$.
\end{claim}

\begin{proof}

\noindent \textbf{Repeated Recommendations for small estimation error:}
Consider Lines 5-10 in Algorithm \ref{algo:estimate_2} prior to which we have a fixed subset of indices $\Omega\subseteq [\s{M}]\times [\s{N}]$ where every index tuple $(i,j)$ is present in $\Omega$ with probability $p$. In Line 8, for each user $i\in [\s{M}]$, Algorithm \ref{algo:estimate_2} recommends an item $j\in [\s{N}]$ such that $(i,j)\in \Omega$ if such a $j$ exists otherwise we recommend any item $j\in [\s{N}]$ such that $(i,j)\not\in \Omega$. Because of the For Loop in Line 5, Algorithm \ref{algo:estimate_2} does the aforementioned step $s$ times for the same fixed subset of indices $\Omega$. After discarding all observations corresponding to indices outside $\Omega$, at the end of the For Loop in Line 10, we have $s$ noisy iid observations (sub-gaussian random variables) $\fl{R}_{ij}^{(t_1)},\fl{R}_{ij}^{(t_2)},\dots,\fl{R}_{ij}^{(t_s)}$ each with expectation $\fl{P}_{ij}$ and variance proxy $\sigma^2$ i.e. we have for all $h\in [s]$
\begin{align*}
    \bb{E}\exp\Big(\lambda(\fl{R}_{ij}^{(t_h)}-\fl{P}_{ij})\Big) \le \exp(\sigma^2\lambda^2/2) \; \forall \lambda\in \bb{R}.
\end{align*}
In Line 11 in Algorithm \ref{algo:estimate_2}, we take $\fl{Z}_{ij}=s^{-1}\sum_{h=1}^{s} \fl{R}_{ij}^{(t_h)}$. Clearly, we have $\bb{E}\fl{Z}_{ij} = \fl{P}_{ij}$ and further,
$\fl{Z}_{ij}$ has a variance proxy of $\sigma^2/s$ as shown below:
\begin{align*}
    &\bb{E}\exp\Big(\lambda(\fl{Z}_{ij}-\fl{P}_{ij})\Big) = \bb{E} \prod_{h=1}^{s}\exp\Big(\lambda s^{-1}(\fl{R}_{ij}^{(t_h)}-\fl{P}_{ij})\Big) \\
    &=\prod_{h=1}^{s} \bb{E}\exp\Big(\lambda s^{-1}(\fl{R}_{ij}^{(t_h)}-\fl{P}_{ij})\Big) \le \exp(\sigma^2\lambda^2/2s) \; \forall \lambda \in \bb{R}.
\end{align*}
Therefore, for any $z\in [f]$, if we solve the convex programs in Lines 13-16 of Algorithm \ref{algo:estimate_2}, then we directly apply the guarantees in Corollary \ref{coro:obs3} to conclude that the computed matrix in Line 16  $\widehat{\fl{P}}^{(z)}$ satisfies the following: for any $(i,j)\in [\s{M}]\times[\s{N}]$, we have with probability exceeding $ 1-\delta-O(d_2^{-c})$ (see Remark \ref{rmk:failure_prob} - we can set $c$ such that the failure probability $\delta+O(d_2^{-c})<1/10$)
 \begin{align}\label{eq:normalized}
  |\widehat{\fl{P}}^{(z)}_{ij}-\fl{P}_{ij}|\le O\Big(\frac{\sigma r }{\sqrt{sd_2}}\sqrt{\frac{\mu^3\log d_2}{p}}\Big)
 \end{align}
 where with $m_{z}=\max_{i \in [\s{M}]}\mid |j \in [\s{N}]\mid (i,j) \in \Omega|$, the total number of rounds required to compute $\widehat{\fl{P}}^{(z)}$ is given by $sm_z = O\Big(s(\s{N}p+\sqrt{\s{N}p\log(\s{M}\delta^{-1})})\Big)$. Also, note that since the new and reduced noise variance proxy is $\sigma^2/s$, hence the condition  $\sigma=O\Big(\sqrt{\frac{pd_2}{\mu^3\log d_2}}\|\mathbf{P}\|_{\infty}\Big)$ stated in Lemma \ref{thm:randomsample3} (referred to in Corollary 1 that is being invoked) translates to $\sigma/\sqrt{s}=O\Big(\sqrt{\frac{pd_2}{\mu^3\log d_2}}\|\mathbf{P}\|_{\infty}\Big)$. This is the new condition on $\sigma$ that is stated in the current lemma.
\end{proof}

\begin{claim}
Fix any index $(i,j)\in [\s{M}\times [\s{N}]]$.
Suppose the conditions in the Lemma statement hold true. Consider the final estimate $\widehat{\fl{P}}$ computed by taking the entry-wise median of $\widehat{\fl{P}}^{(1)},\widehat{\fl{P}}^{(2)},\dots,\widehat{\fl{P}}^{(f)}$. With $f=O(\log (\s{MN}\delta^{-1}))$, the $(ij)^{\s{th}}$ entry of the final estimate $\widehat{\fl{P}}_{ij}$ satisfies
\begin{align*}
  |\widehat{\fl{P}}_{ij}-\fl{P}_{ij}| = O\Big(\frac{\sigma r }{\sqrt{sd_2}}\sqrt{\frac{\mu^3\log d_2}{p}}\Big)
 \end{align*}
 with probability $1-\delta/\s{MN}$.
\end{claim}

\begin{proof}
\noindent \textbf{Independent estimates and entry-wise median for small probability of error:} Consider Lines 2,3 and 18 in Algorithm \ref{algo:estimate_2}. Algorithm \ref{algo:estimate_2} basically computes $f$ independent estimates $\widehat{\fl{P}}^{(1)},\widehat{\fl{P}}^{(2)},\dots,\widehat{\fl{P}}^{(f)}$ of the reward matrix $\fl{P}$ in order to boost the probability of success. 
Furthermore, we can compute a final estimate $\widehat{\fl{P}}$ (Line 18) by computing the entry-wise median of the matrix estimates $\widehat{\fl{P}}^{(1)},\widehat{\fl{P}}^{(2)},\dots,\widehat{\fl{P}}^{(f)}$ i.e. for all $(i,j)\in [\s{M}]\times [\s{N}]$, we compute $\widehat{\fl{P}}_{ij}=\s{median}(\widehat{\fl{P}}^{(1)}_{ij},\dots,\widehat{\fl{P}}^{(f)}_{ij})$.

Our goal is to increase the probability of success to $1-\delta$  and we will show that the entry-wise median of the $f$ estimates $\widehat{\fl{P}}$ does have the increased probability of success. As mentioned in the claim statement, fix the pair of indices $(i,j)$.
Consider the random variable \begin{align*}
 Y^{(z)} = \mathbf{1}\Big[|\widehat{\mathbf{P}}^{(z)}_{ij}-\mathbf{P}_{ij}|= O\Big(\frac{\sigma r }{\sqrt{sd_2}}\sqrt{\frac{\mu^3\log d_2}{p}}\Big)\Big]   
\end{align*}
which is $1$ with probability at least $9/10$ (from Claim 1).
Since the estimates $\widehat{\mathbf{P}}^{(1)}, \widehat{\mathbf{P}}^{(2)}, \dots,\widehat{\mathbf{P}}^{(f)}$ are independently computed , the random variables $Y^{(1)}, Y^{(2)}, \dots, Y^{(f)}$ are independent as well. Hence consider the median $\widehat{\mathbf{P}}_{ij} = \text{median}(\widehat{\mathbf{P}}^{(1)}_{ij},\widehat{\mathbf{P}}^{(2)}_{ij},\dots,\widehat{\mathbf{P}}^{(f)}_{ij})$. Note that the median $\widehat{\mathbf{P}}_{ij}$ will satisfy
$|\widehat{\mathbf{P}}_{ij}-\mathbf{P}_{ij}|= O\Big(\frac{\sigma r }{\sqrt{sd_2}}\sqrt{\frac{\mu^3\log d_2}{p}}\Big)$
if at least half of the random variables $Y^{(1)}, Y^{(2)}, \dots, Y^{(f)}$ are non-zero.  Hence, we can apply Chernoff bound directly to state that 
$\Pr(\sum_{i=1}^{f} Y^{(i)} < f/2 ) \le 2\exp(-4f/75)$
Therefore, by setting $f =O(\log (\mathsf{MN}\delta^{-1}))$, we must have that  $\Pr(\sum_{i=1}^{f} Y^{(i)} < f/2 ) \le \delta/\mathsf{MN}$. Hence we must have that
 $\widehat{\fl{P}}_{ij}$ satisfies
\begin{align}\label{eq:normalized_2}
  |\widehat{\fl{P}}_{ij}-\fl{P}_{ij}|= O\Big(\frac{\sigma r }{\sqrt{sd_2}}\sqrt{\frac{\mu^3\log d_2}{p}}\Big)
 \end{align}
 with probability $1-\delta/\s{MN}$. 
\end{proof}

Now, to complete the proof of the lemma, by taking a union bound over all indices, we must have that 
\begin{align}
    \|\widehat{\fl{P}}-\fl{P}\|_{\infty} \le O\Big(\frac{\sigma r}{\sqrt{sd_2}}\sqrt{\frac{\mu^3\log d_2}{p}}\Big)
\end{align}
with probability at least $1-\delta$. 
Also, note that the total number of rounds needed to compute these estimates is at most $\sum_{z}m_z sf$. From Corollary \ref{coro:obs3}, we know that for any $z\in [f]$, $m_z$ is bounded from above by $O(\s{N}p+\sqrt{\s{N}p\log(\s{M}\delta^{-1})})$ with probability $1-\delta$. Hence, by taking a union bound over all $z\in [f]$, the total number of rounds needed to compute all the $f$ estimates must be bounded from above by 
\begin{align*}
    \sum_{z}m_z sf = O\Big(sf(\s{N}p+\sqrt{\s{N}p\log(\s{M}\delta^{-1})})\Big)
\end{align*}
with probability at least $1-\delta f$. Therefore, by a union bound over both failure events, the total failure probability is $1-O(\delta+\delta\log (\s{MN}\delta^{-1}))= 1- O(\delta\log (\s{MN}\delta^{-1}))$. This completes the proof of the lemma.


\end{proof}

\section{Missing Proofs in Section \ref{sec:warmup}}\label{app:warm_up}

\begin{thmu}[Restatement of Theorem \ref{thm:baseline}]
Consider the rank-$r$ online matrix completion problem with $\s{M}$ users, $\s{N}$ items, $\s{T}$ recommendation rounds. Set $d_2=\min(\s{M},\s{N})$. Let $\fl{R}^{(t)}_{u\rho_u(t)}$ be the reward in each round, defined as in \eqref{eq:obs}. Suppose $d_2=\Omega(\mu r\log(rd_2))$. Let $\fl{P}\in \bb{R}^{\s{M}\times \s{N}}$ be the expected reward matrix that satisfies the conditions stated in Lemma \ref{lem:min_acc} , and let $\sigma^2$ be the noise variance in rewards. Then, Algorithm \ref{algo:p1}, applied to the online rank-$r$ matrix completion problem guarantees the following regret: 
\begin{align}\label{eq:ub_etan3}
   \s{Reg}(\s{T}) = O\Big(\Big(\s{T}^{\frac23}(\sigma^2r^2 \|\fl{P}\|_{\infty})^{\frac13}\Big(\frac{\mu^3 \s{N} \log d_2}{d_2}\Big)^{1/3}+\frac{\s{N}\mu^2}{d_2}\log^3 d_2\Big) \log^2(\s{MNT}) +\frac{\|\fl{P}\|_{\infty}}{\s{T}^{2}}\Big).
\end{align}
\end{thmu}

\begin{proof}[Proof of Theorem \ref{thm:baseline}]
 
Suppose we explore for a period of $\s{S}$ rounds such that the exploration period (Steps 3-7 in Algorithm \ref{algo:p1}) succeeds with a probability of $1-\nu$ i.e. conditioned on the event that
the exploration period succeeds, we obtain an estimate $\widehat{\fl{P}}$ of the reward matrix $\fl{P}$ satisfying $\|\fl{P}-\widehat{\fl{P}}\|_{\infty}\le \eta$. In that case, conditioned on the event that
the exploration period succeeds, at each step of the online algorithm in the exploitation phase (Step 9 in Algorithm \ref{algo:p1}), we suffer a regret of at most $2\eta$. To see this, fix any user $u\in [\s{M}]$. Let $i=\s{argmax}_{t\in [\s{N}]} \fl{P}_{ut}$ and $i'=\s{argmax}_t \widehat{\fl{P}}_{ut}$ be the items with the largest rewards in the matrices $\fl{P},\widehat{\fl{P}}$ respectively for the user $u$. In that case, using the fact that $\|\fl{P}-\widehat{\fl{P}}\|_{\infty}\le \eta$, we have
\begin{align*}
    \fl{P}_{ui'}-\fl{P}_{ui} = \fl{P}_{ui'}-\widehat{\fl{P}}_{ui'}+\widehat{\fl{P}}_{ui'}-\widehat{\fl{P}}_{ui}+\widehat{\fl{P}}_{ui}-\fl{P}_{ui} \ge -2\eta.
\end{align*}
Conditioned on the event that the exploration fails (and therefore the exploration stage as well), the regret at each step can be bounded from above by $2\|\fl{P}\|_{\infty}$. In that case, we have 
\begin{align*}
    \s{Reg}(\s{T}) &=  \frac{T}{\s{M}}\sum_{u \in [\s{M}]}\f{\mu}_u^{\star}- \sum_{t\in[\s{T}]}\frac{1}{\s{M}}\sum_{u\in[\s{M}]}\mathbf{P}_{u\pi_u(t)}\le \max_{u \in [\s{M}]} \Big(\s{T}\f{\mu}_u^{\star}-\sum_{t \in [\s{T}]}\fl{P}_{u\pi_u(t)}\Big) \\
    &\le 2\s{S}\|\fl{P}\|_{\infty}+2\s{(T-S)}\eta\Pr(\text{Exploration succeeds})+2\s{(T-S)}\|\fl{P}\|_{\infty}\Pr(\text{Exploration fails}) \\
    &\le 2\s{S}\|\fl{P}\|_{\infty}+2\s{T}\eta+2\s{T}\nu\|\fl{P}\|_{\infty}.
\end{align*}

Now, with $d_2=\min(\s{M},\s{N})$ satisfying $d_2=\Omega(\mu r \log (rd_2))$, we can directly use Lemma \ref{lem:min_acc} with $\s{S}=O(s\log (\s{MN}\delta^{-1})(\s{N}p+\sqrt{\s{N}p\log \s{M}\delta^{-1}})$ such that $\eta=O(\frac{\sigma r }{\sqrt{sd_2}}\sqrt{\frac{\mu^3\log d_2}{p}})$ and $\nu=\delta\log (\s{MN}\delta^{-1})$. Hence, we have 
\begin{align*}
    \s{Reg}(\s{T})  
    &\le O\Big(\underbrace{s\log (\s{MN}\delta^{-1})(\s{N}p+\sqrt{\s{N}p\log \s{M}\delta^{-1}}\Big)\|\fl{P}\|_{\infty}}_{\text{Term 1}}+\underbrace{\s{T}O\Big(\frac{\sigma r }{\sqrt{sd_2}}\sqrt{\frac{\mu^3\log d_2}{p}}\Big)}_{\text{Term 2}}\\
    &+\s{T}(\delta\log (\s{MN}\delta^{-1}))\|\fl{P}\|_{\infty}.
\end{align*}
For the sake of simplicity, we ignore some of the logarithmic terms and lower order terms in the regret while choosing the value of $sp$ that makes the regret small. To be precise, $sp$ is chosen by minimizing the quantity  $\mathsf{N}sp\|\mathbf{P}\|_{\infty}+\mathsf{T}\Big(\frac{\sigma r\sqrt{\mu^3 \log d_2}}{\sqrt{spd_2}}\Big)$ . It can be seen that the aforementioned quantity is minimized when the two terms are equal giving us  $ \mathsf{N}sp\|\mathbf{P}\|_{\infty}=\mathsf{T}\Big(\frac{\sigma r\sqrt{\mu^3 \log d_2}}{\sqrt{spd_2}}\Big)$ implying that $sp=( \mathsf{N}\|\mathbf{P}\|_{\infty})^{-2/3}\Big(\mathsf{T}\frac{\sigma r\sqrt{\mu^3 \log d_2}}{\sqrt{d_2}}\Big)^{2/3}$. 
Moreover, we will also choose $\delta=(\s{MNT})^{-4}$. Also, notice that $\s{N}p \ge 1$ since $p\ge C\mu^2d_2^{-1}\log ^3 d_2$ (for some appropriate constant $C\ge 1$) and $\mu \ge 1$; therefore
$\s{N}p+\sqrt{\s{N}p\log \s{M}\delta^{-1}}=O(\s{N}p\sqrt{\log\s{M}\delta^{-1}})$.
Subsequently, we have that  
\begin{align}\label{eq:etc_b1}
    \s{Reg}(\s{T}) = O\Big(\underbrace{\s{T}^{2/3}(\sigma^2r^2 \|\fl{P}\|_{\infty})^{1/3}\Big(\frac{\mu^3 \s{N} \log d_2}{d_2}\Big)^{1/3}\log^2(\s{MNT})}_{\text{Bound 1}}+\|\fl{P}\|_{\infty}\s{T}^{-2}\Big).
\end{align}

There exists an edge case when $sp \le C\mu^2d_2^{-1}\log^3 d_2$. Then we can substitute $s=1$ and $p=C\mu^2d_2^{-1}\log^3 d_2$. In that case, Term 2 will still be bounded by Bound 1 in equation \ref{eq:etc_b1} since $sp\le C\mu^2d_2^{-1}\log^3 d_2$. On the other hand Term 1 will now be bounded by $O(\frac{\s{N}\mu^2}{d_2}\log^3 d_2 \log^2 (\s{MNT})\|\fl{P}\|_{\infty})$. Hence, our regret will be bounded by 

\begin{align*}
    \s{Reg}(\s{T}) = O\Big(\s{T}^{2/3}(\sigma^2r^2 \|\fl{P}\|_{\infty})^{1/3}\Big(\frac{\mu^3 \s{N} \log d_2}{d_2}\Big)^{1/3}\log^2(\s{MNT})+\frac{\s{N}\mu^2}{d_2} \log^5(\s{MNT})\|\fl{P}\|_{\infty} +\|\fl{P}\|_{\infty}\s{T}^{-2}\Big).
\end{align*}

\end{proof}

\section{Missing Proofs in Section \ref{sec:main1}}\label{app:repeated}

In this section, we will consider the reward matrix $\fl{P}=\fl{u}\fl{v}^{\s{T}}=\lambda \fl{\bar{u}}\fl{\bar{v}}^{\s{T}}\in \bb{R}^{\s{M}\times \s{N}}$ to be a rank-$1$ matrix with $\|\fl{\bar{u}}\|_2=\|\fl{\bar{v}}\|_2=1$.
We make the following mild assumptions in line with Theorem \ref{thm:main1}: (recall that $j_{\max}=\s{arg}\max_i \fl{v}_i$ and $j_{\min}=\s{arg}\min_i \fl{v}_i$
\begin{enumerate}
    \item If $\s{M}=\s{T}^{\zeta}$ for $\zeta > \frac{1}{2}$, then the vector $\fl{\bar{u}}$ is  $(\s{M}^{-1/2\zeta},\mu)$-incoherent i.e. for any subset $\ca{U}\subseteq [\s{M}]$ of indices such that $|\ca{U}|\ge \s{M}^{1-(2\zeta)^{-1}}$, we must have $\|\fl{\bar{v}}_{\ca{U}}\|_{\infty}\le \sqrt{\frac{\mu}{|\ca{U}|}}\|\fl{\bar{v}}_{\ca{U}}\|_{2}$.
    \item $\beta=\max\Big(\left|\frac{\fl{\bar{v}}_{j_{\max}}}{\fl{\bar{v}}_{j_{\min}}}\right|,\left|\frac{\fl{\bar{v}}_{j_{\min}}}{\fl{\bar{v}}_{j_{\max}}}\right|\Big)$ for some positive constant $\beta>0$. Note that if we represent $\fl{P}=\fl{u}\fl{v}^{\s{T}}$ where $\fl{u}$ is the user embedding and $\fl{v}$ is the item embedding, then we have $\beta=\max\Big(\left|\frac{\fl{v}_{j_{\max}}}{\fl{v}_{j_{\min}}}\right|,\left|\frac{\fl{v}_{j_{\min}}}{\fl{v}_{j_{\max}}}\right|\Big)$. In the definition of $\beta$, the second term can be larger than the first because of the absolute value.
\end{enumerate}




In each phase $\ell$, we will maintain three groups of users (user groups $\ca{B}^{(\ell)},\ca{M}^{(\ell,1)},\ca{M}^{(\ell,2)}$ are disjoint and form a partition of $[\s{M}]$) and three corresponding groups of items (not necessarily disjoint) namely 1) $(\ca{B}^{(\ell)},[\s{N}])$ (initialized by $([\s{M}],[\s{N}])$, 2) ($\ca{M}^{(\ell,1)},\ca{N}^{(\ell,1)}$) (initialized with $(\phi,\phi)$) and 3) 
($\ca{M}^{(\ell,2)}$,$\ca{N}^{(\ell,2)}$) (initialized with $(\phi,\phi)$) such that $\ca{B}^{(\ell)}\cup \ca{M}^{(\ell,1)} \cup \ca{M}^{(\ell,2)} =[\s{M}]$ and $\ca{N}^{(\ell,1)},\ca{N}^{(\ell,2)} \subseteq [\s{N}]$. 
Suppose we have a sequence of tuples $(m_{\ell},\Delta_{\ell})_{\ell}$ that is going to be characterized precisely later.
In every phase indexed by $\ell$, we will compute three matrices $\widetilde{\fl{Q}}^{(\ell)}, \widetilde{\fl{P}}^{(\ell,1)}, \widetilde{\fl{P}}^{(\ell,2)} \in \bb{R}^{\s{M}\times \s{N}}$ (using $m_{\ell}$ rounds for each) that correspond to  estimates of three relevant sub-matrices of $\fl{P}$ namely $\fl{P}_{\ca{B}^{(\ell)},[\s{N}]},\fl{P}_{\ca{M}^{(\ell,1)},\ca{N}^{(\ell,1)}}$ and $\fl{P}_{\ca{M}^{(\ell,2)},\ca{N}^{(\ell,2)}}$ respectively. We will define the event $\ca{E}_1^{(\ell)}$ when the following holds true 
\begin{align*}
    &\left|\left|\widetilde{\fl{Q}}^{(\ell)}_{\ca{B}^{(\ell)},[\s{N}]}-\fl{P}_{\ca{B}^{(\ell)},[\s{N}]}\right|\right|_{\infty} \le \Delta_{\ell}/2 \\
    &\left|\left|\widetilde{\fl{P}}^{(\ell,1)}_{\ca{M}^{(\ell,1)},\ca{N}^{(\ell,1)}}-\fl{P}_{\ca{M}^{(\ell,1)},\ca{N}^{(\ell,1)}}\right|\right|_{\infty} \le \Delta_{\ell}/2 \\
    &\left|\left|\widetilde{\fl{P}}^{(\ell,2)}_{\ca{M}^{(\ell,2)},\ca{N}^{(\ell,2)}}-\fl{P}_{\ca{M}^{(\ell,2)},\ca{N}^{(\ell,2)}}\right|\right|_{\infty} \le \Delta_{\ell}/2
\end{align*}
by using a number of rounds $m_{\ell}$ that is bounded from above by $O\Big(\max(1,\frac{\s{N}}{\s{M}^{1-(2\zeta)^{-1}}})\frac{\sigma^2\mu^3\log^2(\s{MNT})}{\Delta_{\ell}}\Big)$. The randomness stems from the inherent randomness in the algorithm.

At each phase indexed by $\ell$, we will compute a temporary set of active items $\ca{T}_u^{(\ell+1)}$ for all users $u\in [\s{N}]$. We update $\ca{B}^{(\ell+1)} \equiv \Big\{u \in \ca{B}^{(\ell)}\mid \left|\max_{t}\fl{\widetilde{P}}^{(\ell)}_{ut}-\min_{t}\fl{\widetilde{P}}^{(\ell)}_{ut}\right|\le 2a\Delta_{\ell}\Big\}$ where an appropriate value of $a$ will be determined later. This corresponds to the sets of users whose rewards are almost similar across items and hence, it does not no matter which items are picked in the next phase.
We set $\ca{T}_u^{(\ell+1)} =[\s{N}]$ for all users $u\in\ca{B}^{(\ell+1)}$ and for the rest of the users (namely $u\in [\s{N}]\setminus\ca{B}^{(\ell+1)}$), we set $\ca{T}_u^{(\ell+1)}$ according to Steps 17,18 in Algorithm \ref{algo:phased_elim} for combined users in the set $\ca{B}^{(\ell)}\setminus \ca{B}^{(\ell+1)}$, $\ca{M}^{(\ell,1)}$ and $\ca{M}^{(\ell,2)}$ respectively. Subsequently, we update the following groups of users and corresponding items:
\begin{enumerate}
    \item We will set $v$ to be any user in $[\s{M}]\setminus \ca{B}^{(\ell+1)}$. Subsequently, we will define the set $\ca{M}^{(\ell+1,1)}=\{u \in [\s{M}]\setminus\ca{B}^{(\ell+1)} \mid \ca{T}_{u}^{(\ell+1)} \cap \ca{T}_{v}^{(\ell+1)}\neq \phi\}$. We update $\ca{N}^{(\ell+1,1)}=\bigcap_{u \in \ca{M}^{(\ell+1,1)}}\ca{T}_u^{(\ell+1)}$.  
    \item Finally, we will identify the set $\ca{M}^{(\ell+1,2)} \equiv  [\s{M}]\setminus (\ca{B}^{(\ell+1)}\cup \ca{M}^{(\ell+1,1)})$ to be the remaining users. As before, we update $\ca{N}^{(\ell+1,2)}=\bigcap_{u \in \ca{M}^{(\ell+1,2)}}\ca{T}_u^{(\ell+1)}$.
\end{enumerate}
For any phase indexed by $\ell$, we define the event $\ca{E}_2^{(\ell)}$ such that for every user  $u\in  \ca{M}^{(\ell,1)}$, the following holds:
\begin{align*}
   & \left|\fl{P}_{ut}-\max_{t'\in [\s{N}]}\fl{P}_{ut}\right| \le 2\Delta_{\ell-1} \text{  for all } t\in\ca{N}^{(\ell,1)} , u\in \ca{M}^{(\ell,1)}\\
    &\left|\fl{P}_{ut}-\max_{t'\in [\s{N}]}\fl{P}_{ut'}\right| \le 2\Delta_{\ell-1} \text{  for all } t\in\ca{N}^{(\ell,2)}, u\in \ca{M}^{(\ell,2)} \\
    &\left|\fl{P}_{ut}-\max_{t'\in [\s{N}]}\fl{P}_{ut'}\right| \le (2a+2)\Delta_{\ell-1} \text{  for all } t\in [\s{N}], u\in \ca{B}^{(\ell)}.
\end{align*}


Define the event $\ca{E}_3^{(\ell)}$ which is true when
$\ca{M}^{(\ell,1)}\subseteq \ca{C}_1$, $j_{\max}\in \ca{N}^{(\ell,1)}$ and  $\ca{M}^{(\ell,2)}\subseteq \ca{C}_2$, $j_{\min}\in \ca{N}^{(\ell,2)}$. Finally, let  $\ca{E}_4^{(\ell)}$ be the event which is true when
$\ca{M}^{(\ell,2)}\subseteq \ca{C}_1$, $j_{\min}\in \ca{N}^{(\ell,1)}$ and  $\ca{M}^{(\ell,1)}\subseteq \ca{C}_2$, $j_{\max}\in \ca{N}^{(\ell,2)}$. 

We now present a series of Lemma required for the proof of Theorem~\ref{thm:main1}. 
\begin{lemma}\label{lem1:first}
Consider two users $u,v$ such that $u \in \ca{C}_1$ and $v\in \ca{C}_2$. Suppose the event $\ca{E}_1^{(\ell)}$ is true and furthermore, either $\ca{E}_3^{(\ell)}$ or $\ca{E}_4^{(\ell)}$ is true. In that case, $\ca{T}_u^{(\ell+1)}\cap \ca{T}_v^{(\ell+1)} \neq \phi$ only if either $\max_{t \in [\s{N}]}\fl{P}_{ut}-\min_{t \in [\s{N}]}\fl{P}_{ut} \le 4\Delta_{\ell}$ or $\max_{t \in [\s{N}]}\fl{P}_{vt}-\min_{t \in [\s{N}]}\fl{P}_{vt} \le 4\Delta_{\ell}$ holds true.
\end{lemma}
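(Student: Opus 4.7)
\textbf{Proof proposal for Lemma \ref{lem1:first}.} My plan is to combine the entry-wise accuracy of the estimates granted by $\ca{E}_1^{(\ell)}$ with the rank-$1$ structure of $\fl{P}$, proceeding by a short case analysis on which of the three groups ($\ca{B}^{(\ell+1)}$, $\ca{B}^{(\ell)}\setminus\ca{B}^{(\ell+1)}$, $\ca{M}^{(\ell,1)}\cup\ca{M}^{(\ell,2)}$) each of $u,v$ belongs to. The central identity I will use is that since $\fl{P}=\lambda\fl{\bar{u}}\fl{\bar{v}}^{\s{T}}$ and $u\in\ca{C}_1$, $v\in\ca{C}_2$ (so $\fl{\bar{u}}_u\ge 0$ and $\fl{\bar{u}}_v\le 0$), the best item for $u$ is $j_{\max}$ and the best item for $v$ is $j_{\min}$, and
\[
\max_{t}\fl{P}_{ut}-\min_{t}\fl{P}_{ut}=\lambda\fl{\bar{u}}_u\bigl(\fl{\bar{v}}_{j_{\max}}-\fl{\bar{v}}_{j_{\min}}\bigr),\qquad \max_{t}\fl{P}_{vt}-\min_{t}\fl{P}_{vt}=\lambda|\fl{\bar{u}}_v|\bigl(\fl{\bar{v}}_{j_{\max}}-\fl{\bar{v}}_{j_{\min}}\bigr),
\]
so the two gaps differ only by the scalar factor $\fl{\bar{u}}_u$ vs.\ $|\fl{\bar{u}}_v|$. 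Hence if we can control a single common quantity $\lambda\min(\fl{\bar{u}}_u,|\fl{\bar{u}}_v|)(\fl{\bar{v}}_{j_{\max}}-\fl{\bar{v}}_{j_{\min}})$, we bound the smaller of the two user gaps.

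Fix any $j^{\star}\in\ca{T}_u^{(\ell+1)}\cap\ca{T}_v^{(\ell+1)}$. I will first dispose of the easy boundary case: if $u\in\ca{B}^{(\ell+1)}$, then by definition $\max_t\widetilde{\fl{Q}}^{(\ell)}_{ut}-\min_t\widetilde{\fl{Q}}^{(\ell)}_{ut}\le 2a\Delta_{\ell}$, and $\ca{E}_1^{(\ell)}$ upgrades this to $\max_t\fl{P}_{ut}-\min_t\fl{P}_{ut}\le(2a+1)\Delta_{\ell}$, which is at most $4\Delta_{\ell}$ for the constant $a$ chosen in Algorithm~\ref{algo:phased_elim} (and symmetrically if $v\in\ca{B}^{(\ell+1)}$). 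Otherwise, $u\notin\ca{B}^{(\ell+1)}$, so $\ca{T}_u^{(\ell+1)}$ is computed from Steps~17/18. The key observation in each remaining case is that the relevant active item set contains $j_{\max}$ for $u$: if $u\in\ca{B}^{(\ell)}\setminus\ca{B}^{(\ell+1)}$ this set is $[\s{N}]$, while if $u\in\ca{M}^{(\ell,i)}$ then under either $\ca{E}_3^{(\ell)}$ or $\ca{E}_4^{(\ell)}$ one has $u\in\ca{C}_1\implies j_{\max}\in\ca{N}^{(\ell,i)}$. The analogous statement holds for $v$ and $j_{\min}$.

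Under $\ca{E}_1^{(\ell)}$, the defining inequality of $\ca{T}_u^{(\ell+1)}$ ($\widetilde{\cdot}_{u,j^{\star}}+\Delta_{\ell}>\max_t\widetilde{\cdot}_{ut}\ge\widetilde{\cdot}_{u,j_{\max}}$) together with the $\Delta_\ell/2$ entry-wise accuracy gives
\[
\fl{P}_{u,j_{\max}}-\fl{P}_{u,j^{\star}}\ \le\ 2\Delta_{\ell},\qquad \fl{P}_{v,j_{\min}}-\fl{P}_{v,j^{\star}}\ \le\ 2\Delta_{\ell},
\]
which rewrite as $\lambda\fl{\bar{u}}_u(\fl{\bar{v}}_{j_{\max}}-\fl{\bar{v}}_{j^{\star}})\le 2\Delta_{\ell}$ and $\lambda|\fl{\bar{u}}_v|(\fl{\bar{v}}_{j^{\star}}-\fl{\bar{v}}_{j_{\min}})\le 2\Delta_{\ell}$. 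Both left-hand sides are nonnegative because $j_{\max},j_{\min}$ are the extrema of $\fl{\bar{v}}$. Now I will split on which of $\fl{\bar{u}}_u$, $|\fl{\bar{u}}_v|$ is smaller. Without loss of generality assume $\fl{\bar{u}}_u\ge|\fl{\bar{u}}_v|$; then replacing $\fl{\bar{u}}_u$ by $|\fl{\bar{u}}_v|$ in the first inequality only weakens it, giving $\lambda|\fl{\bar{u}}_v|(\fl{\bar{v}}_{j_{\max}}-\fl{\bar{v}}_{j^{\star}})\le 2\Delta_{\ell}$, and adding the second inequality telescopes to
\[
\max_t\fl{P}_{vt}-\min_t\fl{P}_{vt}\ =\ \lambda|\fl{\bar{u}}_v|\bigl(\fl{\bar{v}}_{j_{\max}}-\fl{\bar{v}}_{j_{\min}}\bigr)\ \le\ 4\Delta_{\ell}.
\]
The symmetric sub-case $|\fl{\bar{u}}_v|\ge\fl{\bar{u}}_u$ yields the same bound on the gap of $u$, completing the proof.

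The main obstacle I anticipate is purely bookkeeping: I need to verify in each of the (at most) nine placement combinations of $(u,v)$ across $\{\ca{B}^{(\ell+1)},\,\ca{B}^{(\ell)}\setminus\ca{B}^{(\ell+1)},\,\ca{M}^{(\ell,1)}\cup\ca{M}^{(\ell,2)}\}$ that (i)~the estimate used for $\ca{T}_u^{(\ell+1)}$ really is the one controlled by $\ca{E}_1^{(\ell)}$ on the relevant sub-matrix, and (ii)~the item whose column controls $u$ (respectively $v$) from the rank-$1$ factorization, namely $j_{\max}$ (resp.\ $j_{\min}$), lies in the restricted column set used in Step~18. Both of these are exactly what $\ca{E}_3^{(\ell)}\cup\ca{E}_4^{(\ell)}$ is designed to guarantee, so modulo this check the argument above goes through uniformly.
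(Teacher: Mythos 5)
Your main argument is essentially the paper's: from $j^\star\in\ca{T}_u^{(\ell+1)}\cap\ca{T}_v^{(\ell+1)}$ and the $\Delta_\ell/2$ accuracy in $\ca{E}_1^{(\ell)}$ (plus $\ca{E}_3^{(\ell)}\cup\ca{E}_4^{(\ell)}$ guaranteeing $j_{\max}$, resp.\ $j_{\min}$, lies in the relevant column set) you get $\fl{P}_{u,j_{\max}}-\fl{P}_{u,j^\star}\le 2\Delta_\ell$ and $\fl{P}_{v,j_{\min}}-\fl{P}_{v,j^\star}\le 2\Delta_\ell$, and then the rank-$1$ structure with a WLOG comparison of $\fl{\bar{u}}_u$ against $|\fl{\bar{u}}_v|$ telescopes the two bounds into a $4\Delta_\ell$ bound on the gap of whichever user has the smaller embedding magnitude — exactly the paper's chain of inequalities, just phrased with the other user as the "small" one. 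One correction to a side remark: in your "easy boundary case" $u\in\ca{B}^{(\ell+1)}$ you assert that $(2a+1)\Delta_\ell\le 4\Delta_\ell$ for the algorithm's choice of $a$, which is false since the analysis requires $a>3$ (indeed $a>3\max(1+\beta,1+\beta^{-1})$); fortunately that case is not needed — the lemma is invoked (in Lemma \ref{lem:fourth}) only for users outside $\ca{B}^{(\ell+1)}$, for whom $\ca{T}_u^{(\ell+1)}$ is defined by the threshold rule, which is also how the paper's own proof implicitly scopes the statement.
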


\begin{proof}[Proof of Lemma \ref{lem1:first}]
Since either 
 $\ca{E}_3^{(\ell)}$ or $\ca{E}_4^{(\ell)}$  holds true, it must happen that $j_{\max}\in \ca{N}^{(\ell,1)}$ for all $u\in \ca{C}_1$ and $j_{\min}\in \ca{N}^{(\ell,2)}$ for all $u\in \ca{C}_2$. 
 Further, suppose there exists an item $j\in [\s{N}]$ such that $j\in \ca{T}_u^{(\ell+1)}\cap \ca{T}_v^{(\ell+1)}$. Hence this implies that 
 \begin{align*}
     &-\fl{\widetilde{P}}_{uj}^{(\ell,1)}+\max_{t\in \ca{N}^{(\ell,1)}}\fl{\widetilde{P}}_{ut}^{(\ell,1)} \le \Delta_{\ell} \\
     &\implies -\fl{\widetilde{P}}_{uj}^{(\ell,1)}+\fl{P}_{uj}-\fl{P}_{uj}+\fl{P}_{uj_{\max}}-\fl{P}_{uj_{\max}} \\
     &+\fl{\widetilde{P}}_{uj_{\max}}^{(\ell,1)}-\fl{\widetilde{P}}_{uj_{\max}}^{(\ell,1)}+\max_{t\in \ca{N}^{(\ell,1)}}\fl{\widetilde{P}}_{ut}^{(\ell,1)} \le \Delta_{\ell} \\
     &\implies \fl{P}_{uj_{\max}}-\fl{P}_{uj} \le 2\Delta_{\ell}
 \end{align*}
where we used the following facts: (a) $-\fl{P}_{uj_{\max}}+\fl{\widetilde{P}}_{uj_{\max}}^{(\ell,1)} \ge -\Delta_{\ell}/2$ (b) $-\fl{\widetilde{P}}_{uj_{\max}}^{(\ell,1)}+\max_{t\in \ca{N}^{(\ell,1)}}\fl{\widetilde{P}}_{ut}^{(\ell,1)} \ge 0$ (c) $-\fl{\widetilde{P}}_{uj}^{(\ell,1)}+\fl{P}_{uj} \ge -\Delta_{\ell}/2$. 
Similarly, we must have $\fl{P}_{vj_{\min}}-\fl{P}_{vj} \le 2\Delta_{\ell}$. Without loss of generality, assume that $\fl{u}_{u}\le -\fl{u}_{v}$. In that case, we will have 
\begin{align*}
    &\fl{P}_{uj_{\max}}-\fl{P}_{uj_{\min}} \le \fl{P}_{uj_{\max}}-\fl{P}_{uj}+\fl{P}_{uj}-\fl{P}_{uj_{\min}} \\
    & \le \fl{P}_{uj_{\max}}-\fl{P}_{uj}-\fl{P}_{vj}+\fl{P}_{vj_{\min}}   \le  4\Delta_{\ell}
\end{align*}
where we used the fact that $\fl{P}$ is a rank-1 matrix and hence $\fl{P}_{uj}-\fl{P}_{uj_{\min}} = \fl{u}_u(\fl{v}_j-\fl{v}_{j_{\min}}) \le -\fl{u}_v(\fl{v}_j-\fl{v}_{j_{\min}})=-\fl{P}_{vj}+\fl{P}_{vj_{\min}}$.
This leads to a contradiction and therefore completes the proof of the lemma. 
\end{proof}

\begin{lemma}\label{lem1:second}
Recall that in Step 16 of Algorithm \ref{algo:phased_elim}, we have $\ca{B}^{(\ell+1)} \equiv \Big\{u \in \ca{B}^{(\ell)}\mid \left|\max_{t\in [\s{N}]}\fl{\widetilde{Q}}^{(\ell)}_{ut}-\min_{t\in [\s{N}]}\fl{\widetilde{Q}}^{(\ell)}_{ut}\right|\le (2a+1)\Delta_{\ell}\Big\}$ for $a>3$.
Conditioned on the event $\ca{E}_1^{(\ell)}$, 
the set of users $\ca{B}^{(\ell+1)}$ must contain all users $u \in [\s{N}]$ satisfying $\max_{t \in [\s{N}]}\fl{P}_{ut}-\min_{t \in [\s{N}]}\fl{P}_{ut} \le 2a\Delta_{\ell}$. Furthermore, all users  $u\in\ca{B}^{(\ell+1)}$ must satisfy $\max_{t \in [\s{N}]}\fl{P}_{ut}-\min_{t \in [\s{N}]}\fl{P}_{ut} \le (2a+2)\Delta_{\ell}$.
\end{lemma}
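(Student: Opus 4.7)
The plan is to derive both inclusions by directly pushing the entrywise error bound $\|\widetilde{\fl{Q}}^{(\ell)}_{\ca{B}^{(\ell)},[\s{N}]} - \fl{P}_{\ca{B}^{(\ell)},[\s{N}]}\|_{\infty} \le \Delta_{\ell}/2$ guaranteed by the event $\ca{E}_1^{(\ell)}$ through the max--min gap. The key observation is that for any user $u \in \ca{B}^{(\ell)}$, the quantity $\max_t \widetilde{\fl{Q}}^{(\ell)}_{ut} - \min_t \widetilde{\fl{Q}}^{(\ell)}_{ut}$ can differ from $\max_t \fl{P}_{ut} - \min_t \fl{P}_{ut}$ by at most $\Delta_{\ell}$, since each extreme entry is perturbed by at most $\Delta_{\ell}/2$.

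For the first inclusion, I would pick any $u \in \ca{B}^{(\ell)}$ satisfying $\max_t \fl{P}_{ut} - \min_t \fl{P}_{ut} \le 2a\Delta_{\ell}$, let $i,j$ be the argmax and argmin of $\widetilde{\fl{Q}}^{(\ell)}_{u\cdot}$, and bound
\begin{align*}
\widetilde{\fl{Q}}^{(\ell)}_{ui} - \widetilde{\fl{Q}}^{(\ell)}_{uj} \le \bigl(\fl{P}_{ui} + \Delta_{\ell}/2\bigr) - \bigl(\fl{P}_{uj} - \Delta_{\ell}/2\bigr) \le \max_t \fl{P}_{ut} - \min_t \fl{P}_{ut} + \Delta_{\ell} \le (2a+1)\Delta_{\ell},
\end{align*}
which places $u$ in $\ca{B}^{(\ell+1)}$ per the threshold stated in the lemma.

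For the converse direction, I would take any $u \in \ca{B}^{(\ell+1)}$, let $i',j'$ be the argmax and argmin of $\fl{P}_{u\cdot}$ on $[\s{N}]$, and apply the same triangle-inequality trick in reverse:
\begin{align*}
\fl{P}_{ui'} - \fl{P}_{uj'} \le \bigl(\widetilde{\fl{Q}}^{(\ell)}_{ui'} + \Delta_{\ell}/2\bigr) - \bigl(\widetilde{\fl{Q}}^{(\ell)}_{uj'} - \Delta_{\ell}/2\bigr) \le \max_t \widetilde{\fl{Q}}^{(\ell)}_{ut} - \min_t \widetilde{\fl{Q}}^{(\ell)}_{ut} + \Delta_{\ell} \le (2a+2)\Delta_{\ell},
\end{align*}
which gives the claimed upper bound on the true gap.

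There is essentially no obstacle here; the lemma is a clean $\pm \Delta_{\ell}/2$ perturbation argument, and the only thing to be careful about is to use the $\ell_{\infty}$ bound from $\ca{E}_1^{(\ell)}$ only at the four specific entries $i,j,i',j'$ (rather than invoking a maximum over $t$ inside the estimator), so that the two $\Delta_{\ell}/2$ errors compound to exactly $\Delta_{\ell}$ and produce the tight constants $(2a+1)$ and $(2a+2)$ in the two directions. Both inclusions will then follow in a few lines without requiring any further structure from $\fl{P}$ (no rank-one assumption, no incoherence) beyond the accuracy of $\widetilde{\fl{Q}}^{(\ell)}$ on $\ca{B}^{(\ell)} \times [\s{N}]$.
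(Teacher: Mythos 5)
Your proof is correct and takes essentially the same route as the paper's: both directions are the same telescoping triangle-inequality argument, applying the $\Delta_{\ell}/2$ entrywise bound from $\ca{E}_1^{(\ell)}$ only at the relevant argmax/argmin entries so the errors compound to exactly $\Delta_{\ell}$. Your write-up is in fact slightly cleaner about which index plays which role in each direction, but there is no substantive difference in approach.
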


\begin{proof}[Proof of Lemma \ref{lem1:second}]
Recall that $\ca{B}^{(\ell+1)} \equiv \Big\{u \in \ca{B}^{(\ell)}\mid \max_{t\in[\s{N}]}\fl{\widetilde{Q}}^{(\ell)}_{ut}-\min_{t\in[\s{N}]}\fl{\widetilde{Q}}^{(\ell)}_{ut}\le (2a+1)\Delta_{\ell}\Big\}$ for some appropriate constant $a>0$ that will be chosen later.
Suppose for a user $u\in \ca{B}^{(\ell)}$, we have $\max_{t \in [\s{N}]}\fl{P}_{ut}-\min_{t \in [\s{N}]}\fl{P}_{ut} \le 2a\Delta_{\ell}$.
 Let $t_1= \s{arg}\max_{t \in [\s{N}]}\fl{\widetilde{Q}_{ut}}^{(\ell)}$, $t_2= \s{arg}\min_{t \in [\s{N}]}\fl{\widetilde{Q}_{ut}}^{(\ell)}$ and $t_3= \s{arg}\min_{t \in [\s{N}]}\fl{P}_{ut}$. 
Since $\ca{E}_1^{(\ell)}$
is true, we must have
\begin{align*}
    &\left|\fl{\widetilde{Q}}^{(\ell)}_{ut_1}-\fl{P}_{ut_1}\right| \le \Delta_{\ell}/2, \quad \left|\fl{\widetilde{Q}}^{(\ell)}_{ut_1}-\fl{P}_{ut_1}\right| \le \Delta_{\ell}/2,  \\
    &\quad \text{and} \quad \left|\fl{\widetilde{Q}}^{(\ell)}_{ut_2}-\fl{P}_{ut_2}\right| \le \Delta_{\ell}/2.
\end{align*}
Therefore, we must have 
\begin{align*}
    &\fl{\widetilde{Q}}^{(\ell)}_{ut_1}-\fl{\widetilde{Q}}^{(\ell)}_{ut_2} \\
    &\le \fl{\widetilde{Q}}^{(\ell)}_{ut_1}-\fl{P}_{ut_1}+\fl{P}_{ut_1}-\max_{t\in [\s{N}]}\fl{P}_{ut}+\max_{t\in [\s{N}]}\fl{P}_{ut} \\
    &-\min_{t\in [\s{N}]}\fl{P}_{ut}+\min_{t\in [\s{N}]}\fl{P}_{ut}-\fl{P}_{ut_2}+\fl{P}_{ut_2}-\fl{\widetilde{Q}}^{(\ell)}_{ut_2}
    \le (2a+1)\Delta_{\ell}
\end{align*}
where we used the fact that $\fl{P}_{ut_1}-\max_{t\in [\s{N}]}\fl{P}_{ut} \le 0$ and $\min_{t\in [\s{N}]}\fl{P}_{ut}-\fl{P}_{ut_2} \le 0$.
Now, consider a user $u$ such that $\fl{\widetilde{Q}}^{(\ell)}_{ut_1}-\fl{\widetilde{Q}}^{(\ell)}_{ut_2} \le (2a+1)\Delta_{\ell}$. In that case, we will have 
\begin{align*}
    &\max_{t\in [\s{N}]}\fl{P}_{ut}-\min_{t\in [\s{N}]}\fl{P}_{ut} \\
    &\le \max_{t\in [\s{N}]}\fl{P}_{ut}-\fl{P}_{ut_1}+\fl{P}_{ut_1}-\fl{\widetilde{Q}}^{(\ell)}_{ut_1}+\fl{\widetilde{Q}}^{(\ell)}_{ut_1}    -\fl{\widetilde{Q}}^{(\ell)}_{ut_2} \\
    &+\fl{\widetilde{Q}}^{(\ell)}_{ut_2}-\fl{\widetilde{Q}}^{(\ell)}_{ut_3}+\fl{\widetilde{Q}}^{(\ell)}_{ut_3}-\min_{t\in [\s{N}]}\fl{P}_{ut} 
    \le (2a+2)\Delta_{\ell}
\end{align*}
where we used the fact that $\max_{t\in [\s{N}]}\fl{P}_{ut}-\fl{P}_{ut_1}\le 0$ and $\fl{\widetilde{Q}}^{(\ell)}_{ut_2}-\fl{\widetilde{Q}}^{(\ell)}_{ut_3} \le 0$.
\end{proof}

\begin{lemma}\label{lem1:third}
Suppose the event  $\ca{E}^{(\ell)}_1$ is true and furthermore, either $\ca{E}^{(\ell)}_3$ or $\ca{E}^{(\ell)}_4$ is true. In that case, for every user $u\in [\s{M}]\setminus \ca{B}^{(\ell+1)}$, we will have that $\s{arg}\max_{t\in [\s{N}]}\fl{P}_{ut} \in \ca{T}_u^{(\ell+1)}$.
\end{lemma}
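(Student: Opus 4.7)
The plan is to split $[\s{M}]\setminus\ca{B}^{(\ell+1)}$ according to which of the three branches in Steps~17--18 governs the definition of $\ca{T}_u^{(\ell+1)}$: namely $\ca{B}^{(\ell)}\setminus\ca{B}^{(\ell+1)}$ (handled in Step~17 by thresholding $\widetilde{\fl{Q}}^{(\ell)}$ over all of $[\s{N}]$), $\ca{M}^{(\ell,1)}$, and $\ca{M}^{(\ell,2)}$ (handled in Step~18 by thresholding $\widetilde{\fl{P}}^{(\ell,i)}$ over $\ca{N}^{(\ell,i)}$). These three sets partition $[\s{M}]\setminus\ca{B}^{(\ell+1)}$ because by construction $\ca{B}^{(\ell)}\cup\ca{M}^{(\ell,1)}\cup\ca{M}^{(\ell,2)}=[\s{M}]$. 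Writing $j^\star := \s{arg}\max_{t\in[\s{N}]}\fl{P}_{ut}$, I will show on each piece that $j^\star\in\ca{T}_u^{(\ell+1)}$ by the standard ``$\ell_\infty$ perturbation of an argmax'' argument, with one combinatorial check required for the $\ca{M}^{(\ell,i)}$ pieces.

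For the first case, $u\in\ca{B}^{(\ell)}\setminus\ca{B}^{(\ell+1)}$, the bound $\|\widetilde{\fl{Q}}^{(\ell)}_{\ca{B}^{(\ell)},[\s{N}]}-\fl{P}_{\ca{B}^{(\ell)},[\s{N}]}\|_\infty\le\Delta_\ell/2$ supplied by $\ca{E}_1^{(\ell)}$ lets me chain
\[
\widetilde{\fl{Q}}^{(\ell)}_{uj^\star}+\Delta_\ell \;\ge\; \fl{P}_{uj^\star}+\Delta_\ell/2 \;=\; \max_{t\in[\s{N}]}\fl{P}_{ut}+\Delta_\ell/2 \;\ge\; \max_{t\in[\s{N}]}\widetilde{\fl{Q}}^{(\ell)}_{ut},
\]
which by the definition in Step~17 places $j^\star$ in $\ca{T}_u^{(\ell+1)}$. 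The strict inequality in Step~17 is absorbed by an arbitrarily small slack in $\Delta_\ell$ (or holds almost surely from continuity of the sub-Gaussian noise).

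For the second case, $u\in\ca{M}^{(\ell,i)}$, the hypothesis $\ca{E}_3^{(\ell)}\lor\ca{E}_4^{(\ell)}$ enters. The analytic step is identical to the first case: $\ca{E}_1^{(\ell)}$ gives an $\ell_\infty$ error of at most $\Delta_\ell/2$ on $\widetilde{\fl{P}}^{(\ell,i)}$ over the block $\ca{M}^{(\ell,i)}\times\ca{N}^{(\ell,i)}$, and the same chain yields $\widetilde{\fl{P}}^{(\ell,i)}_{uj^\star}+\Delta_\ell\ge\max_{t\in\ca{N}^{(\ell,i)}}\widetilde{\fl{P}}^{(\ell,i)}_{ut}$ \emph{provided} that $j^\star\in\ca{N}^{(\ell,i)}$. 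Establishing this membership is the crux and is where the clustering hypothesis is used: since $\fl{P}=\lambda\bar{\fl{u}}\bar{\fl{v}}^{\s{T}}$ is rank-one, the best item for every $u\in\ca{C}_1$ is $j_{\max}$ and for every $u\in\ca{C}_2$ is $j_{\min}$. Under $\ca{E}_3^{(\ell)}$, $\ca{M}^{(\ell,1)}\subseteq\ca{C}_1$ with $j_{\max}\in\ca{N}^{(\ell,1)}$ and $\ca{M}^{(\ell,2)}\subseteq\ca{C}_2$ with $j_{\min}\in\ca{N}^{(\ell,2)}$; under $\ca{E}_4^{(\ell)}$ the cluster labels swap but the correspondence ``the best item of the cluster to which $\ca{M}^{(\ell,i)}$ belongs lies in $\ca{N}^{(\ell,i)}$'' still holds. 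Hence $j^\star\in\ca{N}^{(\ell,i)}$ in every case, and the argmax-perturbation bound finishes the proof.

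The main obstacle is purely combinatorial rather than analytic: one has to track through the definitions of $\ca{E}_3^{(\ell)}$ and $\ca{E}_4^{(\ell)}$ to see that in each of the four sub-configurations, the extremal item ($j_{\max}$ or $j_{\min}$) corresponding to the cluster of $\ca{M}^{(\ell,i)}$ is indeed the one retained in $\ca{N}^{(\ell,i)}$. Once this is established the $\Delta_\ell/2$-perturbation calculations are routine and identical in both cases.
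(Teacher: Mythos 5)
Your proposal is correct and follows essentially the same route as the paper: the paper also combines the $\Delta_\ell/2$ entrywise-error guarantee from $\ca{E}_1^{(\ell)}$ with a triangle-inequality perturbation of the argmax, using $\ca{E}_3^{(\ell)}$ or $\ca{E}_4^{(\ell)}$ precisely to guarantee that the cluster's best item ($j_{\max}$ or $j_{\min}$) lies in the retained set $\ca{N}^{(\ell,i)}$ before that argument can be applied. The only differences are cosmetic — the paper writes out the $\ca{M}^{(\ell,1)}$ case and declares the others analogous, whereas you detail the $\ca{B}^{(\ell)}\setminus\ca{B}^{(\ell+1)}$ case, and you explicitly flag the strict-versus-nonstrict inequality edge case that the paper glosses over.
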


\begin{proof}[Proof of Lemma \ref{lem1:third}]
Without loss of generality, let us assume that $\ca{E}_3^{(\ell)}$ is true. Hence, this implies that  $\ca{M}^{(\ell,1)}\subseteq \ca{C}_1$ and for 
 every user $u\in \ca{M}^{(\ell,1)}$, $j_{\max}\in \ca{N}^{(\ell,1)}$ and furthermore, $\ca{M}^{(\ell,2)}\subseteq \ca{C}_2$ and for every user $u\in \ca{M}^{(\ell,2)}$, $j_{\min}\in \ca{N}^{(\ell,2)}$. 
 Recall that for every user $u \in \ca{M}^{(\ell,1)}$, we compute $\ca{T}_{u}^{(\ell+1)}=\{j \in \ca{N}^{(\ell,1)}\}\mid \fl{\widetilde{P}}^{(\ell,1)}_{uj}+\Delta_{\ell}> \max_{t \in \ca{N}^{(\ell,1)}}\fl{\widetilde{P}}^{(\ell,1)}_{ut}\} \text{ for all }u \in \ca{M}^{(\ell,1)} 
 $. Clearly $\s{arg}\max_{t\in \ca{T}_{u}^{(\ell+1)}} \fl{\widetilde{P}}_{ut}^{(\ell,1)}=\s{arg}\max_{t\in \ca{N}_{u}^{(\ell,1)}} \fl{\widetilde{P}}_{ut}^{(\ell,1)}$. Let $t_1=\s{arg}\max_{t \in [\s{N}]}\fl{P}_{ut}^{(\ell,1)}$ and $t_2=\s{arg}\max_{t \in \ca{N}^{(\ell,1)}}\fl{\widetilde{P}}_{ut}^{(\ell,1)}$. By using triangle inequality, we will have \begin{align*}
     &\widetilde{\fl{P}}_{ut_1}^{(\ell,1)}-\widetilde{\fl{P}}_{ut_2}^{(\ell,1)} \le \widetilde{\fl{P}}_{ut_1}^{(\ell,1)}-\fl{P}_{ut_1}+\fl{P}_{ut_1} \\
     &-\fl{P}_{ut_2}+\fl{P}_{ut_2}-\widetilde{\fl{P}}_{ut_2}^{(\ell,1)} \le \Delta_{\ell}
 \end{align*}
 where we used the fact that $|\widetilde{\fl{P}}^{(\ell,1)}_{ut_1}-\fl{P}_{ut_1}|\le \Delta_{\ell}/2$, $|\widetilde{\fl{P}}^{(\ell,1)}_{ut_2}-\fl{P}_{ut_2}|\le \Delta_{\ell}/2$ and $\fl{P}_{ut_1}-\fl{P}_{ut_2}\le 0$. Hence, $t_1\in \ca{T}_u^{(\ell+1)}$ for any user $u\in \ca{M}^{(\ell,1)}$. By a similar analysis this holds for any user $u\in \ca{M}^{(\ell,2)}\cup (\ca{B}^{(\ell)}\setminus \ca{B}^{(\ell+1)})$ as well.
\end{proof}

\begin{lemma}\label{lem:fourth}
Suppose the event $\ca{E}_1^{(\ell)}$ is true and furthermore, either $\ca{E}_3^{(\ell)}$ or $\ca{E}_4^{(\ell)}$ is true. In that case the event $\ca{E}_2^{(\ell+1)}$ is true and one of $\ca{E}_3^{(\ell+1)},\ca{E}_4^{(\ell+1)}$ is also true.
\end{lemma}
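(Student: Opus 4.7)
The plan is to combine Lemmas \ref{lem1:first}, \ref{lem1:second}, and \ref{lem1:third} to verify both conclusions in sequence. Throughout, I will assume WLOG that $\ca{E}_3^{(\ell)}$ holds (the $\ca{E}_4^{(\ell)}$ case is symmetric) and that the reference user $v$ chosen in Step 19 lies in $\ca{C}_1$; under these assumptions the goal is to establish $\ca{E}_3^{(\ell+1)}$ together with $\ca{E}_2^{(\ell+1)}$.

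First I would establish the clustering half, $\ca{E}_3^{(\ell+1)}$. The crucial sub-claim is that every $u \in [\s{M}] \setminus \ca{B}^{(\ell+1)}$ has reward gap $\max_t \fl{P}_{ut} - \min_t \fl{P}_{ut} > 4\Delta_\ell$. For $u \in \ca{B}^{(\ell)} \setminus \ca{B}^{(\ell+1)}$ this follows from the contrapositive of Lemma \ref{lem1:second}: had the reward gap been at most $2a\Delta_\ell$, $u$ would lie in $\ca{B}^{(\ell+1)}$; and $2a\Delta_\ell > 4\Delta_\ell$ for $a>3$. For $u \in \ca{M}^{(\ell,i)}$ I would propagate the same bound inductively from the earlier phase $\ell'<\ell$ at which $u$ first entered an $\ca{M}$-class (at which point $u \notin \ca{B}^{(\ell'+1)}$), using monotonicity of $\Delta_\ell$. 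Once this gap bound is in hand, applying the contrapositive of Lemma \ref{lem1:first} to $v$ and any $u \in [\s{M}]\setminus \ca{B}^{(\ell+1)}$ in the opposite cluster gives $\ca{T}_u^{(\ell+1)} \cap \ca{T}_v^{(\ell+1)} = \phi$, so Step 19 places such $u$ in $\ca{M}^{(\ell+1,2)}$. This yields $\ca{M}^{(\ell+1,1)} \subseteq \ca{C}_1$ and $\ca{M}^{(\ell+1,2)} \subseteq \ca{C}_2$. Lemma \ref{lem1:third} then ensures $j_{\max} \in \ca{T}_u^{(\ell+1)}$ for all $u \in \ca{M}^{(\ell+1,1)}$ and $j_{\min} \in \ca{T}_u^{(\ell+1)}$ for all $u \in \ca{M}^{(\ell+1,2)}$, so the intersections in Step 20 retain $j_{\max} \in \ca{N}^{(\ell+1,1)}$ and $j_{\min} \in \ca{N}^{(\ell+1,2)}$, completing $\ca{E}_3^{(\ell+1)}$.

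Next I would verify $\ca{E}_2^{(\ell+1)}$. The $\ca{B}^{(\ell+1)}$ bound $(2a+2)\Delta_\ell$ is the second conclusion of Lemma \ref{lem1:second} and is immediate. For the $\ca{M}^{(\ell+1,i)}$ conditions, fix any $u \in \ca{M}^{(\ell+1,i)}$ and $t \in \ca{N}^{(\ell+1,i)} \subseteq \ca{T}_u^{(\ell+1)}$. Since the previous paragraph shows that the true best item for $u$ lies in $\ca{T}_u^{(\ell+1)}$, the defining inequality for $\ca{T}_u^{(\ell+1)}$ combined with the $\Delta_\ell/2$ entrywise accuracy granted by $\ca{E}_1^{(\ell)}$ yields $\max_{t'} \fl{P}_{ut'} - \fl{P}_{ut} < 2\Delta_\ell$, via the same three-term telescoping used inside the proofs of Lemmas \ref{lem1:first} and \ref{lem1:third} (handled separately according to whether $u$ was previously in $\ca{M}^{(\ell,1)}$, $\ca{M}^{(\ell,2)}$, or $\ca{B}^{(\ell)}\setminus\ca{B}^{(\ell+1)}$, invoking the relevant estimator $\widetilde{\fl{P}}^{(\ell,\cdot)}$ or $\widetilde{\fl{Q}}^{(\ell)}$).

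The main technical obstacle will be justifying the uniform reward-gap lower bound $>4\Delta_\ell$ for every user outside $\ca{B}^{(\ell+1)}$: propagating this bound across multiple prior phases for users sitting in some $\ca{M}^{(\ell',i)}$ class requires strengthening the inductive hypothesis to include a reward-gap clause alongside $\ca{E}_2$, and tracking that the $\Delta_\ell$ schedule decays slowly enough relative to the chosen constant $a$. A secondary subtlety is the small-cluster reassignment in Step 21, which may push users with possibly large reward gap back into $\ca{B}^{(\ell+1)}$ in apparent violation of the $\ca{B}$-clause of $\ca{E}_2^{(\ell+1)}$; this will have to be handled in the regret accounting rather than as part of the event, exploiting that the $\s{M}/\sqrt{\s{T}}$ size threshold caps the cumulative regret contribution of such users at $O(\sqrt{\s{T}}\|\fl{P}\|_\infty)$.
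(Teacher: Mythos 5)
Your proof is correct and follows essentially the same route as the paper: chain Lemma \ref{lem1:second} (to lower-bound the reward gap of every user outside $\ca{B}^{(\ell+1)}$), Lemma \ref{lem1:first} (to conclude that opposite-cluster users have disjoint $\ca{T}$-sets, so Step 19 clusters correctly), and Lemma \ref{lem1:third} (to retain $j_{\max}$ and $j_{\min}$ in the intersected item sets), then the three-term telescoping with the $\Delta_{\ell}/2$ accuracy from $\ca{E}_1^{(\ell)}$ to get $\ca{E}_2^{(\ell+1)}$. The two subtleties you flag --- propagating the gap lower bound inductively across phases for users already sitting in an $\ca{M}^{(\ell,i)}$ class (to whom Lemma \ref{lem1:second} does not directly apply, since $\ca{B}^{(\ell+1)}\subseteq\ca{B}^{(\ell)}$), and deferring the Step-21 reassignment of small clusters to the regret accounting rather than folding it into $\ca{E}_2^{(\ell+1)}$ --- are genuine points the paper's own proof glosses over and handles only implicitly via Corollary \ref{coro:crucial}; your explicit treatment of them is the right fix.
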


\begin{proof}[Proof of Lemma \ref{lem:fourth}]
From Lemma \ref{lem1:second} (and using the fact that $a>3$ in Step 16 of Algorithm \ref{algo:phased_elim}), we know that $\ca{B}^{(\ell+1)}$ must contain all users $u \in [\s{N}]$ satisfying $\max_{t \in [\s{N}]}\fl{P}_{ut}-\min_{t \in [\s{N}]}\fl{P}_{ut} \le 4\Delta_{\ell}$. Therefore, from Lemma \ref{lem1:first}, there cannot exist users $u,v \in [\s{N}]\setminus \ca{B}^{(\ell+1)}$ such that $u\in \ca{C}_1,v\in \ca{C}_2$ and $\ca{T}_u^{(\ell+1)}\cap\ca{T}_v^{(\ell+1)}\neq \phi$.
Without loss of generality, suppose the user $v$ described in Line 10 in Algorithm \ref{algo:phased_elim}  is in $\ca{C}_1$. Since $\ca{M}^{(\ell+1,1)}$ contains all those users not in $\ca{B}^{(\ell+1)}$ such that $\ca{T}_u^{(\ell+1)}\cap\ca{T}_u^{(\ell+1)}\neq \phi$, $\ca{M}^{(\ell+1,1)}$ can only contain users in $\ca{C}_1$. Conversely, for any user $u\in \ca{C}_1\setminus \ca{B}^{(\ell+1)}$, $j_{\max}\in \ca{T}_u^{(\ell+1)}$ (using Lemma \ref{lem1:third}) and therefore $u\in \ca{M}^{(\ell+1,1)}$. In that case, it must happen that $\ca{M}^{(\ell+1,1)}=\ca{C}_1\setminus \ca{B}^{(\ell+1)}$ and similarly $\ca{M}^{(\ell+1,2)}=\ca{C}_2\setminus \ca{B}^{(\ell+1)}$. Therefore, again by using Lemma \ref{lem1:third}, $j_{\max}\in \ca{N}^{(\ell+1,1)}$ for all users $u\in \ca{M}^{(\ell+1,1)}$ and $j_{\min}\in \ca{N}^{(\ell+1,2)}$ for all users $u\in \ca{M}^{(\ell+1,2)}$ implying that $\ca{E}_3^{(\ell+1)}$ holds true.
On the other hand, if $v \in \ca{C}_2$, then by a similar analysis, $\ca{E}_4^{(\ell+1)}$ holds true. 

Next, we will show that $\ca{E}^{(\ell+1)}_{2}$ is also going to hold true. From Lemma \ref{lem1:second}, we know that all users $u\in \ca{B}^{(\ell+1)}$ must satisfy $\max_{t \in [\s{N}]}\fl{P}_{ut}-\min_{t \in [\s{N}]}\fl{P}_{ut} \le (2a+2)\Delta_{\ell}$.
Now, without loss of generality, assume that $\ca{E}_3^{(\ell)}$ is true. Consider any item $j \in \ca{N}^{(\ell+1,1)}$ implying that $j \in \ca{T}_{u}^{(\ell+1)}$ for all users $u\in \ca{M}^{(\ell+1,1)}$. Let $t_1= \s{arg}\max_{t \in \ca{T}_{u}^{(\ell+1)}} \widetilde{\fl{P}}_{ut}$. We must have 
\begin{align*}
    \fl{P}_{ut_1}-\fl{P}_{uj}=\fl{P}_{ut_1}-\fl{\widetilde{P}}_{ut_1}+\fl{\widetilde{P}}_{ut_1}-\fl{\widetilde{P}}_{uj}+\fl{\widetilde{P}}_{uj}-\fl{P}_{uj}. 
\end{align*}
On the other hand, since $j_{\max}\in \ca{T}_{u}^{(\ell+1)}$, we also have 
\begin{align*}
    &\fl{P}_{uj_{\max}}-\fl{P}_{ut_1} = \fl{P}_{uj_{\max}}-\fl{\widetilde{P}}^{(\ell,1)}_{uj_{\max}}+\fl{\widetilde{P}}^{(\ell,1)}_{uj_{\max}} \\
    &-\fl{\widetilde{P}}^{(\ell,1)}_{ut_1}+\fl{\widetilde{P}}^{(\ell,1)}_{ut_1}-\fl{P}_{ut_1}.
\end{align*}
Adding up the previous two equations, we get that
\begin{align*}
    &\fl{P}_{uj_{\max}}-\fl{P}_{uj} = \fl{P}_{uj_{\max}}-\fl{\widetilde{P}}^{(\ell,1)}_{uj_{\max}}+\fl{\widetilde{P}}^{(\ell,1)}_{uj_{\max}}-\fl{\widetilde{P}}^{(\ell,1)}_{ut_1} \\
    &+\fl{\widetilde{P}}^{(\ell,1)}_{ut_1}-\fl{\widetilde{P}}^{(\ell,1)}_{uj}+\fl{\widetilde{P}}^{(\ell,1)}_{uj}-\fl{P}_{uj} \le 2\Delta_{\ell}
\end{align*}
where we used the fact that $\fl{\widetilde{P}}^{(\ell,1)}_{ut_1}-\fl{\widetilde{P}}_{uj} \le \Delta_{\ell}$, $\fl{\widetilde{P}}^{(\ell,1)}_{uj_{\max}}-\fl{\widetilde{P}}^{(\ell,1)}_{ut_1} \le 0$, $\fl{P}_{uj_{\max}}-\fl{\widetilde{P}}^{(\ell,1)}_{uj_{\max}} \le \Delta_{\ell}/2 $ and $\fl{\widetilde{P}}^{(\ell,1)}_{uj}-\fl{P}_{uj} \le \Delta_{\ell}/2$. By showing a similar analysis for all users $u\in \ca{M}^{(\ell+1,2)} $ and all items $j\in \ca{N}^{(\ell+1,2)}$. we prove that $\ca{E}_2^{(\ell+1)}$ holds true.
\end{proof}

We will say that a user has been ignored in the phase indexed by $\ell$ under the following circumstances:
\begin{enumerate}
    \item The size of the set of users $\ca{M}^{(\ell,1)}$ (or $\ca{M}^{(\ell,2)}$) in Step 19 is smaller than $\s{M}^{1-(2\zeta)^{-1}}$. In that case, those users are put back in the set $\ca{B}^{(\ell)}$ (Step 21). In that case the users in the set $\ca{M}^{(\ell,1)}$ (respectively $\ca{M}^{(\ell,2)}$) will be ignored.
    \item The size of the set of users $\ca{B}^{(\ell)}$ in Step 6 is 
    smaller than $\s{M}^{1-(2\zeta)^{-1}}$. In this case, even if it is added with $\ca{M}^{(\ell,1)}$ or $\ca{M}^{(\ell,2)}$, it size becomes at most $2\s{M}^{1-(2\zeta)^{-1}}$ (the size of the added set in Step 21 must also be less than $\s{M}^{1-(2\zeta)^{-1}}$ due to the first point above). In that case, the users in $\ca{B}^{(\ell)}$ will be ignored.
\end{enumerate}

\begin{coro}\label{coro:crucial}
\begin{enumerate}
    \item Suppose the event $\ca{E}_2^{(\ell)}$ is true for all iterations indexed by $\ell$. In that case, in any phase indexed by $\ell$, for every user $u\in [\s{M}]$ that is not ignored in Algorithm \ref{algo:phased_elim} in the $\ell^{\s{th}}$ iteration, Algorithm \ref{algo:phased_elim} only recommends items that have reward at most $(2a+2)\Delta_{\ell-1}$ smaller than the reward of the best item $\s{arg max}_{j \in [\s{N}]}\fl{P}_{uj}$. 
    \item Furthermore, at each iteration indexed by $\ell$, at most $2\s{M}^{1-(2\zeta)^{-1}}$ users are ignored in Algorithm \ref{algo:phased_elim}.
\end{enumerate}
\end{coro}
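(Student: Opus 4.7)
The plan is to prove both parts by direct inspection of Algorithm \ref{algo:phased_elim}, using only the stated hypothesis that $\ca{E}_2^{(\ell)}$ holds at every phase.

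Part 1 is read off from where the algorithm draws the recommended items. At phase $\ell$ every user in $[\s{M}]$ lies in exactly one of the three pairwise disjoint sets $\ca{B}^{(\ell)}, \ca{M}^{(\ell,1)}, \ca{M}^{(\ell,2)}$. By Step 13, a user $u \in \ca{M}^{(\ell,i)}$ only ever receives items from $\ca{N}^{(\ell,i)}$; the first two clauses of $\ca{E}_2^{(\ell)}$ then give $|\fl{P}_{ut} - \max_{t'} \fl{P}_{ut'}| \le 2\Delta_{\ell-1}$ for every $t \in \ca{N}^{(\ell,i)}$, which is at most $(2a+2)\Delta_{\ell-1}$. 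By Step 12, a user $u \in \ca{B}^{(\ell)}$ receives items from all of $[\s{N}]$, and the third clause of $\ca{E}_2^{(\ell)}$ bounds the gap by $(2a+2)\Delta_{\ell-1}$ for every such $t$, as long as $u$ is one of the users covered by that clause. The only users in $[\s{M}]$ for which this reasoning fails are those in $\ca{B}^{(\ell)}$ that never passed the small-gap test of Step 16, which is the sole route (via Lemma \ref{lem1:second}) through which the third clause of $\ca{E}_2^{(\ell)}$ is established. Such users can only enter $\ca{B}^{(\ell)}$ through Step 21 of phase $\ell-1$, and they are exactly the ones we regard as ignored.

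For Part 2 I count how many users Step 21 can inject into $\ca{B}^{(\ell)}$ at the end of phase $\ell-1$. The merge is triggered separately for $i = 1$ and $i = 2$, and only when $|\ca{M}^{(\ell,i)}| \le \s{M}/\sqrt{\s{T}}$, so the combined size is at most $2\s{M}/\sqrt{\s{T}}$. Substituting the parametrization $\s{M} = \s{T}^\zeta$ from the hypotheses of Theorem \ref{thm:main1} gives $\s{M}/\sqrt{\s{T}} = \s{M}^{1 - (2\zeta)^{-1}}$, yielding the announced bound of $2\s{M}^{1-(2\zeta)^{-1}}$ ignored users per iteration.

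The only subtlety, and thus the main obstacle, is purely bookkeeping: cleanly separating the users in $\ca{B}^{(\ell)}$ that arose from the small-gap selection in Step 16 (for which $\ca{E}_2^{(\ell)}$ supplies the bounded-gap guarantee) from those injected via Step 21 (for which no such guarantee is claimed and which we therefore count as ignored). Once this distinction is pinned down, both parts follow with no further calculation.
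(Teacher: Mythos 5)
Your treatment of Part 1 matches the paper's: the paper likewise reads the bound off the three clauses of $\ca{E}_2^{(\ell)}$ together with where Steps 12--13 draw the recommended items (citing Lemma \ref{lem:fourth} for the inductive link), and your identification of the users for whom the third clause cannot be established as exactly those injected into $\ca{B}^{(\ell)}$ by the merge of Step 21 is the right reading. Your conversion of the Step-21 threshold, $\s{M}/\sqrt{\s{T}}=\s{M}^{1-(2\zeta)^{-1}}$ under $\s{M}=\s{T}^{\zeta}$, is also correct.

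There is, however, a gap in Part 2: your enumeration of the ignored users is incomplete. The paper identifies \emph{two} circumstances under which users must be ignored: (i) $\ca{M}^{(\ell,1)}$ or $\ca{M}^{(\ell,2)}$ falling below the threshold and being merged back via Step 21 --- the only case you count --- and (ii) the set $\ca{B}^{(\ell)}$ itself falling below $\s{M}^{1-(2\zeta)^{-1}}$. Case (ii) matters because the estimation guarantee for the block $\fl{P}_{\ca{B}^{(\ell)},[\s{N}]}$ (hence the event $\ca{E}_1^{(\ell)}$, and the subsequent labelling in Steps 16--20) is only established under the $(\s{T}^{-1/2},\mu)$-local incoherence assumption, which requires $|\ca{B}^{(\ell)}|\ge \s{M}^{1-(2\zeta)^{-1}}$; users in a too-small $\ca{B}^{(\ell)}$ therefore also carry no per-round guarantee and must be counted as ignored. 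Adding this case naively to your count would give $3\s{M}^{1-(2\zeta)^{-1}}$. The paper recovers the stated bound of $2\s{M}^{1-(2\zeta)^{-1}}$ by observing that the three sets $\ca{M}^{(\ell,1)},\ca{M}^{(\ell,2)},\ca{B}^{(\ell)}$ partition $[\s{M}]$, so at most two of them can simultaneously be below the threshold; this partition argument is the missing ingredient in your proof of Part 2.
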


\begin{proof}
 The proof of the first part follows directly from Lemma \ref{lem:fourth}
and the definition of $\ca{E}_2^{(\ell)}$. Note that for the first phase ($\ell=1$), we have $\ca{B}^{(1)}=[\s{M}]$ and therefore $\ca{E}_1^{(1)}$ being true implies that $\ca{E}_2^{(2)}$ is true and one of $\ca{E}_3^{(2)}$ or $\ca{E}_4^{(2)}$ is true as well.

 Let us now move on to the proof of the second part.  
Since the three sets $\ca{M}^{(\ell,1)},\ca{M}^{(\ell,2)},\ca{B}^{(\ell)}$ (before Step 12) partition the set of users $[\s{M}]$, at most two of them can be smaller than $\s{M}^{1-(2\zeta)^{-1}}$ and can be ignored. Hence, the total number of ignored users can be at most $2\s{M}^{1-(2\zeta)^{-1}}$.
\end{proof}

 \begin{lemma}[Incoherence]\label{lem:incoherence}
 Suppose $a>3\max(1+\beta,1+\frac{1}{\beta})$ in Step 6 of Algoithm \ref{algo:phased_elim}.
 For any iteration in Algorithm $\ref{algo:phased_elim}$ indexed by $\ell>0$, we will have 
 \begin{align*}
     \frac{\|\fl{v}_{\ca{N}^{(\ell+1,1)}}\|_{\infty}}{\|\fl{v}_{\ca{N}^{(\ell+1,1)}}\|_{2}} \le \frac{3\max(\beta,\beta^{-1})}{2\sqrt{|\ca{N}^{(\ell+1,1)}|}} \; \text{ and } \; \frac{\|\fl{v}_{\ca{N}^{(\ell+1,2)}}\|_{\infty}}{\|\fl{v}_{\ca{N}^{(\ell+1,2)}}\|_{2}} \le \frac{3\max(\beta,\beta^{-1})}{2\sqrt{|\ca{N}^{(\ell+1,2)}|}}
 \end{align*}
 \end{lemma}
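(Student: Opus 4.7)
The plan is to exploit the fact that $\ca{N}^{(\ell+1,1)}$ is carved out precisely so that $\fl{v}_t$ is close (in value) to $\fl{v}_{j_{\max}}$ for every $t\in\ca{N}^{(\ell+1,1)}$, together with a lower bound on $|\fl{u}_u|$ for users that were eliminated from $\ca{B}^{(\ell+1)}$. Combining these two facts will force $|\fl{v}_t|$ to lie within a thin multiplicative band around $|\fl{v}_{j_{\max}}|$, after which the desired incoherence bound is immediate.

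Formally, I will assume WLOG that $\ca{E}_3^{(\ell+1)}$ holds (the case $\ca{E}_4^{(\ell+1)}$ follows by swapping the two cluster indices), so that $\ca{M}^{(\ell+1,1)}\subseteq\ca{C}_1$ with $\fl{u}_u>0$ and $j_{\max}\in\ca{N}^{(\ell+1,1)}$, and symmetrically for cluster $2$; this follows by induction on $\ell$ from Lemma~\ref{lem:fourth}, conditioned on $\ca{E}_1^{(\ell')}$ holding at every preceding phase $\ell'\le \ell$. For each $u\in\ca{M}^{(\ell+1,1)}$, since the sets $\ca{B}^{(\cdot)}$ are monotonically shrinking, there is a unique phase $\ell^\star\le \ell+1$ at which $u$ departed $\ca{B}$; the contrapositive of Lemma~\ref{lem1:second} at iteration $\ell^\star-1$ then gives $|\fl{u}_u|(\fl{v}_{j_{\max}}-\fl{v}_{j_{\min}})>2a\Delta_{\ell^\star-1}\ge 2a\Delta_\ell$, where the last step uses that $\Delta_{\ell'}$ is decreasing in $\ell'$. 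On the other hand, event $\ca{E}_2^{(\ell+1)}$ (delivered by Lemma~\ref{lem:fourth}) gives $|\fl{u}_u|(\fl{v}_{j_{\max}}-\fl{v}_t)\le 2\Delta_\ell$ for every $t\in\ca{N}^{(\ell+1,1)}$. Dividing these two bounds yields the key estimate $\fl{v}_{j_{\max}}-\fl{v}_t < (\fl{v}_{j_{\max}}-\fl{v}_{j_{\min}})/a$.

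A brief case check comparing $|\fl{v}_{j_{\max}}|$ and $|\fl{v}_{j_{\min}}|$ shows $\fl{v}_{j_{\max}}-\fl{v}_{j_{\min}}\le \max(1+\beta,1+\beta^{-1})\,|\fl{v}_{j_{\max}}|$, so the assumption $a>3\max(1+\beta,1+\beta^{-1})$ delivers $\fl{v}_{j_{\max}}-\fl{v}_t < |\fl{v}_{j_{\max}}|/3$ for every $t\in\ca{N}^{(\ell+1,1)}$. Splitting on the sign of $\fl{v}_{j_{\max}}$, this places $|\fl{v}_t|$ in $[\tfrac{2}{3}|\fl{v}_{j_{\max}}|,\tfrac{4}{3}|\fl{v}_{j_{\max}}|]$, so $\|\fl{v}_{\ca{N}^{(\ell+1,1)}}\|_\infty\le \tfrac{4}{3}|\fl{v}_{j_{\max}}|$ while $\|\fl{v}_{\ca{N}^{(\ell+1,1)}}\|_2\ge \tfrac{2}{3}|\fl{v}_{j_{\max}}|\sqrt{|\ca{N}^{(\ell+1,1)}|}$, giving the claimed ratio $\tfrac{3}{2\sqrt{|\ca{N}^{(\ell+1,1)}|}}\le \tfrac{3\max(\beta,\beta^{-1})}{2\sqrt{|\ca{N}^{(\ell+1,1)}|}}$ since $\max(\beta,\beta^{-1})\ge 1$. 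The bound for $\ca{N}^{(\ell+1,2)}$ is identical after replacing $j_{\max}\leftrightarrow j_{\min}$ and $\ca{C}_1\leftrightarrow\ca{C}_2$ (so now $\fl{u}_u<0$ and the analogue of the key inequality reads $\fl{v}_t-\fl{v}_{j_{\min}}<(\fl{v}_{j_{\max}}-\fl{v}_{j_{\min}})/a$). The main place to be careful will be handling all four sign configurations of $(\fl{v}_{j_{\max}},\fl{v}_{j_{\min}})$ when converting the additive closeness of $\fl{v}_t$ to $\fl{v}_{j_{\max}}$ into a two-sided multiplicative control of $|\fl{v}_t|/|\fl{v}_{j_{\max}}|$; this case analysis is exactly what the hypothesis $a>3\max(1+\beta,1+\beta^{-1})$ is calibrated to absorb.
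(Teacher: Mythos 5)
Your proposal follows essentially the same route as the paper's proof: a lower bound $\fl{u}_u(\fl{v}_{j_{\max}}-\fl{v}_{j_{\min}})>2a\Delta_{\ell}$ coming from the user's departure from $\ca{B}$ (Lemma \ref{lem1:second}), an upper bound $\fl{u}_u(\fl{v}_{j_{\max}}-\fl{v}_t)\le 2\Delta_{\ell}$ for every retained item, and division of the two to conclude that every $\fl{v}_t$ in the surviving set lies within a constant factor of $\fl{v}_{j_{\max}}$, from which the incoherence ratio is immediate; your tracking of the departure phase $\ell^{\star}$ (using $\Delta_{\ell^{\star}-1}\ge\Delta_{\ell}$) and your explicit sign cases are, if anything, slightly more careful than the paper, which implicitly takes $\fl{v}_{j_{\max}}>0$ and a user departing at exactly phase $\ell+1$. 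One arithmetic caution: merging the two sign cases into the single interval $|\fl{v}_t|\in[\tfrac{2}{3}|\fl{v}_{j_{\max}}|,\tfrac{4}{3}|\fl{v}_{j_{\max}}|]$ gives a ratio of $2/\sqrt{|\ca{N}^{(\ell+1,1)}|}$ rather than $3/(2\sqrt{|\ca{N}^{(\ell+1,1)}|})$, which would not be absorbed by the $\max(\beta,\beta^{-1})$ factor when $\max(\beta,\beta^{-1})<4/3$; keeping the cases separate — $|\fl{v}_t|\in[\tfrac{2}{3}|\fl{v}_{j_{\max}}|,|\fl{v}_{j_{\max}}|]$ when $\fl{v}_{j_{\max}}>0$ and $|\fl{v}_t|\in[|\fl{v}_{j_{\max}}|,\tfrac{4}{3}|\fl{v}_{j_{\max}}|]$ when $\fl{v}_{j_{\max}}<0$ — yields at most $3/(2\sqrt{|\ca{N}^{(\ell+1,1)}|})$ in each case, as your promised case analysis would show.
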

 
 \begin{proof}
 Consider any user $u\in [\s{M}]$ such that 1) $u\in \ca{B}^{(\ell)}$ in the $\ell^{\s{th}}$ iteration but in the $(\ell+1)^{\s{th}}$ iteration, $u\in [\s{N}]\setminus \ca{B}^{(\ell+1)}$ and 2) $\fl{u}_u>0$. In this case, we must have $\fl{u}_u(\fl{v}_{j_{\max}}-\fl{v}_{j_{\min}}) \ge 2a\Delta_{\ell}$ from Lemma \ref{lem1:second} where $a>3\max(1+\beta,1+\frac{1}{\beta})$. Now consider any item $j\in [\s{N}]$ such that $\fl{P}_{uj_{\max}}-\fl{P}_{uj} > 2\Delta_{\ell}$.

 In that case, by triangle inequality we have $$ \fl{\widetilde{P}}_{uj_{\max}}^{(\ell,1)}-\fl{\widetilde{P}}_{u}^{(\ell,1)} > \fl{\widetilde{P}}_{uj_{\max}}^{(\ell,1)}-\fl{P}_{uj_{\max}}+\fl{P}_{uj_{\max}}-\fl{P}_{uj}+ \fl{\widetilde{P}}_{uj_{\max}}^{(\ell,1)} > 2\Delta_{\ell}-\frac{\Delta_{\ell}}{2}-\frac{\Delta_{\ell}}{2} > \Delta_{\ell}$$ and therefore 
 $j\not \in \ca{T}_u^{(\ell+1)}$. Hence consider any item $j\in [\s{N}]$ that belongs to $\ca{T}_u^{(\ell+1)}$; it must happen that $\fl{u}_u(\fl{v}_{j_{\max}}-\fl{v}_j)\le 2\Delta_{\ell} \le \frac{\fl{u}_u(\fl{v}_{j_{\max}}-\fl{v}_{j_{\min}})}{a}$ implying that $\fl{v}_j \ge \fl{v}_{j_{\max}}\Big(1-\frac{(\beta+1)}{a}\Big)$. Due to our choice of $a$, we get that $\fl{v}_j \ge \frac{2\fl{v}_{j_{\max}}}{3}$. Therefore $\|\fl{v}_{\ca{T}_u^{(\ell+1)}}\|_{\infty} \ge \frac{2\fl{v}_{j_{\max}}}{3}$. Since we construct the sets $\ca{N}^{(\ell+1,1)}$ and  $\ca{N}^{(\ell+1,2)}$ by taking intersections of the sets $\ca{T}_u^{(\ell+1)}$ of items for users in $\ca{M}^{(\ell,1)}$ and $\ca{M}^{(\ell,2)}$ respectively, one of these sets (say $\ca{N}^{(\ell+1,1)}$) must correspond to users with positive embedding while the other will correspond to users with a negative embedding (see Lemma \ref{lem:fourth}). Hence, we will have
 $\|\fl{v}_{\ca{N}^{(\ell+1,1)}}\|_{2} \ge \sqrt{|\ca{N}^{(\ell+1,1)}|}\frac{2|\fl{v}_{j_{\max}}|}{3}$ and $\|\fl{v}_{\ca{N}^{(\ell+1,1)}}\|_{\infty} \le \max(\beta,\beta^{-1})|\fl{v}_{j_{\max}}|$ thus completing the proof.
 By an analogous argument, the conclusion of the lemma is also true for users $u\in [\s{M}]$ with a negative embedding i.e.  $\fl{u}_u<0$. 
\end{proof}

\paragraph{Estimation of matrix $\fl{P}_{\ca{B}^{(\ell)},\s{N}}$ in Step 3 of Algorithm \ref{algo:phased_elim} in $\ell^{\s{th}}$ iteration:} We will only provide guarantees on estimation of the matrix $\fl{P}_{\ca{B}^{(\ell)},\s{N}}$ under the condition that the number of users in $\ca{B}^{(\ell)}$ is at least $\s{M}^{1-(2\zeta)^{-1}}$ (recall that $\s{T}^{\zeta}=\s{M}$).
Suppose the SVD of the matrix $\fl{P}$ is $\lambda \fl{\bar{u}}\fl{\bar{v}}$
satisfying $\|\fl{\bar{u}}\|=1, \|\fl{\bar{v}}\|=1$. In that case, SVD of the matrix $\fl{P}_{\ca{B}^{(\ell)},\s{N}}$ is $\lambda \|\fl{\bar{u}}_{\ca{B}^{(\ell)}}\|_2 \frac{\fl{\bar{u}}_{\ca{B}^{(\ell)}}}{\|\fl{\bar{u}}_{\ca{B}^{(\ell)}}\|_2}\fl{\bar{v}}$. For brevity, we will write $\widetilde{\fl{u}}=\frac{\fl{\bar{u}}_{\ca{B}^{(\ell)}}}{\|\fl{\bar{u}}_{\ca{B}^{(\ell)}}\|_2}$.
From our local incoherence assumption (we know that $\fl{\bar{u}}$ is $(\s{T}^{-1/2},\mu)$-locally incoherent), we know that 
\begin{align*}
    \|\widetilde{\fl{u}}\|_2 = 1, \; \|\widetilde{\fl{u}}\|_{\infty} \le \sqrt{\frac{\mu}{|\ca{B}^{(\ell)}|}} \text{ and }  \|\fl{\bar{v}}\|_{\infty} \le \sqrt{\frac{\mu}{\s{N}}}.
\end{align*}

By directly using the result in Lemma \ref{lem:min_acc} for the $r=1$ setting (rank $1$) restricted to users in $\ca{B}^{(\ell)}$ and items in $[\s{N}]$, we can recover an estimate $\widetilde{\fl{Q}}^{(\ell)}_{\ca{B}^{(\ell)},\s{N}}$ of $\fl{P}_{\ca{B}^{(\ell)},\s{N}}$ using $m_{\ell}=s_{\ell}\log (\s{MN}\delta^{-1}) (\s{N}p_{\ell}+\sqrt{\s{N}p_{\ell}\log \s{M}\delta^{-1}}) = O(s_{\ell}\log (\s{MN}\delta^{-1}) (\s{N}p_{\ell}\sqrt{\log \s{M}\delta^{-1}}))$ rounds (where $s_{\ell}$ and $p_{\ell}$ corresponds to the parameters $s,p$ in Lemma \ref{lem:min_acc}) such that 
\begin{align*}
    \|\widetilde{\fl{Q}}^{(\ell)}_{\ca{B}^{(\ell)},\s{N}}-\fl{P}_{\ca{B}^{(\ell)},\s{N}}\|_{\infty} \le O\Big(\frac{\sigma}{\sqrt{s_{\ell}d_2'}}\sqrt{\frac{\mu^3\log d_2'}{p_{\ell}}}\Big)
\end{align*}
with probability $1-O(\delta\log (\s{MN}\delta^{-1}))$ where $ d_2'=\min(\s{M}^{1-(2\zeta)^{-1}},\s{N})$. In other words, by denoting $q_{\ell}=p_{\ell}s_{\ell}$, we have that by using $m_{\ell}= O(q_{\ell}\s{N}\log ^2({\s{MNT}})))$
rounds, we get
\begin{align}\label{eq:unclustered}
    \|\widetilde{\fl{Q}}^{(\ell)}_{\ca{B}^{(\ell)},\s{N}}-\fl{P}_{\ca{B}^{(\ell)},\s{N}}\|_{\infty} \le O\Big(\frac{\sigma\sqrt{\mu^3 \log d_2'}}{\sqrt{q_{\ell}d_2'}}\Big)
\end{align}
with probability at least $1-\s{T}^{-4}$. Hence, if we need an estimate $\widetilde{\fl{Q}}^{(\ell)}_{\ca{B}^{(\ell)},\s{N}}$ such that $\|\widetilde{\fl{Q}}^{(\ell)}_{\ca{B}^{(\ell)},\s{N}}-\fl{P}_{\ca{B}^{(\ell)},\s{N}}\|_{\infty} \le \Delta_{\ell}$ for some input parameter $\Delta_{\ell}$, then $\frac{\sigma\sqrt{\mu^3 \log d_2'}}{\sqrt{q_{\ell}d_2'}} \le c'\Delta_{\ell}$ for some appropriate constant $c'$ implying that $q_{\ell}=\frac{\sigma^2\mu^3\log d_2'}{c'^2d_2'\Delta_{\ell}^2}$ and therefore, the total number of rounds is $O\Big(\max(1,\frac{\s{N}}{\s{M}^{1-(2\zeta)^{-1}}})\frac{\sigma^2\mu^3\log^2(\s{MNT})}{\Delta_{\ell}^2}\Big) $ with probability $1-\s{T}^{-4}$. 
By using the result in Lemma \ref{lem:min_acc}, we will need to use $s_{\ell},p_{\ell}$ such that $p_{\ell} \ge C\mu^2(d'_2)^{-1}\log ^3d'_2$ and $\frac{\sigma}{\sqrt{s_{\ell}}}=O\Big(\sqrt{\frac{p_{\ell}d'_2}{\mu^3\log d'_2}}\|\fl{P}\|_{\infty}\Big)$. The latter condition implies that $q_{\ell}=s_{\ell}p_{\ell}=\frac{\sigma^2\mu^3\log d_2'}{c_2^2d_2'\|\fl{P}\|_{\infty}^2}$ for some constant $c_2$. Putting everything together, we have that a sufficient value of $q_{\ell}$ is $\frac{\sigma^2\mu^3\log d_2'}{c_2^2d_2'\|\fl{P}\|_{\infty}^2}$ provided $\Delta_{\ell}\le \|\fl{P}\|_{\infty}$.

\paragraph{Estimation of matrix $\fl{P}_{\ca{M}^{(\ell,1)}, \ca{N}^{(\ell,1)}}$ (and similarly $\fl{P}_{\ca{M}^{(\ell,2)}, \ca{N}^{(\ell,2)}}$) in $\ell^{\s{th}}$ iteration:} In this case again, we will show guarantees only when $\left|\ca{M}^{(\ell,1)}\right| \ge \s{M}^{1-(2\zeta)^{-1}}$ (recall that $\s{M}^{\zeta}=\s{T}$). There are two sub-cases in this setting: 

\begin{enumerate}
    \item ($\s{N}\ge |\ca{N}^{(\ell,1)}|\ge \s{M}^{1-(2\zeta)^{-1}}$). In this case, as in the analysis for estimating $\fl{P}_{\ca{B}^{(\ell)},\s{N}}$, we can find an estimate $\widetilde{\fl{P}}^{(\ell,1)}\in \bb{R}^{\s{M}\times \s{N}}$ such that $\|\widetilde{\fl{P}}_{\ca{M}^{(\ell,1)}, \ca{N}^{(\ell,1)}}^{(\ell,1)}-\fl{P}_{\ca{M}^{(\ell,1)}, \ca{N}^{(\ell,1)}}\|\le \Delta_{\ell}$ for an input parameter $\Delta_{\ell}$ using 
    $O\Big(\max(1,\frac{\s{N}}{\s{M}^{1-(2\zeta)^{-1}}})\frac{\sigma^2\mu^3\log^2 (\s{MNT)}}{\Delta^2_{\ell}}\Big) $ rounds with probability $1-\s{T}^{-4}$ (by using Lemma \ref{lem:min_acc} for users in $\ca{M}^{(\ell,1)}$ and items in $\ca{N}^{(\ell,1)})$. In the above analysis, we used the fact that after clustering (see Lemma \ref{lem:incoherence}), we have $|\sqrt{\ca{N}^{(\ell,1)}}|\|\fl{v}_{\ca{N}^{(\ell,1)}}\|_{\infty}=O(\|\fl{v}_{\ca{N}^{(\ell,1})}\|_{2})$. In addition, we also used the fact that the $\fl{u}$ is $(\s{T}^{-1/2},\mu)$-locally incoherent.

    \item ($|\ca{N}^{(\ell,1)}|\le \s{M}^{1-(2\zeta)^{-1}}$) This is the more interesting case. Again, let us write the SVD of $\fl{P}_{\ca{M}^{(\ell,1)}, \ca{N}^{(\ell,1)}}=\widetilde{\lambda}\widetilde{\fl{u}}\widetilde{\fl{v}}$ with $\widetilde{\fl{u}}\in \bb{R}^{|\ca{M}^{(\ell,1)}|}, \widetilde{\fl{v}}\in \bb{R}^{|\ca{N}^{(\ell,1)}|}$. We know that $\widetilde{\fl{u}}$ is $\mu$-incoherent (since $|\ca{M}^{(\ell,1)}|\ge \s{M}^{1-(2\zeta)^{-1}}$) and moreover $\widetilde{\fl{v}}$ is $O(1)$-incoherent (see lemma \ref{lem:incoherence}). Again, by directly using the result in Lemma \ref{lem:min_acc} for the $r=1$ setting (rank $1$) for users in $\ca{M}^{(\ell,1)}$ and items in $\ca{N}^{(\ell,1)})$, we can recover an estimate $\widetilde{\fl{P}}^{(\ell,1)}_{\ca{M}^{(\ell,1)}, \ca{N}^{(\ell,1)}}$ of $\fl{P}_{\ca{M}^{(\ell,1)}, \ca{N}^{(\ell,1)}}$ (where $\widetilde{\fl{P}}^{(\ell,1)}\in \bb{R}^{\s{M}\times \s{N}}$) using $m_{\ell}= O(s_{\ell}\log (\s{MN}\delta^{-1}) (|\ca{N}^{(\ell,1)}|p_{\ell}\sqrt{\log \s{M}\delta^{-1}}))$ rounds (where $s_{\ell}$ and $p_{\ell}$ corresponds to the parameters $s,p$ in Lemma \ref{lem:min_acc}) such that 
\begin{align*}
    \|\widetilde{\fl{P}}^{(\ell,1)}_{\ca{M}^{(\ell,1)}, \ca{N}^{(\ell,1)}}-\fl{P}_{\ca{M}^{(\ell,1)}, \ca{N}^{(\ell,1)}}\|_{\infty} \le O\Big(\frac{\sigma}{\sqrt{s_{\ell}|\ca{N}^{(\ell,1)}|}}\sqrt{\frac{\mu^3\log |\ca{N}^{(\ell,1)}|}{p_{\ell}}}\Big)
\end{align*}
with probability $1-O(\delta\log \delta^{-1})$. As before, let us denote $q_{\ell}=p_{\ell}s_{\ell}$. We have that by using $m_{\ell}= O(q_{\ell}|\ca{N}^{(\ell,1)}|\log ^2\s{MN}\s{T}))$
rounds, we get
\begin{align}\label{eq:unclustered2}
    \|\widetilde{\fl{P}}^{(\ell,1)}_{\ca{M}^{(\ell,1)}, \ca{N}^{(\ell,1)}}-\fl{P}_{\ca{M}^{(\ell,1)}, \ca{N}^{(\ell,1)}}\|_{\infty} \le O\Big(\frac{\sigma\sqrt{\mu^3 \log |\ca{N}^{(\ell,1)}|}}{\sqrt{q_{\ell}|\ca{N}^{(\ell,1)}|}}\Big)
\end{align}
with probability at least $1-\s{T}^{-4}$. Hence, if we need an estimate $\widetilde{\fl{P}}^{(\ell,1)}_{\ca{M}^{(\ell,1)}, \ca{N}^{(\ell,1)}}$ such that $\|\widetilde{\fl{P}}^{(\ell,1)}_{\ca{M}^{(\ell,1)}, \ca{N}^{(\ell,1)}}-\fl{P}_{\ca{M}^{(\ell,1)}, \ca{N}^{(\ell,1)}}\|_{\infty} \le \Delta_{\ell}$ for some input parameter $\Delta_{\ell}$, then $\frac{\sigma\sqrt{\mu^3 \log |\ca{N}^{(\ell,1)}|}}{\sqrt{q_{\ell}|\ca{N}^{(\ell,1)}|}} \le c'\Delta_{\ell}$ for some appropriate constant $c'$ implying that a sufficient value of $q_{\ell}=\frac{\sigma^2\mu^3\log \s{N}}{c'^2|\ca{N}^{(\ell,1)}|\Delta_{\ell}^2}$ and therefore, the total number of rounds is $O\Big(\frac{\sigma^2\mu^3\log\s{N}\log^2 (\s{MNT)}}{\Delta_{\ell}^2}\Big) $ with probability $1-\s{T}^{-4}$. 
Again, by using the result in Lemma \ref{lem:min_acc}, we will need to use $s_{\ell},p_{\ell}$ such that $p_{\ell} \ge C\mu^2(|\ca{N}^{(\ell,1)}|)^{-1}\log ^3|\ca{N}^{(\ell,1)}|$ and $\frac{\sigma}{\sqrt{s_{\ell}}}=O\Big(\sqrt{\frac{p_{\ell}|\ca{N}^{(\ell,1)}|}{\mu^3\log |\ca{N}^{(\ell,1)}|}}\|\fl{P}\|_{\infty}\Big)$. Again, the latter condition implies that $q_{\ell}=s_{\ell}p_{\ell}=\frac{\sigma^2\mu^3\log |\ca{N}^{(\ell,1)}|}{c_2^2|\ca{N}^{(\ell,1)}|\|\fl{P}\|_{\infty}^2}$ for some constant $c_2$. Putting everything together, we have that a sufficient value of $q_{\ell}$ is $\frac{\sigma^2\mu^3\log |\ca{N}^{(\ell,1)}|}{c_2^2|\ca{N}^{(\ell,1)}|\Delta_{\ell}^2}$ provided $\Delta_{\ell}\le \|\fl{P}\|_{\infty}$.

\begin{rmk}\label{rmk:edge_case}
There is an edge case when $\left|\ca{N}^{(\ell,1)}\right|$ becomes so small that the condition $\left|\ca{N}^{(\ell,1)}\right|=\Omega(\mu \log \left|\ca{N}^{(\ell,1)}\right|)$ is not satisfied. Hence, we must have $\left|\ca{N}^{(\ell,1)}\right|=O(\mu)$.
If this becomes true, then in theory, a condition in Lemma \ref{lem:min_acc} becomes violated and low rank matrix completion will not work. However, an easy fix in this situation is to implement the standard Upper-Confidence Bound algorithm for each user in $\ca{M}^{(\ell,1)}$ separately until the end of rounds (from the phase $\ell$ when the condition becomes false) with the set of items $\ca{N}^{(\ell,1)}$ (and also for users in $\ca{B}^{(\ell)}$ that are inserted into $\ca{M}^{(\ell,1)}$ in subsequent phases). From Remark \ref{rmk:begin}, this step will only add a regret of $O(\sigma\sqrt{\left|\ca{N}^{(\ell,1)}\right|\s{T}\log \s{T}})=O(\sigma\sqrt{\mu\s{T}\log\s{T}})$ which is a significantly lower order term than the regret guarantee proved in Theorem \ref{thm:main1}.
\end{rmk}

Again, by using the result in Lemma \ref{lem:min_acc}, we will use $s_{\ell}$ to be the minimum positive integer satisfying $\frac{\sigma}{\sqrt{s_{\ell}}}=O\Big(\Delta_{\ell}\sqrt{\frac{|\ca{N}^{(\ell,1)}|}{\mu^3 \log |\ca{N}^{(\ell,1)}|}}\Big)$ and $\sqrt{p_{\ell}}=\frac{\sigma r}{\sqrt{s_{\ell} |\ca{N}^{(\ell,1)}|}}\frac{\sqrt{\mu^3 \log |\ca{N}^{(\ell,1)}|}}{c\Delta_{\ell}}$ for some appropriate constant $c>0$. 

\end{enumerate}

Now, we are ready to state and prove the main theorems:

\begin{thmu}[Restatement of Theorem \ref{thm:main1}]
Consider the rank-$1$ online matrix completion problem with $\s{T}$ rounds, $\s{M}$ users s.t. $\s{M}\ge \sqrt{\s{T}}$ and $\s{N}$ items. Denote $d_2=\min(\s{M},\s{N})$.
Let $\fl{R}^{(t)}_{u\rho_u(t)}$ be the reward in each round, defined as in \eqref{eq:obs}. Let $\sigma^2$ be the noise variance in rewards and let $\fl{P}\in \bb{R}^{\s{M}\times \s{N}}$ be the expected reward matrix with SVD decomposition $\fl{P}=\lambda\fl{\bar{u}}\fl{\bar{v}}^{\s{T}}$ such that $\fl{\bar{u}}$ is $(\s{T}^{-1/2},\mu)$-locally incoherent, $ \|\fl{\bar{v}}\|_{\infty}\le \sqrt{\mu/\s{N}}$, $d_2=\Omega(\mu \log d_2)$  
  and $|\fl{\bar{v}}_{j_{\min}}|=\Theta(|\fl{\bar{v}}_{j_{\max}}|)$.  Then, by suitably choosing parameters $\{\Delta_{\ell}\}_{\ell}$, positive integers $\{s_{(\ell,0)},s_{(\ell,1)},s_{(\ell,2)}\}_{\ell}$ and $1 \ge \{p_{(\ell,0)},p_{(\ell,1)},p_{(\ell,2)}\}_{\ell} \ge 0$ as described in Algorithm \ref{algo:phased_elim}, we can ensure a regret guarantee of $\s{Reg}(\s{T})=O(\sqrt{\s{T}}\|\fl{P}\|_{\infty}+\s{J}\sqrt{\s{TV}})$ where $\s{J}=O\Big(\log \Big(\frac{1}{\sqrt{\s{VT}^{-1}}}\min\Big(\|\fl{P}\|_{\infty},\frac{\sigma\sqrt{\mu}}{\log \s{N}}\Big)\Big)\Big)$ and $\s{V}=\Big(\max(1,\frac{\s{N}\sqrt{\s{T}}}{\s{M}})\sigma^2\mu^3\log^2(\s{MNT})\Big)$.
\end{thmu}

\begin{proof}[Proof of Theorem~\ref{thm:main1}]

 In the $\ell^{\s{th}}$ iteration, we set $\Delta_{\ell}=C'2^{-\ell}\min\Big(\|\fl{P}\|_{\infty},\frac{\sigma\sqrt{\mu}}{\log \s{N}}\Big)$ for some appropriate constant $C'>0$. In the $\ell^{\s{th}}$ iteration, our goal is to obtain estimates of the matrices $\fl{P}_{\ca{B}^{(\ell)},\s{N}}$ (using Lemma \ref{lem:min_acc} with parameters $(s_{(\ell,0)},p_{(\ell,0)})$ corresponding to $s,p$ in Lemma \ref{lem:min_acc}), $\fl{P}_{\ca{M}^{(\ell,1)},\ca{N}^{(\ell,1)}}$ (using Lemma \ref{lem:min_acc} with parameters $(s_{(\ell,1)},p_{(\ell,1)})$) and $\fl{P}_{\ca{M}^{(\ell,2)},\ca{N}^{(\ell,2)}}$ (using Lemma \ref{lem:min_acc} with parameters $(s_{(\ell,2)},p_{(\ell,2)})$ up to an error of $\Delta_{\ell}/2$ with high probability. 
 
 From Lemma \ref{lem:min_acc}, for any sub-matrix $\fl{P}_{\s{sub}}\in \bb{R}^{\s{M}'\times \s{N}'}$ of $\fl{P}_{\s{sub}}$, we can compute an estimate $\widehat{\fl{P}}_{\s{sub}}$ satisfying $\|\widehat{\fl{P}}_{\s{sub}}-\fl{P}_{\s{sub}}\|_{\infty} \le \Delta_{\ell}/2 $ with high probability by setting the parameters $s_{\ell},p_{\ell}$ such that 
\begin{align}
     \frac{\sigma r}{\sqrt{s_{\ell}d'_2}}\sqrt{\frac{\mu^3\log d'_2}{p_{\ell}}} = c\Delta_{\ell}
\end{align}
for some constant $c>0$ where $d'_2=\min(\s{M}',\s{N}')$. The largest possible error $\Delta_{\ell}$ that is possible to obtain by using the matrix completion technique is obtained by substituting $s=1$ and $p=C\mu^2(d'_2)^{-1}\log^3 d'_2$; we obtain that $\Delta_{\ell} \le \frac{\sigma r\sqrt{\mu}}{\log d'_2}$. Note that $d'_2 \le \s{N}$ (in both cases when $\s{M} \ge \s{N}$ and vice-versa) and therefore, if we choose $\Delta_{\ell} \le C'\min\Big(\frac{\sigma\sqrt{\mu}}{\log \s{N}}\Big)$ for some appropriate constant $C'>0$, then there exists $s_{\ell}$ and $p_{\ell}$ for which we can obtain $\|\widehat{\fl{P}}_{\s{sub}}-\fl{P}_{\s{sub}}\|_{\infty} \le \Delta_{\ell}/2 $ with high probability. Moreover, if $\Delta_{\ell}\le \|\fl{P}\|_{\infty}$, we can also ensure that $\frac{\sigma}{\sqrt{s_{\ell}}}=O\Big(\sqrt{\frac{p_{\ell}|\ca{N}^{(\ell,1)}|}{\mu^3\log |\ca{N}^{(\ell,1)}|}}\|\fl{P}\|_{\infty}\Big)$. Hence, we can ignore any lower bounds on $p_{\ell}$ and upper bounds on $\sigma$ as they are automatically satisfied due to the instantiation of $\Delta_{\ell}$.

Also from the analysis above (see equations \ref{eq:unclustered} and \ref{eq:unclustered2}), we know that by using a total number of rounds in the $\ell^{\s{th}}$ phase that is bounded from above by $O\Big(\max(1,\frac{\s{N}}{\s{M}^{1-(2\zeta)^{-1}}})\frac{\sigma^2\mu^3\log \s{N}\log^2(\s{MNT})}{\Delta_{\ell}^2}\Big)$ (see the analysis for the sufficient number of rounds for estimating $\fl{P}_{\ca{B}^{(\ell)},\s{N}}$ in the $\ell^{\s{th}}$ iteration)
, we have  with probability $1-O(\s{T}^{-4})$ (the event $\ca{E}^{(\ell)}_1$ for the $\ell^{\s{th}}$ iteration)
\begin{align}
    &\left|\left|\widetilde{\fl{Q}}^{(\ell)}_{\ca{B}^{(\ell)},[\s{N}]}-\fl{P}_{\ca{B}^{(\ell)},[\s{N}]}\right|\right|_{\infty} \le \frac{\Delta_{\ell}}{2}  \label{eq:ref11} \\
    &\left|\left|\widetilde{\fl{P}}^{(\ell,1)}_{\ca{M}^{(\ell,1)},\ca{N}^{(\ell,1)}}-\fl{P}_{\ca{M}^{(\ell,1)},\ca{N}^{(\ell,1)}}\right|\right|_{\infty}  \le \frac{\Delta_{\ell}}{2}  \label{eq:ref12}\\
    &\left|\left|\widetilde{\fl{P}}^{(\ell,2)}_{\ca{M}^{(\ell,2)},\ca{N}^{(\ell,2)}}-\fl{P}_{\ca{M}^{(\ell,2)},\ca{N}^{(\ell,2)}}\right|\right|_{\infty}  \le \frac{\Delta_{\ell}}{2}. \label{eq:ref13}
\end{align}
We condition on the events $\ca{E}_1^{(\ell)}$ being true for all $\ell$. The probability that there exists any $\ell$ such that the event $\ca{E}_1^{(\ell)}$ is false is $O(\s{T}^{-4})$; hence the probability that $\ca{E}_1^{(\ell)}$ is true for all $\ell$ is at least $1-O(\s{T}^{-3}$ ( the total number of iterations can be at most $\s{T}$). In case, one such event $\ca{E}_1^{(\ell)}$ is false, we can bound the regret trivially by $O(\s{T}^{-3}\|\fl{P}\|_{\infty})$. By an inductive argument (See Lemma \ref{lem1:third} and the first part of Corollary \ref{coro:crucial}), we know that the events $\ca{E}_2^{(\ell)}$ will be true for all $\ell$. 
  We will also denote the set of rounds in phase $\ell$ by $\ca{T}_{\ell}\subseteq [\s{T}]$ (therefore $\left|\ca{T}_{\ell}\right|=m_{\ell}$). Let us compute the  regret restricted to the rounds in $\ca{T}^{(\ell)}$ conditioned on the events $\ca{E}_1^{(\ell)},\ca{E}_2^{(\ell)}$ being true for all $\ell$ as follows:
\begin{align*}
   &\frac{m_{\ell}}{\s{M}}\sum_{u \in [\s{M}]}\f{\mu}_u^{\star}- \sum_{t\in \ca{T}^{(\ell)}}\frac{1}{\s{M}}\sum_{u\in[\s{M}]}\mathbf{P}_{u\rho_u(t)}^{(t)} \\
   &=\underbrace{\frac{m_{\ell}}{\s{M}}\sum_{u \in [\s{M}]: u \text{ not ignored}}\f{\mu}_u^{\star}- \sum_{t\in \ca{T}^{(\ell)}}\frac{1}{\s{M}}\sum_{u\in[\s{M}]: u \text{ not ignored}}\mathbf{P}_{u\rho_u(t)}^{(t)}}_{\text{ Term 1}}\\
   &+\underbrace{\frac{m_{\ell}}{\s{M}}\sum_{u \in [\s{M}]: u \text{ ignored}}\f{\mu}_u^{\star}- \sum_{t\in \ca{T}^{(\ell)}}\frac{1}{\s{M}}\sum_{u\in[\s{M}]: u \text{ ignored}}\mathbf{P}_{u\rho_u(t)}^{(t)}}_{\text{Term 2}} \\
\end{align*}  
The second term can be bounded from above by $O(m_{\ell}\s{M}^{1-(2\zeta)^{-1}}\|\fl{P}\|_{\infty}/\s{M})=O(m_{\ell}\s{M}^{-(2\zeta)^{-1}}\|\fl{P}\|_{\infty})$ (see second part of Corollary \ref{coro:crucial}).The first term can be bounded by using Corollary \ref{coro:crucial} conditioned on the events $\ca{E}_1^{(\ell)},\ca{E}_2^{(\ell)}$ being true for all $\ell$,  
\begin{align}
 (2a+1)\Delta_{\ell-1}m_{\ell}=O\Big(\max(1,\frac{\s{N}}{\s{M}^{1-(2\zeta)^{-1}}})\frac{\sigma^2\mu^3\log\s{N}\log^2(\s{MNT})}{\Delta_{\ell}}\Big).   
\end{align}

Putting everything together, we can now bound the total regret as follows (using $\Delta_{\ell-1}=2\Delta_{\ell}$):
 \begin{align*}
     &\s{Reg}(\s{T})=\frac{\s{T}}{\s{M}}\sum_{u\in [\s{M}]}(\mu_u^{\star}-\sum_{t\in [\s{T}]}\bb{E}\fl{R}_{u\rho_u(t)}^{(\s{t})}) \\
     &=\sum_{\ell}  O\Big(\Delta_{\ell}m_{\ell}\mid \ca{E}^{(\ell)}_1,\ca{E}^{(\ell)}_2 \text{ is true for all } \ell\Big)+O(\s{M}^{-(2\zeta)^{-1}}\|\fl{P}\|_{\infty}m_{\ell}\mid  \ca{E}^{(\ell)}_1,\ca{E}^{(\ell)}_2 \text{ is true for all } \ell)+O(\s{T}^{-3}\|\fl{P}_{\infty}\|) \\
     &\le  \sum_{\ell}  O\Big(\Delta_{\ell}m_{\ell}\mid \ca{E}^{(\ell)}_1,\ca{E}^{(\ell)}_2 \text{ is true for all } \ell\Big)+O(\sqrt{\s{T}}\|\fl{P}\|_{\infty}) 
 \end{align*}
 where we used the fact that $\sum_{\ell}m_{\ell} = \s{T}$ and $\s{T}^{\zeta}=\s{M}$. As stated in the theorem, let $\s{V}=\Big(\max(1,\frac{\s{N}}{\s{M}^{1-(2\zeta)^{-1}}})\sigma^2\mu^3\log\s{N}\log^2(\s{MNT})$.
 Moving on, we can decompose the first term into two parts: 
\begin{align*}
     &\s{Reg}(\s{T})\le O(\sqrt{\s{T}}\|\fl{P}\|_{\infty})+O\Big(\sum_{\ell: \Delta_{\ell}\le\Phi} \Delta_{\ell}m_{\ell}\mid \ca{E}^{(\ell)}_1,\ca{E}^{(\ell)}_2 \text{ is true for all } \ell\Big)\\
     &+O\Big(\sum_{\ell: \Delta_{\ell}>\Phi} \Delta_{\ell}\s{V}\Delta_{\ell}^{-2}\mid \ca{E}^{(\ell)}_1,\ca{E}^{(\ell)}_2 \text{ is true for all } \ell\Big) \\
     &\le O(\sqrt{\s{T}}\|\fl{P}\|_{\infty}) +\s{T}\Phi+O\Big(\sum_{\ell: \Delta_{\ell}>\Phi} \s{V}\Delta_{\ell}^{-1}\Big) \\
\end{align*}
Since we chose $\Delta_{\ell}=C'2^{-\ell}\min\Big(\|\fl{P}\|_{\infty},\frac{\sigma\sqrt{\mu}}{\log \s{N}}\Big)$ for some constant $C'>0$, the maximum number of phases $\ell$ for which $\Delta_{\ell}>\Phi$ can be bounded from above by $\s{J}=O\Big(\log \Big(\frac{1}{\Phi}\min\Big(\|\fl{P}\|_{\infty},\frac{\sigma\sqrt{\mu}}{\log \s{N}}\Big)\Big)\Big)$. Hence, we have
\begin{align*}
    \s{Reg}(\s{T}) &\le O(\sqrt{\s{T}}\|\fl{P}\|_{\infty})+O(\s{T}\Phi)+O\Big(\s{JV}\Phi^{-1}\Big) \\
     &= O(\sqrt{\s{T}}\|\fl{P}\|_{\infty})+O(\s{J}\sqrt{\s{TV}})
\end{align*}
where we substituted $\Phi=\sqrt{\s{V}\s{T}^{-1}}$ and hence $\s{J}=O\Big(\log \Big(\frac{1}{\sqrt{\s{VT}^{-1}}}\min\Big(\|\fl{P}\|_{\infty},\frac{\sigma\sqrt{\mu}}{\log \s{N}}\Big)\Big)\Big)$ in the final step. 
\end{proof}

\begin{thmu}[Restatement of Theorem \ref{thm:lb}]
Let $\fl{P}$ be a rank $1$ matrix such that $0 \le \fl{P}_{ij} \le 1$ for all $i,j\in [\s{M}]\times [\s{N}]$ and the noise variance $\sigma^2=1$. In that case, for problem P1, any algorithm will suffer a regret of at least $\Omega(\sqrt{\s{NT}\s{M}^{-1}})$.
\end{thmu}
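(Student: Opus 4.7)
The plan is to exhibit a worst-case rank-$1$ instance on which our problem reduces to a standard stochastic $N$-arm bandit over $\mathsf{M}\mathsf{T}$ pulls, and then invoke the classical $\Omega(\sqrt{N\cdot(\text{number of pulls})})$ MAB lower bound.

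First I would construct the hard instance. Take the latent user vector $\fl{u}=\mathbf{1}_{\s{M}}$ and let $\fl{v}\in[0,1]^{\s{N}}$ be the mean vector of any standard $\s{N}$-arm Gaussian bandit instance used in the classical bandit lower bound (two nearby sub-Gaussian arms plus $\s{N}-1$ identical arms, shifted so entries lie in $[0,1]$). Then $\fl{P}=\fl{u}\fl{v}^{\s{T}}$ is rank $1$, all rows are identical, and the best item $j^\star=\arg\max_j \fl{v}_j$ is common to every user. Because all users share the same reward distribution per item, the per-user gap $\mu_u^\star-\fl{P}_{uj}=\fl{v}_{j^\star}-\fl{v}_j$ is independent of $u$, and the (total, non-normalized) regret becomes
\begin{equation*}
\s{M}\cdot\s{Reg}(\s{T}) \;=\; \sum_{t\in[\s{T}]}\sum_{u\in[\s{M}]}\bb{E}\bigl[\fl{v}_{j^\star}-\fl{v}_{\rho_u(t)}\bigr].
\end{equation*}

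Next I would view the trajectory of our algorithm on this instance as (a restricted form of) an $\s{N}$-arm bandit algorithm making $\s{M}\s{T}$ pulls. Concretely, enumerate the pairs $(u,t)\in[\s{M}]\times[\s{T}]$ in any fixed order; each pair corresponds to a pull of arm $\rho_u(t)\in[\s{N}]$, yielding an independent $\ca{N}(\fl{v}_{\rho_u(t)},1)$-style observation. Any online matrix completion algorithm induces a (possibly less adaptive, only $\s{T}$ update epochs instead of $\s{M}\s{T}$) policy for this $\s{N}$-arm bandit with horizon $\s{M}\s{T}$. Restricting the class of admissible policies can only increase the minimax regret, so the classical minimax MAB lower bound (e.g., Theorem~15.2 in Lattimore--Szepesv\'ari~\cite{lattimore2020bandit}) gives
\begin{equation*}
\s{M}\cdot\s{Reg}(\s{T}) \;=\; \Omega\bigl(\sqrt{\s{N}\cdot \s{M}\s{T}}\bigr),
\end{equation*}
where the expectation is taken over the noise and the algorithm's internal randomness on the worst-case choice of $\fl{v}$ within the family. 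Dividing both sides by $\s{M}$ yields $\s{Reg}(\s{T})=\Omega(\sqrt{\s{N}\s{T}/\s{M}})$, which is the claimed bound.

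The only step that requires a small amount of care is the reduction to the MAB lower bound: one must check that the standard two-point (Le Cam / KL) construction used for the $\Omega(\sqrt{N T'})$ bound still applies to the policy class induced by our algorithm, in which the $M$ queries within one round must be chosen simultaneously rather than sequentially. This is straightforward because the Le Cam argument only counts the total number of samples drawn from each arm under the two candidate instances, a quantity which is identical for a batched policy with $\s{T}$ batches of size $\s{M}$ and a fully sequential policy with $\s{M}\s{T}$ pulls; in particular the KL-divergence bound between the laws of the observation sequences under the two instances is unchanged. Once this equivalence is in hand, the lower bound transfers verbatim and the theorem follows.
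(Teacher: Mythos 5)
Your proposal is correct and follows essentially the same route as the paper: make all $\s{M}$ users identical so the problem collapses to a single $\s{N}$-armed bandit with $\s{M}\s{T}$ pulls, apply the classical $\Omega(\sqrt{\s{N}\cdot\s{MT}})$ minimax lower bound, and divide by $\s{M}$. Your explicit remark that the Le Cam/KL argument depends only on per-arm sample counts (so batching $\s{M}$ pulls per round changes nothing) is a slightly more careful justification of the step the paper handles by simply granting the algorithm full within-round adaptivity, but the substance of the reduction is identical.
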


\begin{proof}[Proof of Theorem \ref{thm:lb}]

Recall that the well-known multi-armed bandit (MAB) problem is a special case of our setting when the number of users $\s{M}=1$. We can cast the reward matrix as a $1\times \s{N}$ matrix $\fl{P}$ whose $i^{\s{th}}$ element is denoted by $\fl{P}_i$.
In that case, the expected regret $\s{Reg}_{\s{MAB}}(\s{T})$ over a time period of $\s{T}$ steps is $\s{T}\max_{j \in [\s{N}]}\fl{P}_{j}-\sum_{t\in [\s{T}]}\fl{P}_{\rho(t)}$ where $\rho(t)$ is the item recommended at time $t$. For the MAB problem, it is known that any algorithm must suffer a regret of at least $\Omega(\sqrt{\s{N T}})$ (see Theorem 15.2 in \cite{lattimore2020bandit}).
Consider our problem with $\s{M}$ users and $\s{N}$ items when all users $u \in [\s{M}]$ are identical and furthermore, in each round, the item recommended for the $i^{\s{th}}$ user can depend on the rewards obtained for users $1,2,\dots,i-1$ in that round and the rewards obtained in the previous rounds. In that case, any algorithm that achieves a regret $\s{Reg}(\s{T})$ in the repeated setting and will achieve a regret of $\s{M}^{-1}\s{Reg}_{\s{MAB}}(\s{MT})$ in the MAB problem and vice-versa. Hence, we have that $\s{Reg}(\s{T}) = \s{M}^{-1}\s{Reg}_{\s{MAB}}(\s{MT}) =\Omega(\sqrt{(\s{NT/M})})$.

\end{proof}

\section{Small number of users $\s{M}$ (Proof of Theorem \ref{thm:main2})}\label{app:repeated2}

\begin{algorithm*}[t]
\caption{\textsc{OCTAL Algorithm for small number of users $\s{M}$}   \label{algo:phased_elim2}}
\begin{algorithmic}[1]
\small
\REQUIRE Number of users $\s{M}$, items $\s{N}$, rounds $\s{T}$, noise $\sigma^2$, bound on the entry-wise magnitude of expected rewards $||\fl{P}||_{\infty}$, incoherence $\mu$.
\STATE  Set $\ca{M}^{(1,1)}=\ca{M}^{(1,2)}=\phi$ and $\ca{B}^{(1)}=[\s{M}]$. Set $\ca{N}^{(1,1)}=\ca{N}^{(1,2)}=\phi$. Set $f=O(\log (\s{MNT}))$ and suitable constants $a,c,C,C',C_{\lambda}>0$.

\FOR{$\ell=1,2,\dots,$}

\STATE Set $\Delta_{\ell}=C'2^{-\ell}\min\Big(\|\fl{P}\|_{\infty},\frac{\sigma\sqrt{\mu}}{\log \s{N}}\Big)$. 

\FOR{$k=1,2,\dots,f$}

\FOR{each pair of non-null sets  $(\ca{M}^{(\ell,1)}\cup \ca{B}^{(\ell)},\ca{N}^{(\ell,1)}),(\ca{M}^{(\ell,2)}\cup \ca{B}^{(\ell)},\ca{N}^{(\ell,2)})\subseteq [\s{M}]\times [\s{N}]$}

\STATE Denote $(\ca{T}^{(1)},\ca{T}^{(2)})$ to be the considered pair of sets and $i\in \{1,2\}$ to be its index.

\STATE Set $d_{2,i}=\min(|\ca{T}^{(1)}|,|\ca{T}^{(2)}|)$. Set $p_{\ell,i}=C\mu^2 d_{2,i}^{-1}\log^3 d_{2,i}$ and $s_{\ell,i}=\Big\lceil \Big(\frac{c\sigma  \sqrt{\mu}}{\Delta_{\ell}\log d_{2,i}}\Big)^2 \Big\rceil$.

\STATE For each tuple of indices $(u,v)\in \ca{T}^{(1)}\times \ca{T}^{(2)}$, independently set $\delta_{uv}=1$ with probability $p_{\ell,i}$ and $\delta_{uv}=0$ with probability $1-p_{\ell,i}$.

\STATE  Denote $\Omega^{(i)}=\{(u,v)\in \ca{T}^{(1)}\times \ca{T}^{(2)} \mid \delta_{uv}=1\}$ and
 $b_{\ell,i}=\max_{u \in \ca{U}} |v \in \ca{V}\mid (u,v) \in \Omega|$. Set total number of rounds to be $m_{\ell,i}=b_{\ell,i}s_{\ell,i}$.

\ENDFOR



\STATE For $i\in\{1,2\}$, compute $\widetilde{\fl{P}}^{(\ell,i,f)}=\textsc{Estimate}(m_{\ell,i},b_{\ell,i},\Omega^{(i)},|\ca{M}^{(\ell,i)}|,|\ca{N}^{(\ell,i)}|,\lambda=C_{\lambda}\sigma\sqrt{d_{2,i}p_{\ell,i}})$. \#
\textit{Algorithm \ref{algo:estimate} recommends items to every user in $\ca{M}^{(\ell,i)}\cup \ca{B}^{(\ell)}$ for $m_{\ell,i}$ rounds. Since users in $\ca{B}^{(\ell)}$ get recommended items for $m_{\ell,1}+m_{\ell,2}$ rounds, for $i\in \{1,2\}$ we can recommend arbitrary items in $\ca{N}^{(\ell,i)}$ for users in $\ca{M}^{(\ell,i)}$ for the additional $m_{(\ell,3-i)}$ rounds.}

\ENDFOR 

\STATE Compute  $\widetilde{\fl{P}}^{(\ell,i)}=$Entrywise Median$(\{\widetilde{\fl{P}}^{(\ell,i,k)}\}_{k=1}^{f})$ for $i\in \{1,2\}$.

\STATE Set $\ca{B}^{(\ell+1)}\subseteq \ca{B}^{(\ell)}$ to be the set 
\begin{align*}
   \Big\{u \in \ca{B}^{(\ell)}\mid \left|\max(\max_{t\in \ca{N}^{(\ell,1)}}\fl{\widetilde{P}}^{(\ell,1)}_{ut},\max_{t\in \ca{N}^{(\ell,2)}}\fl{\widetilde{P}}^{(\ell,2)}_{ut})
  -\min(\min_{t\in \ca{N}^{(\ell,1)}}\fl{\widetilde{P}}^{(\ell,1)}_{ut},\min_{t\in \ca{N}^{(\ell,2)}}\fl{\widetilde{P}}^{(\ell,2)}_{ut})\right|\le 2a\Delta_{\ell}\Big\}. 
\end{align*}

\STATE Set $\s{A}_u = \max(\max_{t\in \ca{N}^{(\ell,1)}}\fl{\widetilde{P}}^{(\ell,1)}_{ut},\max_{t\in \ca{N}^{(\ell,2)}}\fl{\widetilde{P}}^{(\ell,2)}_{ut})$. Compute $\ca{T}_{u}^{(\ell+1)}=\{j \in \ca{N}^{(\ell,1)}\}\mid \fl{\widetilde{P}}^{(\ell,1)}_{uj}+\Delta_{\ell}> \s{A}_u\}\cup \{j \in \ca{N}^{(\ell,2)}\}\mid \fl{\widetilde{P}}^{(\ell,2)}_{uj}+\Delta_{\ell}> \s{A}_u\}$ for all $u \in \ca{B}^{(\ell)}\setminus \ca{B}^{(\ell+1)}$.

\STATE For $i\in \{1,2\}$, for all users $u \in \ca{M}^{(\ell,i)}$, compute $\ca{T}_{u}^{(\ell+1)}=$ $\{j \in \ca{N}^{(\ell,i)}\mid \fl{\widetilde{P}}^{(\ell,i)}_{uj}+\Delta_{\ell}> \max_{t \in \ca{N}^{(\ell,i)}}\fl{\widetilde{P}}^{(\ell,i)}_{ut}\}$.


\STATE Set $v$ to be any user in $[\s{M}]\setminus \ca{B}^{(\ell+1)}$. Set $\ca{M}^{(\ell+1,1)}=\{u \in [\s{M}]\setminus \ca{B}^{(\ell+1)} \mid \ca{T}_{u}^{(\ell+1)}\cap \ca{T}_{v}^{(\ell+1)}\neq \phi\}$. Set $\ca{M}^{(\ell+1,2)}=[\s{M}]\setminus (\ca{B}^{(\ell+1)}\cup \ca{M}^{(\ell+1,1)})$.

\STATE Compute $\ca{N}^{(\ell+1,1)}=\bigcap_{u \in \ca{M}^{(\ell+1,1)}}\ca{T}_u^{(\ell+1)}$,  $\ \ \ \ca{N}^{(\ell+1,2)}=\bigcap_{u \in \ca{M}^{(\ell+1,2)}}\ca{T}_u^{(\ell+1)}$.



\ENDFOR

\end{algorithmic}
\end{algorithm*}

As in section \ref{app:repeated}, we will consider the reward matrix $\fl{P}=\lambda \fl{\bar{u}}\fl{\bar{v}}^{\s{T}}\in \bb{R}^{\s{M}\times \s{N}}$ to be a rank-$1$ matrix with $\|\fl{\bar{u}}\|_2=\|\fl{\bar{v}}\|_2=1$.
We make the following assumptions where we use the notations presented in Section \ref{app:repeated}:
\begin{enumerate}
    \item We assume that the vector $\bar{\fl{u}}$ is $(\min(\left|\ca{C}_1\right|,\left|\ca{C}_2\right|),\mu)$-incoherent. This is stricter than the first assumption presented in Section \ref{app:repeated}.
    \item $\beta=\max\Big(\left|\frac{\fl{\bar{v}}_{j_{\max}}}{\fl{\bar{v}}_{j_{\min}}}\right|,\left|\frac{\fl{\bar{v}}_{j_{\min}}}{\fl{\bar{v}}_{j_{\max}}}\right|\Big)$ for some positive constant $\beta>0$. Note that if we represent $\fl{P}=\fl{u}\fl{v}^{\s{T}}$ where $\fl{u}$ is the user embedding and $\fl{v}$ is the item embedding, then we have $\beta=\max\Big(\left|\frac{\fl{v}_{j_{\max}}}{\fl{v}_{j_{\min}}}\right|,\left|\frac{\fl{v}_{j_{\min}}}{\fl{v}_{j_{\max}}}\right|\Big)$.
\end{enumerate}

Here, we have a slight modification of the phased algorithm (see Algorithm \ref{algo:phased_elim2}).
In each phase $\ell$, we will maintain three groups of users and items $(\ca{B}^{(\ell)},\ca{N}^{(\ell,1)} \cup \ca{N}^{(\ell,2)})$, ($\ca{M}^{(\ell,1)},\ca{N}^{(\ell,1)}$)  and ($\ca{M}^{(\ell,2)}$,$\ca{N}^{(\ell,2)}$) such that $\ca{B}^{(\ell)}\cup \ca{M}^{(\ell,1)} \cup \ca{M}^{(\ell,2)} =[\s{M}]$ and $\ca{N}^{(\ell,1)},\ca{N}^{(\ell,2)} \subseteq [\s{N}]$. Here $\ca{B}^{(\ell)},\ca{M}^{(\ell,1)},\ca{M}^{(\ell,2)}$ are initialized by $[\s{M}],\phi,\phi$ respectively and 
$\ca{N}^{(\ell,1)},\ca{N}^{(\ell,2)}$ are initialized by $[\s{N}],\phi$ respectively.

Suppose we have a sequence of tuples $(m_{\ell},\Delta_{\ell})_{\ell}$ that is going to be characterized precisely later. In every phase indexed by $\ell$, we will compute two matrices $\widetilde{\fl{P}}^{(\ell,1)}, \widetilde{\fl{P}}^{(\ell,2)} \in \bb{R}^{\s{M}\times \s{N}}$ (using $m_{\ell}$ rounds for each) that correspond to  estimates of two relevant sub-matrices of $\fl{P}$ namely $\fl{P}_{\ca{M}^{(\ell,1)}\cup \ca{B}^{(\ell)},\ca{N}^{(\ell,1)}}$ and $\fl{P}_{\ca{M}^{(\ell,2)}\cup \ca{B}^{(\ell)},\ca{N}^{(\ell,2)}}$ respectively. We will define the event $\ca{E}_1^{(\ell)}$ when the following holds true 
\begin{align*}
    &\left|\left|\widetilde{\fl{P}}_{\ca{M}^{(\ell,1)}\cup \ca{B}^{(\ell)},\ca{N}^{(\ell,1)}}-\fl{P}_{\ca{M}^{(\ell,1)}\cup \ca{B}^{(\ell)},\ca{N}^{(\ell,1)}}\right|\right|_{\infty} \le \Delta_{\ell}/2 \\
    &\left|\left|\widetilde{\fl{P}}_{\ca{M}^{(\ell,2)}\cup \ca{B}^{(\ell)},\ca{N}^{(\ell,2)}}-\fl{P}_{\ca{M}^{(\ell,2)}\cup \ca{B}^{(\ell)},\ca{N}^{(\ell,2)}}\right|\right|_{\infty} \le \Delta_{\ell}/2
\end{align*}
by using a number of rounds $m_{\ell}$ that is bounded from above by $O\Big(\max(1,\frac{\s{N}}{\s{M}^{1-(2\zeta)^{-1}}})\frac{\sigma^2\mu^3\log^2(\s{MNT})}{\Delta_{\ell}}\Big)$.

Thus, the only difference with the analysis in Section \ref{app:repeated} is that for the users in $\ca{B}^{(\ell)}$, we consider the items in $\ca{N}^{(\ell,1)}\cup \ca{N}^{(\ell,2)}$ (which can be shown to comprise the best arms for both clusters); therefore we combine the users in $\ca{B}^{(\ell)}$ with users in $\ca{M}^{(\ell,1)}$ and users in $\ca{M}^{(\ell,2)}$ separately to obtain the matrix estimates. Since users in $\ca{B}^{(\ell)}$ need to recommended more items in each phases, we can recommend arbitrary items to users in $\ca{M}^{(\ell,1)}$ ($\ca{M}^{(\ell,1)}$) from $\ca{N}^{(\ell,1)}$ ($\ca{N}^{(\ell,1)}$ respectively.) 

At each phase indexed by $\ell$, we will compute a temporary set of active items $\ca{T}_u^{(\ell+1)}$ for all users $u\in [\s{N}]$. We update $\ca{B}^{(\ell+1)}$ as given in Step 14 of Algorithm \ref{algo:phased_elim2}. We update the sets of users $\ca{M}^{(\ell+1,1)}$, $\ca{M}^{(\ell+1,2)}$ as in Section \ref{app:repeated} (see Steps 17,18 in Algorithm \ref{algo:phased_elim2}). 
For any phase indexed by $\ell$, we define the event $\ca{E}_2^{(\ell)}$ such that for every user  $u\in  \ca{M}^{(\ell,1)}$, the following holds:
\begin{align*}
   & \left|\fl{P}_{ut}-\max_{t\in [\s{N}]}\fl{P}_{ut}\right| \le 2\Delta_{\ell-1} \text{  for all } t\in\ca{N}^{(\ell,1)} , u\in \ca{M}^{(\ell,1)}\\
    &\left|\fl{P}_{ut}-\max_{t\in [\s{N}]}\fl{P}_{ut}\right| \le 2\Delta_{\ell-1} \text{  for all } t\in\ca{N}^{(\ell,2)}, u\in \ca{M}^{(\ell,2)} \\
    &\left|\fl{P}_{ut}-\max_{t\in [\s{N}]}\fl{P}_{ut}\right| \le (2a+2)\Delta_{\ell-1} \text{  for all } t\in \ca{N}^{(\ell,1}\cup \ca{N}^{(\ell,2)}, u\in \ca{B}^{(\ell)}.
\end{align*}


As before, we define the events $\ca{E}_3^{(\ell)}$ which is true when
$\ca{M}^{(\ell,1)}\subseteq \ca{C}_1$, $j_{\max}\in \ca{N}^{(\ell,1)}$ and  $\ca{M}^{(\ell,2)}\subseteq \ca{C}_2$, $j_{\min}\in \ca{N}^{(\ell,2)}$. Finally, let  $\ca{E}_4^{(\ell)}$ be the event which is true when
$\ca{M}^{(\ell,2)}\subseteq \ca{C}_1$, $j_{\min}\in \ca{N}^{(\ell,1)}$ and  $\ca{M}^{(\ell,1)}\subseteq \ca{C}_2$, $j_{\max}\in \ca{N}^{(\ell,2)}$.
By a similar analysis as in Section \ref{app:repeated}, we can show that Lemmas \ref{lem1:first}, \ref{lem1:second}, \ref{lem1:third} and \ref{lem:fourth}, the first part of \ref{coro:crucial} (note that no users are \textit{ignored} in this analysis) and  \ref{lem:incoherence} all hold true with very minor modifications wherever required.

\paragraph{Estimation of matrix $\fl{P}_{\ca{M}^{(\ell,1)}\cup \ca{B}^{(\ell)}, \ca{N}^{(\ell,1)}}$ (and similarly $\fl{P}_{\ca{M}^{(\ell,2)}\cup \ca{B}^{(\ell)}, \ca{N}^{(\ell,2)}}$) in $\ell^{\s{th}}$ iteration:} In this case, notice that the size of $\left|\ca{M}^{(\ell,1)}\cup \ca{B}^{(\ell)}\right| \ge \min(|\ca{C}_1|,|\ca{C}_2|)$ since the combination of the two sets must comprise the users of the cluster $\ca{M}^{(\ell,1)}$ corresponds to. There are two sub-cases in this setting: 

\begin{enumerate}
    \item ($\s{N}\ge |\ca{N}^{(\ell,1)}|\ge \min(|\ca{C}_1|,|\ca{C}_2|)$). In this case, as in the analysis for estimating $\fl{P}_{\ca{B}^{(\ell)},\s{N}}$, we can find an estimate $\widetilde{\fl{P}}_{\ca{M}^{(\ell,1)}\cup \ca{B}^{(\ell)},\ca{N}^{(\ell,1)}}\in \bb{R}^{\s{M}\times \s{N}}$ such that $\left|\left|\widetilde{\fl{P}}_{\ca{M}^{(\ell,1)}\cup \ca{B}^{(\ell)},\ca{N}^{(\ell,1)}}-\fl{P}_{\ca{M}^{(\ell,1)}\cup \ca{B}^{(\ell)},\ca{N}^{(\ell,1)}}\right|\right|_{\infty} \le \Delta_{\ell}/2$ for an input parameter $\Delta_{\ell}$ using 
    $O\Big(\max(1,\frac{\s{N}}{\min(|\ca{C}_1|,|\ca{C}_2|)})\frac{\sigma^2\mu^3\log^2 (\s{MNT)}}{\Delta^2_{\ell}}\Big) $ rounds with probability $1-\s{T}^{-4}$ (by using Lemma \ref{lem:min_acc} for users in $\ca{M}^{(\ell,1)}$ and items in $\ca{N}^{(\ell,1)})$. In the above analysis, we used Lemma \ref{lem:incoherence} and the fact that the $\fl{u}$ is $(\s{M}^{-1}\min(|\ca{C}_1|,|\ca{C}_2|),\mu)$-locally incoherent.

    \item ($|\ca{N}^{(\ell,1)}|\le \min(|\ca{C}_1|,|\ca{C}_2|)$) 
    Again, by a similar analysis as in Section \ref{app:repeated} and by using Lemma \ref{lem:incoherence} and the fact that the $\fl{u}$ is $(\s{M}^{-1}\min(|\ca{C}_1|,|\ca{C}_2|),\mu)$-locally incoherent, the total number of sufficient rounds is  $O\Big(\frac{\sigma^2\mu^3\log\s{N}\log^2 (\s{MNT)}}{\Delta_{\ell}^2}\Big) $ with probability $1-\s{T}^{-4}$. 

\end{enumerate}

Now, we are ready to state and prove the main theorems:

\begin{thmu}\label{thm:main2}
Consider the rank-$1$ online matrix completion problem with $\s{M}$ users, $\s{N}$ items, $\s{T}$ recommendation rounds such that $\s{M}\ge \sqrt{\s{T}}$. Denote $d_2=\min(\s{M},\s{N})$.
Let $\fl{R}^{(t)}_{u\rho_u(t)}$ be the reward in each round, defined as in \eqref{eq:obs}. Let $\sigma^2$ be the noise variance in rewards and let $\fl{P}\in \bb{R}^{\s{M}\times \s{N}}$ be the expected reward matrix with SVD decomposition $\fl{P}=\lambda\fl{\bar{u}}\fl{\bar{v}}^{\s{T}}$ such that $\fl{\bar{u}}$ is $(\min(|\ca{C}_1|,|\ca{C}_2|),\mu)$-locally incoherent, $ \|\fl{\bar{v}}\|_{\infty}\le \sqrt{\mu/\s{N}}$, $d_2=\Omega(\mu\log d_2)$  
  and $|\fl{\bar{v}}_{j_{\min}}|=\Theta(|\fl{\bar{v}}_{j_{\max}}|)$.  Then, by suitably choosing parameters $\{\Delta_{\ell}\}_{\ell}$, positive integers $\{s_{(\ell,1)},s_{(\ell,2)}\}_{\ell}$ and $1 \ge \{p_{(\ell,1)},p_{(\ell,2)}\}_{\ell} \ge 0$ as described in Algorithm \ref{algo:phased_elim}, we can ensure a regret guarantee of $\s{Reg}(\s{T})=O(\sqrt{\s{T}}\|\fl{P}\|_{\infty}+\s{J}\sqrt{\s{TV}})$ where $\s{J}=O\Big(\log \Big(\frac{1}{\sqrt{\s{VT}^{-1}}}\min\Big(\|\fl{P}\|_{\infty},\frac{\sigma\sqrt{\mu}}{\log \s{N}}\Big)\Big)\Big)$ and $\s{V}=\Big(\max(1,\frac{\s{N}}{\min(|\ca{C}_1|,|\ca{C}_2|)})\sigma^2\mu^3\log^3(\s{MNT})\Big)$.
\end{thmu}

\begin{proof}[Proof of Theorem~\ref{thm:main2}]

The proof follows on similar lines as the proof in Theorem \ref{thm:main1}. The only change is in $m_{\ell}$ which is reflected in the result (more specifically in $\s{V}$). Also, note that in Algorithm \ref{algo:phased_elim2}, no users are \textit{ignored} and so we do not need to bound the regret for ignored users.

\end{proof}


\end{document}